\def\eqref#1{equation~\ref{#1}}
\def\1{\bm{1}}
\DeclareMathAlphabet{\mathsfit}{\encodingdefault}{\sfdefault}{m}{sl}
\SetMathAlphabet{\mathsfit}{bold}{\encodingdefault}{\sfdefault}{bx}{n}
\DeclareMathOperator*{\argmax}{arg\,max}
\DeclareMathOperator*{\argmin}{arg\,min}
\definecolor{mydarkblue}{rgb}{0,0.08,0.45}
\DeclareMathOperator*{\minimize}{minimize\quad}
\def\*#1{\mathbf{#1}}
\newcommand{\para}[1]{\noindent \textbf{#1}}
\newcommand{\PX}{P_{\mathbf{X}}}
\newcommand{\UX}{U_{\mathbf{X}}}
\newcommand{\QX}{Q_{\mathbf{X}}} 
\newcommand{\Qfamily}{\mathcal{Q}}
\newcommand{\FPRr}{\mathrm{FPR}}
\newcommand{\FNRr}{\mathrm{FNR}}
\newcommand{\UXb}{U_{\mathrm{mix}}}
\newcommand{\Sin}{S_{\mathrm{in}}}
\newcommand{\Sau}{S_{\mathrm{au}}}
\newcommand{\uom}{u_{\mathrm{om}}}
\begin{document}

\title{ATOM: Robustifying Out-of-distribution Detection Using Outlier Mining}
\toctitle{ATOM: Robustifying Out-of-distribution Detection Using Outlier Mining}

\author{Jiefeng Chen(\Letter)\inst{1} \and
Yixuan Li \inst{1} \and
Xi Wu\inst{2} \and
Yingyu Liang\inst{1} \and
Somesh Jha\inst{1}
}

\authorrunning{J. Chen et al.}
% First names are abbreviated in the running head.
% If there are more than two authors, 'et al.' is used.

\tocauthor{Jiefeng~Chen,Yixuan~Li,Xi~Wu,Yingyu~Liang,Somesh~Jha}

\institute{
Department of Computer Sciences \\
University of Wisconsin-Madison \\
1210 W. Dayton Street Madison, WI, US \\
\email{$\{$jiefeng,sharonli,yliang,jha$\}$@cs.wisc.edu}
\and
Google \\
\email{wuxi@google.com}}

\maketitle              % typeset the header of the contribution

\setcounter{footnote}{0}

\begin{abstract}

Detecting out-of-distribution (OOD) inputs is critical for safely deploying deep learning models in an open-world setting. However, existing OOD detection solutions can be brittle in the open world, facing various types of adversarial OOD inputs. While methods leveraging auxiliary OOD data have emerged, our analysis on illuminative examples reveals a key insight that the majority of auxiliary OOD examples may not meaningfully improve or even hurt the decision boundary of the OOD detector, which is also observed in empirical results on real data. In this paper, we provide a theoretically motivated method, {\em Adversarial Training with informative Outlier Mining} (ATOM), which improves the robustness of OOD detection. We show that, by mining informative auxiliary OOD data, one can significantly improve OOD detection performance, and somewhat surprisingly, generalize to unseen adversarial attacks. ATOM achieves \textbf{state-of-the-art} performance under a broad family of classic and adversarial OOD evaluation tasks. For example, on the CIFAR-10 in-distribution dataset, ATOM reduces the FPR (at TPR 95\%) by up to 57.99\% under adversarial OOD inputs, surpassing the previous best baseline by a large margin. 

\keywords{Out-of-distribution detection  \and  Outlier Mining \and Robustness.}

\end{abstract}

\section{Introduction}
\label{sec:intro}
Out-of-distribution (OOD) detection has become an indispensable part of building reliable open-world machine learning models~\cite{bendale2015towards}. An OOD detector determines whether an input is from the same distribution as the training data, or different distribution. As of recently a plethora of exciting literature has emerged to combat the problem of OOD detection~\cite{hein2019relu,GODIN,huang2021mos,lakshminarayanan2017simple,lee2018simple,liang2018enhancing,lin2021mood,liu2020energy,mohseni2020self}.

Despite the promise, previous methods primarily focused on clean OOD data, while largely underlooking the robustness aspect of OOD detection. Concerningly, recent works have shown the brittleness of OOD detection methods under adversarial perturbations~\cite{bitterwolf2020provable,hein2019relu,sehwag2019analyzing}. As illustrated in Figure~\ref{fig:adversarial-ood-example}, an OOD image (\emph{e.g.}, {mailbox}) can be perturbed to be misclassified by the OOD detector as in-distribution (traffic sign data). Failing to detect  such an \emph{adversarial OOD example\footnote{Adversarial OOD examples are constructed w.r.t the OOD detector, which is different from the standard notion of adversarial examples (constructed w.r.t the classification model).}} can be consequential in safety-critical applications such as autonomous driving~\cite{filos2020can}. Empirically on CIFAR-10, our analysis reveals that the false positive rate (FPR) of a competitive method Outlier Exposure~\cite{hendrycks2018deep} can increase from 3.66\% to 99.94\% under adversarial attack.

Motivated by this, we make an important step towards the {robust OOD detection} problem, and propose a novel training framework, {\em  \textbf{A}dversarial \textbf{T}raining with informative \textbf{O}utlier \textbf{M}ining} (ATOM). Our key idea is to \emph{selectively} utilize auxiliary outlier data for estimating a tight decision boundary between ID and OOD data, which leads to robust OOD detection performance. 
While recent methods~\cite{hein2019relu,hendrycks2018deep,meinke2019towards,mohseni2020self} have leveraged auxiliary OOD data, we show that \emph{randomly} selecting outlier samples for training yields a large portion of uninformative samples, which do not meaningfully improve the decision boundary between ID and OOD data (see Figure~\ref{fig:toy-illustration-example}). Our work demonstrates that by  mining low OOD score data for training, one can significantly improve the robustness of an OOD detector, and somewhat surprisingly, generalize to unseen adversarial attacks. 

\begin{figure*}[t!]
	\centering
		\includegraphics[width=0.88\linewidth]{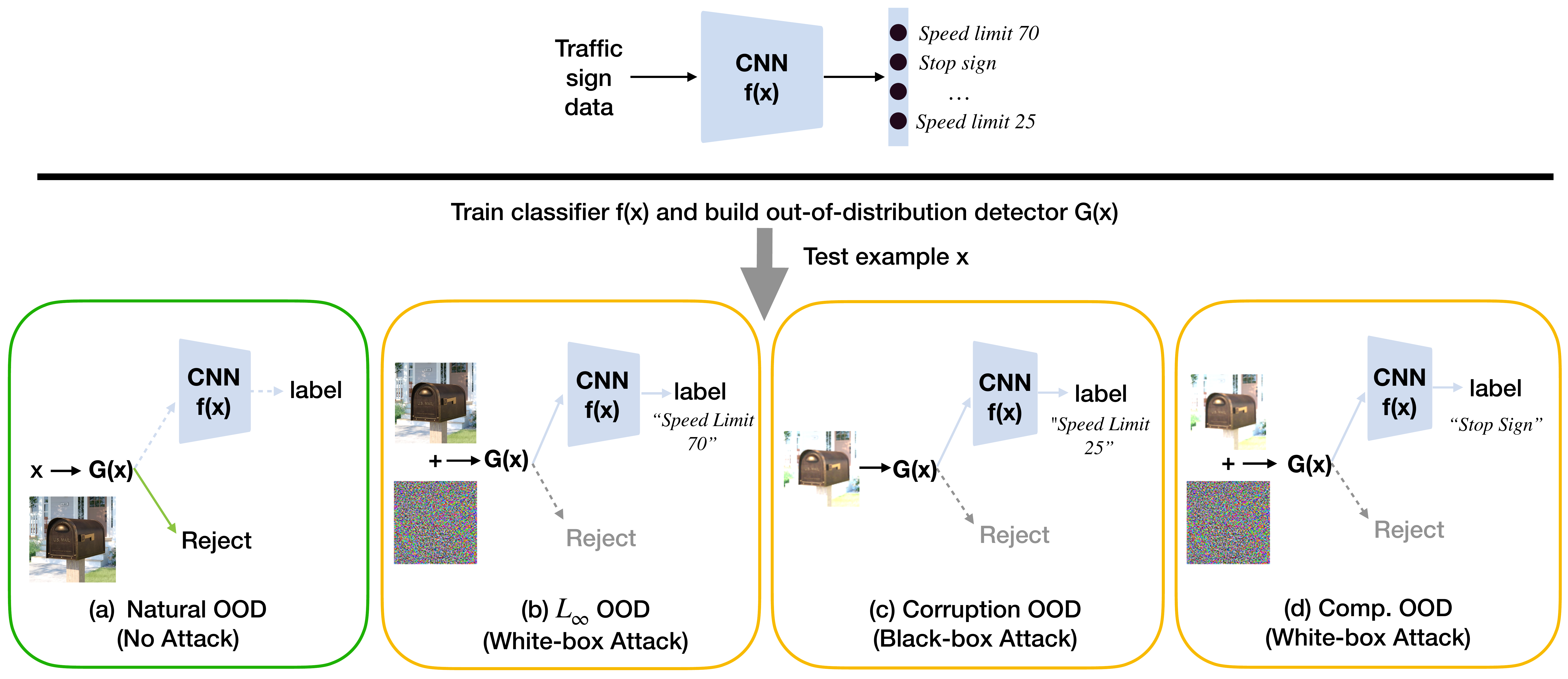}
	\caption{\small \textbf{Robust out-of-distribution detection}. When deploying an image classification system (OOD detector $G(\*x)$ + image classifier $f(\*x)$) in an open world, there can be multiple types of OOD examples. We consider a broad family of  OOD inputs, including (a) Natural OOD, (b) $L_\infty$ OOD, (c) corruption OOD, and (d) Compositional OOD. A detailed description of these OOD inputs can be found in Section~\ref{sec:setup}. In (b-d), a perturbed OOD input (e.g., a perturbed {mailbox} image) can mislead the OOD detector to classify it as an in-distribution sample. This can trigger the downstream image classifier $f(\*x)$ to predict it as one of the in-distribution classes (e.g.,  {speed limit 70}). Through {\em adversarial training with informative outlier mining} (ATOM), our method can robustify the decision boundary of OOD detector $G(\*x)$, which leads to improved performance across all types of OOD inputs. Solid lines are actual computation flow.
	}
	\label{fig:adversarial-ood-example}
\end{figure*}

We extensively evaluate ATOM on common OOD detection benchmarks, as well as a suite of {adversarial OOD tasks}, as illustrated in Figure~\ref{fig:adversarial-ood-example}. ATOM achieves state-of-the-art performance, significantly outperforming competitive methods using standard training on random outliers~\cite{hendrycks2018deep,meinke2019towards,mohseni2020self}, or using adversarial training on random outlier data~\cite{hein2019relu}. On the classic OOD evaluation task (clean OOD data), ATOM achieves comparable and often better performance than current state-of-the-art methods. On $L_\infty$ OOD evaluation task, ATOM outperforms the best baseline ACET~\cite{hein2019relu} by a large margin (e.g.  {\bf 53.9\%} false positive rate deduction on CIFAR-10). Moreover, our ablation study underlines the importance of having both adversarial training and outlier mining (ATOM) for achieving robust OOD detection. 

Lastly, we provide theoretical analysis for ATOM, characterizing how outlier mining can better shape the decision boundary of the OOD detector.
While hard negative mining has been explored
in different domains of learning, e.g., object detection, deep metric learning~\cite{felzenszwalb2009object,gidaris2015object,shrivastava2016training}, the vast literature
of OOD detection has not explored this idea. Moreover, most uses
of hard negative mining are on a heuristic basis, but in this paper, we derive precise formal guarantees with insights. Our \textbf{key contributions} are summarized as follows:
\begin{itemize}

    \item We propose a novel training framework, adversarial training with outlier mining (ATOM), which facilitates efficient use of auxiliary outlier data to regularize the model for robust OOD detection. 
    \item We perform extensive analysis and comparison with a diverse collection of OOD detection methods using: (1) pre-trained models, (2) models trained on randomly sampled outliers, (3) adversarial training. ATOM establishes \textbf{state-of-the-art} performance under a broad family of clean and adversarial OOD evaluation tasks. 
      \item  We contribute theoretical analysis formalizing the intuition of mining informative outliers for improving the robustness of OOD detection.
    \item Lastly, we provide a unified evaluation framework that allows future research examining the robustness of OOD detection algorithms under a broad family of OOD inputs. Our code and data are released to facilitate future research on robust OOD detection: \url{https://github.com/jfc43/informative-outlier-mining}.
\end{itemize}

\section{Preliminaries}
\label{sec:problem-statement}
We consider the setting of multi-class classification. We consider a training dataset $\mathcal{D}_{\text{in}}^{\text{train}}$ drawn i.i.d.\ from a data distribution $P_{\bm{X},Y}$, where $\bm{X}$ is the sample space and ${Y} = \{1,2,\cdots,K \}$ is the set of labels. In addition, we have an auxiliary outlier data $\mathcal{D}_{\text{out}}^{\text{auxiliary}}$ from distribution $U_\*X$. The use of auxiliary outliers helps regularize the model for OOD detection, as shown in several recent works \cite{hein2019relu,lee2017training,liu2020energy,meinke2019towards,mohseni2020self}.

\para{Robust out-of-distribution detection.} 
The goal is to learn a detector $G:\*x \to \{-1, 1\}$, which outputs $1$ for an in-distribution example $\*x$ and output $-1$ for a clean or perturbed OOD example $\*x$. Formally, let $\Omega(\*x)$ be a set of small perturbations on an OOD example $\*x$. The detector is evaluated on $\*x$ from $\PX$ and on the worst-case input inside $\Omega(\*x)$ for an OOD example $\*x$ from $\QX$. The {false negative rate} (FNR) and {false positive rate} (FPR) are defined as:
\begin{align}
\textrm{FNR}(G) = \mathbb{E}_{\*x\sim \PX} \mathbb{I}[G(\*x) = -1], \quad
\textrm{FPR}(G; \QX, \Omega)  = \mathbb{E}_{\*x\sim \QX} \max_{\delta \in \Omega(\*x)}  \mathbb{I}[G(\*x+\delta)=1]. \nonumber
\end{align}

\para{Remark.} Note that test-time OOD distribution $Q_{\*{X}}$ is unknown, which can be different from $U_\*{X}$. The difference between the auxiliary data $\UX$ and test OOD data $\QX$ raises the fundamental question of how to effectively leverage $\mathcal{D}_{\text{out}}^{\text{auxiliary}}$ for improving learning the decision boundary between in- vs. OOD data. For terminology clarity, we refer to training OOD examples as \emph{outliers}, and exclusively use \emph{OOD data} to refer to test-time anomalous inputs.

\section{Method}
\label{sec:method}
In this section, we introduce {\em Adversarial Training with informative Outlier Mining} (ATOM). We first present our method overview, and then describe details of the training objective with informative outlier mining.

\para{Method overview: a conceptual example.} We use the terminology \emph{outlier mining} to denote the process of selecting informative outlier training samples from the pool of auxiliary outlier data. We illustrate our idea with a toy example in Figure~\ref{fig:toy-illustration-example}, where in-distribution data consists of class-conditional Gaussians. Outlier training data is sampled from a uniform distribution from outside the support of in-distribution. Without outlier mining (\emph{left}), we will almost sample those ``easy" outliers and the decision boundary of the OOD detector learned can be loose. In contrast, with outlier mining (\emph{right}), selective outliers close to the decision boundary between ID and OOD data, which improves OOD detection.  This is particularly important for robust OOD detection where the boundary needs to have a margin from the OOD data so that even adversarial perturbation (red color) cannot move the OOD data points across the boundary. We proceed with describing the training mechanism that achieves our novel conceptual idea and will provide formal theoretical guarantees in Section~\ref{sec:analysis}.

\begin{figure}[t]
	\centering
		\includegraphics[width=0.95\linewidth]{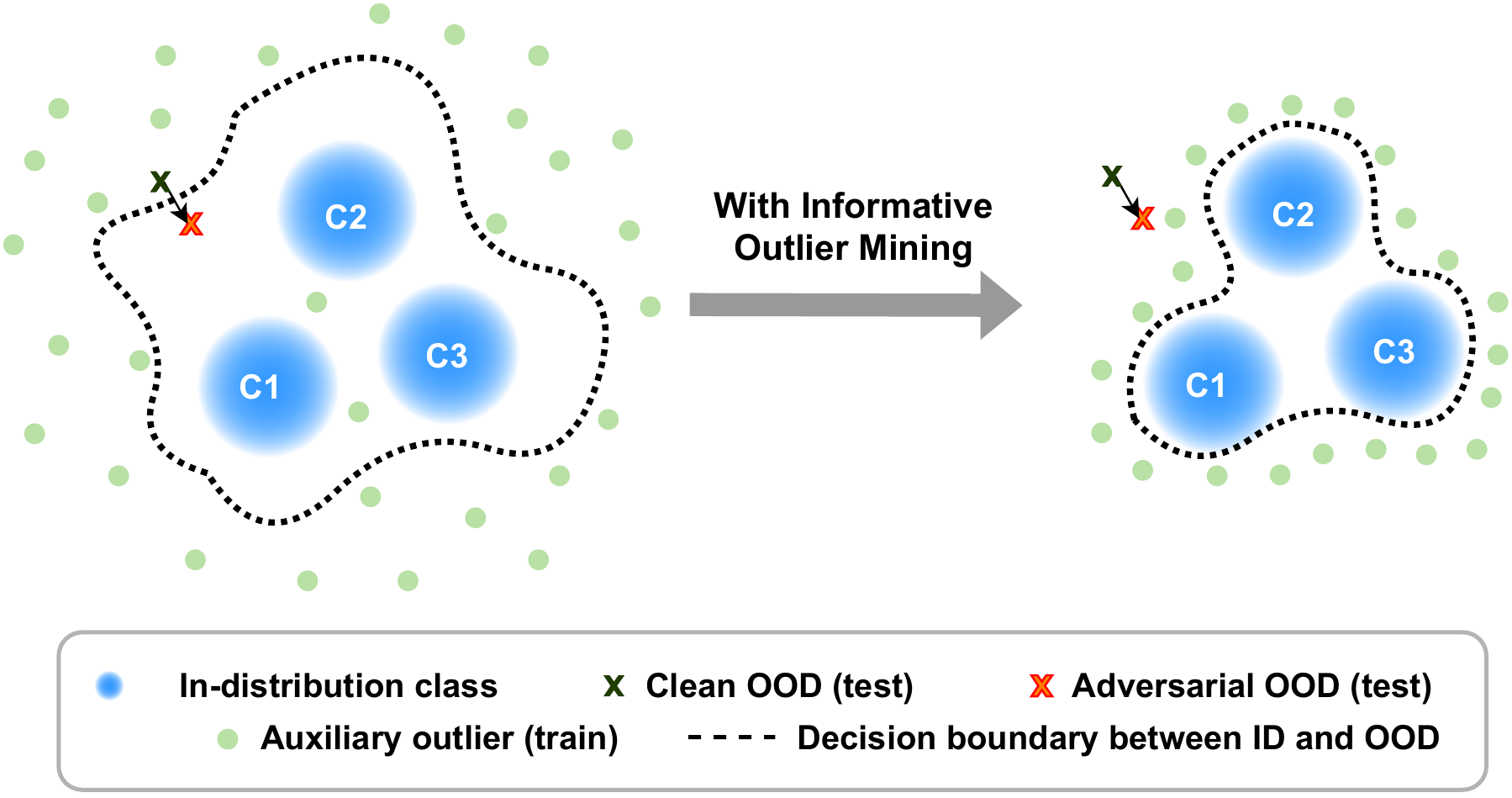}
	\caption{\small A toy example in 2D space for illustration of informative outlier mining. With informative outlier mining, we can tighten the decision boundary and build a robust OOD detector.  
	}
	\label{fig:toy-illustration-example}

\end{figure}

\subsection{ATOM: Adversarial Training with Informative Outlier Mining}

\para{Training objective.} The classification  involves using a mixture of ID data and outlier samples. Specifically,  we consider a $(K+1)$-way classifier network $f$, where the $(K+1)$-th class label indicates out-of-distribution class. Denote by $F_\theta(\*x)$  the softmax output of $f$ on $\*x$.
The robust training objective is given by
\begin{align}
\label{obj:adv}
\minimize_\theta \mathbb{E}_{(\*x,y)\sim \mathcal{D}_{\text{in}}^{\text{train}}} [\ell(\*x, y; F_\theta)] 
 + \lambda \cdot \mathbb{E}_{\*x \sim \mathcal{D}_{\text{out}}^{\text{train}}} \max_{\*x' \in \Omega_{\infty, \epsilon}(\*x)} [\ell(\*x', K+1; F_\theta)]
\end{align}
where $\ell$ is the cross entropy loss, and $\mathcal{D}_{\text{out}}^{\text{train}}$ is the OOD training dataset. We use Projected Gradient Descent (PGD) \cite{madry2017towards} to solve the inner max of the objective, and apply it to half of a minibatch while keeping the other half clean to ensure performance on both clean and perturbed data.

Once trained, the OOD detector $G(\*x)$ can be constructed by:
\begin{align}
\label{obj:detector}
    G(\*x) = 
    \begin{cases} 
        -1 & \quad \text{if } F(\*x)_{K+1} \ge \gamma, \\
        1 & \quad \text{if } F(\*x)_{K+1} < \gamma,
    \end{cases}
\end{align}
where $\gamma$ is the threshold, and in practice can be chosen on the in-distribution data so that a high fraction of the test examples are correctly classified by $G$. We call $F(\*x)_{K+1}$  the {\em OOD score} of $\*x$. 
For an input labeled as in-distribution by $G$, one can obtain its semantic label using $\hat{F}(\*x)$: 
\begin{align}
\label{obj:classifier}
    \hat{F}(\*x) = \argmax_{y \in \{1,2,\cdots, K\}} F(\*x)_y
\end{align}

\para{Informative outlier mining.} We propose to adaptively choose OOD training examples where the detector is uncertain about. Specifically,
during each training epoch, we randomly sample $N$ data points from the auxiliary OOD dataset $\mathcal{D}_{\text{out}}^{\text{auxiliary}}$, and  use the current model to infer the OOD scores\footnote{Since the inference stage can be fully parallel, outlier mining can be applied with relatively low overhead.}. Next, we sort the data points according to the OOD scores and select a subset of $n<N$ data points, starting with the $qN^\text{th}$ data in the sorted list. We then use the selected samples as  OOD training data $\mathcal{D}_{\text{out}}^{\text{train}}$ for the next epoch of training.  Intuitively, $q$ determines the  {\em informativeness} of the sampled points w.r.t the OOD detector. The larger $q$ is, the less informative those sampled examples become. Note that informative outlier mining is performed on (non-adversarial) auxiliary OOD data. Selected examples are then used in the robust training objective (\ref{obj:adv}).

\begin{algorithm}[h] 
    \SetAlgoLined
		\KwIn{$\mathcal{D}_{\text{in}}^{\text{train}}$,
		$\mathcal{D}_{\text{out}}^{\text{auxiliary}}$, $F_\theta$, $m$, $N$, $n$, $q$}
		\KwOut{$\hat{F}$, $G$} 
		\For{$t=1, 2, \cdots, m$}{
		     Randomly sample $N$ data points from $\mathcal{D}_{\text{out}}^{\text{auxiliary}}$ to get a candidate set $\mathcal{S}$\;
             Compute OOD scores on $\mathcal{S}$ using current model  $F_\theta$ to get set $V=\{ F(\*x)_{K+1} \mid \*x \in \mathcal{S} \}$. Sort scores in $V$ from the lowest to the highest\;
             $\mathcal{D}_{\text{out}}^{\text{train}} \leftarrow V[qN:qN + n]$  \tcc*{$q\in [0,1-n/N]$}  
             Train $F_\theta$ for one epoch using the training objective of (\ref{obj:adv})\;
		}
        Build $G$ and $\hat{F}$ using (\ref{obj:detector}) and (\ref{obj:classifier}) respectively\; 
        \caption{ATOM: Adv. Training with informative Outlier Mining}
        \label{alg:training-with-informative-outlier-mining}
\end{algorithm}

% \begin{algorithm}[h] 
% 	\caption{ATOM: Adversarial Training with informative Outlier Mining}
% 	\label{alg:training-with-informative-outlier-mining}
% 	\begin{algorithmic}
% 		\INPUT
% 		$\mathcal{D}_{\text{in}}^{\text{train}}$,
% 		$\mathcal{D}_{\text{out}}^{\text{auxiliary}}$, $F_\theta$, $m$, $N$, $n$, $q$
% 		\OUTPUT $\hat{F}$, $G$
% 		\FOR{$t=1, 2, \cdots, m$}
%         \STATE Randomly sample $N$ data points from $\mathcal{D}_{\text{out}}^{\text{auxiliary}}$ to get a candidate set $\mathcal{S}$.
%         \STATE Compute OOD scores on $\mathcal{S}$ using current model  $F_\theta$ to get set $V=\{ F(\*x)_{K+1} \mid \*x \in \mathcal{S} \}$. 
%         \STATE Sort scores in $V$ from the lowest to the highest. 
%         \STATE $\mathcal{D}_{\text{out}}^{\text{train}} \leftarrow V[qN:qN + n]$ \quad \quad $\triangleright$ \COMMENT{$q\in [0,1-n/N]$} 
%         \STATE Train $F_\theta$ for one epoch using the training objective of (\ref{obj:adv}). 
%         \ENDFOR
%         \STATE Build $G$ and $\hat{F}$ using (\ref{obj:detector}) and (\ref{obj:classifier}) respectively. 
% 	\end{algorithmic}
% \end{algorithm}

We provide the complete training algorithm using informative outlier mining in Algorithm~\ref{alg:training-with-informative-outlier-mining}. Importantly, the use of informative outlier mining highlights the key difference between ATOM and previous work using \emph{\textbf{randomly}} sampled outliers~\cite{hein2019relu,hendrycks2018deep,meinke2019towards,mohseni2020self}. 

\section{Experiments}
\label{sec:experiment}
In this section, we describe our experimental setup and  show  that ATOM can substantially improve OOD detection performance on both clean OOD data and adversarially perturbed OOD inputs. We also conducted extensive ablation analysis to explore different aspects of our algorithm.

\subsection{Setup}
\label{sec:setup}

\para{In-distribution datasets.} We use CIFAR-10, and CIFAR-100~\cite{krizhevsky2009learning} datasets as in-distribution datasets. We also show results on SVHN in Appendix~\ref{sec:svhn-results}. 

\para{Auxiliary OOD datasets.} By default, we use 80 Million Tiny Images (TinyImages) \cite{torralba200880} as $\mathcal{D}_{\text{out}}^{\text{auxiliary}}$, which is a common setting in prior works. We also use ImageNet-RC, a variant of ImageNet~\cite{chrabaszcz2017downsampled} as an alternative auxiliary OOD dataset.

\para{Out-of-distribution datasets.} For OOD test dataset, we follow common setup in literature and use six diverse datasets: \texttt{SVHN}, \texttt{Textures}~\cite{cimpoi14describing}, \texttt{Places365}~\cite{zhou2017places}, \texttt{LSUN (crop)}, \texttt{LSUN (resize)}~\cite{yu2015lsun}, and \texttt{iSUN}~\cite{xu2015turkergaze}. 

\para{Hyperparameters.}  The hyperparameter $q$ is chosen on a separate validation set from TinyImages, which is different from test-time OOD data (see Appendix~\ref{appendix:choose-best-q}). Based on the validation, we set $q=0.125$ for CIFAR-10 and $q=0.5$ for CIFAR-100. For all experiments, we set $\lambda=1$. For CIFAR-10 and CIFAR-100, we set $N=400,000$, and $n=100,000$. More  details about experimental set up are in  Appendix~\ref{sec:more-experiment-settings}.

\para{Robust OOD evaluation tasks. } We consider the following family of OOD inputs, for which we provide details and visualizations in Appendix~\ref{sec:example-of-four-types-of-OOD}:

\begin{itemize}
    \item {\bf Natural OOD}: This is equivalent to the classic OOD evaluation with clean OOD input $\*x$, and $\Omega=\O$. 
    \item {\bf $L_\infty$ attacked OOD (white-box)}: We consider small $L_\infty$-norm bounded perturbations on an OOD input  $\*x$~\cite{athalye2018obfuscated,madry2017towards}, which induce the model to produce a high confidence score (or a low OOD score) for $\*x$. We denote the adversarial perturbations by $\Omega_{\infty, \epsilon}(\*x)$, where $\epsilon$ is the adversarial budget. We provide attack algorithms for all eight OOD detection methods in Appendix~\ref{sec:attack-algorithms}.
    \item {\bf Corruption attacked OOD (black-box)}: We consider a more realistic type of attack based on common corruptions~\cite{hendrycks2019benchmarking}, which could appear naturally in the  physical world. For each OOD image, we generate 75 corrupted images (15 corruption types $\times$ 5 severity levels), and then select the one with the lowest OOD score. 
    \item {\bf Compositionally attacked OOD (white-box)}: Lastly, we consider applying $L_\infty$-norm bounded attack and corruption attack jointly to an OOD input $\*x$, 
    as considered in \cite{laidlaw2019functional}.
\end{itemize}

\para{Evaluation metrics.} We measure the following metrics: the false positive rate (FPR) at 5\% false negative rate (FNR), and the area under the receiver operating characteristic curve (AUROC).

\subsection{Results}
\label{sec:results}

\begin{table*}[t!]
    \caption[]{\footnotesize Comparison with competitive OOD detection methods. We use DenseNet as network architecture for all methods. We evaluate on four types of OOD inputs: (1) natural OOD, (2) corruption attacked OOD, (3) $L_\infty$ attacked OOD, and (4) compositionally attacked OOD inputs. The description of these OOD inputs can be found in Section~\ref{sec:setup}. $\uparrow$ indicates larger value is better, and $\downarrow$ indicates lower value is better. All values are percentages and are averaged over six different OOD test datasets described in Section \ref{sec:setup}. 
	\textbf{Bold} numbers are superior results. Results on additional in-distribution dataset SVHN are provided in Appendix~\ref{sec:svhn-results}. Results on a different architecture, WideResNet, are provided in Appendix~\ref{sec:ablation-network-arch}.}
	\begin{adjustbox}{width=\columnwidth,center}
		\begin{tabular}{l|l|cc|cc|cc|cc}
			\toprule
			\multirow{4}{0.12\linewidth}{$\mathcal{D}_{\text{in}}^{\text{test}}$} &  \multirow{4}{0.06\linewidth}{\textbf{Method}}  &\bf{FPR}  & {\bf AUROC}  & {\bf FPR} & {\bf AUROC}  & {\bf FPR} & {\bf AUROC} & {\bf FPR} & {\bf AUROC}  \\
			& & $\textbf{(5\% FNR)}$ & $\textbf{}$ & {\bf (5\% FNR)} & $\textbf{}$ & $\textbf{(5\% FNR)}$ & {\bf } & $\textbf{(5\% FNR)}$ & {\bf } \\
			& & $\downarrow$ & $\uparrow$ & $\downarrow$ & $\uparrow$ & $\downarrow$ & $\uparrow$ & $\downarrow$ & $\uparrow$ \\ \cline{3-10}
			& & \multicolumn{2}{c|}{\textbf{Natural OOD}} & \multicolumn{2}{c|}{\textbf{Corruption OOD}} & \multicolumn{2}{c|}{\textbf{$L_\infty$ OOD }} & \multicolumn{2}{c}{\textbf{Comp. OOD}} \\  \hline 
			\multirow{10}{0.12\linewidth}{{{\bf CIFAR-10}}}  
			& MSP~\cite{hendrycks2016baseline}  & 50.54 & 91.79 & 100.00 & 58.35 & 100.00 & 13.82 & 100.00 & 13.67 \\
			& ODIN~\cite{liang2018enhancing}  & 21.65 & 94.66 & 99.37 & 51.44 & 99.99 & 0.18 & 100.00 & 0.01 \\
			& Mahalanobis~\cite{lee2018simple} & 26.95 & 90.30 & 91.92 & 43.94 & 95.07 & 12.47 & 99.88 & 1.58 \\
			& SOFL~\cite{mohseni2020self}  & 2.78 & 99.04 & 62.07 & 88.65 & 99.98 & 1.01  & 100.00 & 0.76 \\
			& OE~\cite{hendrycks2018deep}  & 3.66 & 98.82 & 56.25 & 90.66 & 99.94 & 0.34 & 99.99 & 0.16 \\ 
			& ACET~\cite{hein2019relu} & 12.28 & 97.67 & 66.93 & 88.43 & 74.45 & 78.05 & 96.88 & 53.71 \\ 
			& CCU~\cite{meinke2019towards} & 3.39 & 98.92 & 56.76 & 89.38 & 99.91 & 0.35 & 99.97 & 0.21 \\ 
			& ROWL~\cite{sehwag2019analyzing}  & 25.03 & 86.96 & 94.34 & 52.31 & 99.98 & 49.49 & 100.00 & 49.48 \\ 
			& \textbf{ATOM} (ours) & {\bf 1.69} & {\bf 99.20} & {\bf 25.26} & {\bf 95.29} & {\bf 20.55} & {\bf 88.94} & {\bf 38.89} & {\bf 86.71} \\  \hline 
			\multirow{10}{0.12\linewidth}{{\bf CIFAR-100}} 
			& MSP~\cite{hendrycks2016baseline} & 78.05 & 76.11 & 100.00 & 30.04 & 100.00 & 2.25 & 100.00 & 2.06 \\
			& ODIN~\cite{liang2018enhancing}  & 56.77 & 83.62 & 100.00 & 36.95 & 100.00 & 0.14 & 100.00 & 0.00 \\
			& Mahalanobis~\cite{lee2018simple}  & 42.63 & 87.86 & 95.92 & 42.96 & 95.44 & 15.87 & 99.86 & 2.08 \\
			& SOFL~\cite{mohseni2020self} & 43.36 & 91.21 & 99.93 & 45.23 & 100.00 & 0.35  & 100.00 & 0.27 \\
			& OE~\cite{hendrycks2018deep}  & 49.21 & 88.05 & 99.96 & 45.01 & 100.00 & 0.94 & 100.00 & 0.59 \\ 
			& ACET~\cite{hein2019relu}  & 50.93 & 89.29 & 99.53 & 54.19 & 76.27 & 59.45 & 99.71 & 38.63 \\ 
			& CCU~\cite{meinke2019towards}  & 43.04 & 90.95 & 99.90 & 48.34 & 100.00 & 0.75 & 100.00 & 0.48 \\ 
			& ROWL~\cite{sehwag2019analyzing}  & 93.35 & 53.02 & 100.00 & 49.69 & 100.00 & 49.69 & 100.00 & 49.69 \\ 
			& \textbf{ATOM} (ours)  & {\bf 32.30} & {\bf 93.06} & {\bf 93.15} & {\bf 71.96} & {\bf 38.72} & {\bf 88.03} & {\bf 93.44} & {\bf 69.15} \\ 
			\bottomrule
		\end{tabular}
	\end{adjustbox}
	\label{tab:main-results}
\end{table*}

\para{ATOM vs. existing methods.} We show in Table~\ref{tab:main-results} that ATOM outperforms competitive OOD detection methods on both classic and adversarial OOD evaluation tasks. There are several salient observations. \textbf{First}, on classic OOD evaluation task (clean OOD data), ATOM achieves comparable or often even better performance than the current state-of-the-art methods. \textbf{Second}, on the existing adversarial OOD evaluation task, $L_\infty$ OOD, ATOM outperforms current state-of-the-art method ACET~\cite{hein2019relu} by a large margin (e.g. on CIFAR-10, our method outperforms ACET by {\bf 53.9\%} measured by FPR). \textbf{Third}, while ACET is somewhat brittle under the new Corruption OOD evaluation task, our method can generalize surprisingly well to the unknown corruption attacked OOD inputs, outperforming the best baseline by a large margin (e.g. on CIFAR-10, by up to {\bf 30.99\%} measured by FPR). \textbf{Finally}, while almost every method fails under the hardest compositional OOD evaluation task, our method still achieves impressive results (e.g. on CIFAR-10, reduces the FPR  by {\bf 57.99\%}). The performance is noteworthy since our method is not trained explicitly on corrupted OOD inputs. Our training method leads to improved OOD detection while preserving  classification performance on in-distribution data (see Appendix~\ref{app:in-distribution-acc}). Consistent performance improvement is observed on \emph{alternative in-distribution datasets} (SVHN and CIFAR-100), \emph{alternative network architecture} (WideResNet, Appendix~\ref{sec:ablation-network-arch}), and with \emph{alternative auxiliary dataset} (ImageNet-RC, see Appendix~\ref{sec:ablation-auxiliary-datasets}).

\noindent \para{Adversarial training alone is not able to achieve strong OOD robustness.} We perform an ablation study that isolates the effect of outlier mining. 
In particular, we use the same training objective as in Equation~(\ref{obj:adv}), but with randomly sampled outliers. The results in Table~\ref{tab:atom-variant-results} show AT (no outlier mining) is in general less robust. For example,  under $L_\infty$ OOD, AT displays 23.76\% and 31.61\% reduction in FPR on CIFAR-10 and CIFAR-100 respectively. This validates the importance of outlier mining for robust OOD detection, which provably improves the decision boundary as we will show in Section~\ref{sec:analysis}.

\para{Effect of adversarial training.}  We perform an ablation study that isolates the effect of adversarial training. In particular, we consider the following objective without adversarial training: 
\begin{align}
\label{obj:nat}
\minimize_\theta \mathbb{E}_{(\*x,y)\sim \mathcal{D}_{\text{in}}^{\text{train}}} [\ell(\*x, y; \hat{F}_\theta)] + \lambda \cdot \mathbb{E}_{\*x \sim \mathcal{D}_{\text{out}}^{\text{train}}} [\ell(\*x, K+1; \hat{F}_\theta)],
\end{align}
which we name \emph{Natural Training with informative Outlier Mining} (NTOM).
In Table~\ref{tab:atom-variant-results}, we show that NTOM achieves comparable performance as ATOM on natural OOD and corruption OOD. However, NTOM is less robust under $L_\infty$ OOD (with 79.35\% reduction in FPR on CIFAR-10) and compositional OOD inputs. This underlies the importance of having both adversarial training and outlier mining (ATOM) for overall good performance, particularly for robust OOD evaluation tasks.

\begin{table*}[t!]
    \caption[]{\small \textbf{Ablation} on ATOM training objective. We use DenseNet as network architecture. $\uparrow$ indicates larger value is better, and $\downarrow$ indicates lower value is better. All values are percentages and are averaged over six different OOD test datasets described in Section \ref{sec:setup}. }
	\begin{adjustbox}{width=\columnwidth,center}
		\begin{tabular}{l|l|cc|cc|cc|cc}
			\toprule
			\multirow{4}{0.12\linewidth}{$\mathcal{D}_{\text{in}}^{\text{test}}$} &  \multirow{4}{0.06\linewidth}{\textbf{Method}}  &{FPR}  & {AUROC}  & { FPR} & {AUROC}  & { FPR} & { AUROC} & { FPR} & {AUROC}  \\
			& & ${(5\% FNR)}$ & $\textbf{}$ & { (5\% FNR)} & $\textbf{}$ & ${(5\% FNR)}$ & { } & ${(5\% FNR)}$ & { } \\
			& & $\downarrow$ & $\uparrow$ & $\downarrow$ & $\uparrow$ & $\downarrow$ & $\uparrow$ & $\downarrow$ & $\uparrow$ \\ \cline{3-10}
			& & \multicolumn{2}{c|}{\textbf{Natural OOD}} & \multicolumn{2}{c|}{\textbf{Corruption OOD}} & \multicolumn{2}{c|}{\textbf{$L_\infty$ OOD }} & \multicolumn{2}{c}{\textbf{Comp. OOD}} \\  \hline 
			\multirow{4}{0.12\linewidth}{{{\bf CIFAR-10}}}  
			& AT (no outlier mining) & 2.65 & 99.11 & 42.28 & 91.94 & 44.31 & 68.64 & 65.17 & 72.62 \\
 			& NTOM (no adversarial training) & 1.87 & {\bf 99.28} & 30.58 & 94.67 & 99.90 & 1.22 & 99.99 & 0.45 \\ 
			& ATOM (ours) & {\bf 1.69} & 99.20 & {\bf 25.26} & {\bf 95.29} & {\bf 20.55} & {\bf 88.94} & {\bf 38.89} & {\bf 86.71} \\  \hline 
			\multirow{4}{0.12\linewidth}{{\bf CIFAR-100}} 
			& AT (no outlier mining)& 51.50 & 89.62 & 99.70 & 58.61 & 70.33 & 58.84 & 99.80 & 34.98 \\ 
 			& NTOM (no adversarial training) & 36.94 & 92.61 & 98.17 & 65.70 & 99.97 & 0.76 & 100.00 & 0.16 \\ 
			& ATOM (ours) & {\bf 32.30} & {\bf 93.06} & {\bf 93.15} & {\bf 71.96} & {\bf 38.72} & {\bf 88.03} & {\bf 93.44} & {\bf 69.15} \\ 
			\bottomrule
		\end{tabular}
	\end{adjustbox}
	\label{tab:atom-variant-results}
\end{table*}

\para{Effect of sampling parameter $q$.} Table~\ref{tab:ablation-study} 
shows the performance with different sampling parameter $q$. For all three datasets, training on auxiliary outliers with large OOD scores (\emph{i.e.}, too easy examples with $q=0.75$) worsens the performance, which suggests the necessity  to include examples on which the OOD detector is uncertain. Interestingly, in the setting where the in-distribution data and auxiliary OOD data are disjoint (\emph{e.g.}, SVHN/TinyImages), $q=0$ is optimal, which suggests that the hardest outliers are mostly useful for training. However, in a more realistic setting, the auxiliary OOD data can almost always contain data similar to in-distribution data (\emph{e.g.}, CIFAR/TinyImages). Even without removing near-duplicates exhaustively, ATOM can adaptively avoid training on those near-duplicates of in-distribution data (e.g. using $q=0.125$ for CIFAR-10 and $q=0.5$ for CIFAR-100). 

\para{Ablation on a different auxiliary dataset.}  To see the effect of the auxiliary dataset, we additionally experiment with ImageNet-RC as an alternative. We observe a consistent improvement of ATOM, and in many cases with performance better than using TinyImages. For example, on CIFAR-100, the FPR under natural OOD inputs is reduced from 32.30\% (w/ TinyImages) to 15.49\% (w/ ImageNet-RC). Interestingly, in all three datasets, using $q=0$ (hardest outliers) yields the optimal performance since there are substantially fewer near-duplicates between ImageNet-RC and in-distribution data. This ablation suggests that ATOM's success does not depend on a particular auxiliary dataset. Full results are provided in Appendix~\ref{sec:ablation-auxiliary-datasets}.

\begin{table*}[t!]
    \caption[]{\small Ablation study on $q$. We use DenseNet as network architecture. $\uparrow$ indicates larger value is better, and $\downarrow$ indicates lower value is better. All values are percentages and are averaged over six natural OOD test datasets mentioned in Section \ref{sec:setup}. Note: the hyperparameter $q$ is chosen on a separate validation set, which is different from test-time OOD data. See Appendix~\ref{appendix:choose-best-q} for details.}
	\begin{adjustbox}{width=\columnwidth,center}
		\begin{tabular}{l|l|cc|cc|cc|cc}
			\toprule
			 \multirow{4}{0.08\linewidth}{$\mathcal{D}_{\text{in}}^{\text{test}}$} & \multirow{4}{0.06\linewidth}{\textbf{Model}}  &\bf{FPR}  & {\bf AUROC}  & {\bf FPR} & {\bf AUROC}  & {\bf FPR} & {\bf AUROC} & {\bf FPR} & {\bf AUROC}  \\
			 & & $\textbf{(5\% FNR)}$ & $\textbf{}$ & {\bf (5\% FNR)} & $\textbf{}$ & $\textbf{(5\% FNR)}$ & {\bf } & $\textbf{(5\% FNR)}$ & {\bf } \\
			 & & $\downarrow$ & $\uparrow$ & $\downarrow$ & $\uparrow$ & $\downarrow$ & $\uparrow$ & $\downarrow$ & $\uparrow$ \\ \cline{3-10}
			 & & \multicolumn{2}{c|}{\textbf{Natural OOD}} & \multicolumn{2}{c|}{\textbf{Corruption OOD}} & \multicolumn{2}{c|}{\textbf{$L_\infty$ OOD }} & \multicolumn{2}{c}{\textbf{Comp. OOD}} \\  \hline 
			\multirow{6}{0.1\linewidth}{{{\bf SVHN}}}  
            & ATOM (q=0.0) & 0.07 & 99.97 & 5.47 & 98.52 & 7.02 & 98.00 & 96.33 & 49.52 \\ 
			& ATOM (q=0.125) & 1.30 & 99.63 & 34.97 & 94.97 & 39.61 & 82.92 & 99.92 & 6.30 \\ 
			& ATOM (q=0.25) & 1.36 & 99.60 & 41.98 & 94.30 & 52.39 & 71.34 & 99.97 & 1.35 \\ 
			& ATOM (q=0.5) & 2.11 & 99.46 & 44.85 & 93.84 & 59.72 & 65.59 & 99.97 & 3.15 \\ 
			& ATOM (q=0.75) & 2.91 & 99.26 & 51.33 & 93.07 & 66.20 & 57.16 & 99.96 & 2.04 \\ \hline 
			\multirow{6}{0.1\linewidth}{{{\bf CIFAR-10}}}  
            & ATOM (q=0.0) & 2.24 & 99.20 & 40.46 & 92.86 & 36.80 & 73.11 & 66.15 & 73.93 \\ 
			& ATOM (q=0.125) & 1.69 & 99.20 & 25.26 & 95.29 & 20.55 & 88.94 & 38.89 & 86.71 \\ 
			& ATOM (q=0.25) & 2.34 & 99.12 & 22.71 & 95.29 & 24.93 & 94.83 & 41.58 & 91.56 \\ 
			& ATOM (q=0.5) & 4.03 & 98.97 & 33.93 & 93.51 & 22.39 & 95.16 & 45.11 & 90.56 \\ 
			& ATOM (q=0.75) & 5.35 & 98.77 & 41.02 & 92.78 & 21.87 & 93.37 & 43.64 & 91.98 \\ \hline 
			\multirow{5}{0.1\linewidth}{{{\bf CIFAR-100}}}  
            & ATOM (q=0.0) & 44.38 & 91.92 & 99.76 & 60.12 & 68.32 & 65.75 & 99.80 & 49.85 \\ 
			& ATOM (q=0.125) & 26.91 & 94.97 & 98.35 & 71.53 & 34.66 & 87.54 & 98.42 & 68.52 \\ 
			& ATOM (q=0.25) & 32.43 & 93.93 & 97.71 & 72.61 & 40.37 & 82.68 & 97.87 & 65.19 \\ 
			& ATOM (q=0.5) & 32.30 & 93.06 & 93.15 & 71.96 & 38.72 & 88.03 & 93.44 & 69.15 \\ 
			& ATOM (q=0.75) & 38.56 & 91.20 & 97.59 & 58.53 & 62.66 & 78.70 & 97.97 & 54.89 \\ 
			\bottomrule
		\end{tabular}
	\end{adjustbox}
	\label{tab:ablation-study}
\end{table*}

\section{Theoretical Analysis}
\label{sec:analysis}
In this section, we provide theoretical  
insight on  mining informative outliers for robust OOD detection. We proceed with a brief summary of our key results. 
 
\para{Results overview.} 
At a high level, our analysis provides two important insights.
{\bf First}, we show that with informative auxiliary OOD data, \emph{less} in-distribution data is needed to build a robust OOD detector.
{\bf Second}, we show using outlier mining achieves a robust OOD detector in a more \emph{realistic} case when the auxiliary OOD data contains many outliers that are far from the decision boundary (and thus non-informative), and may contain some in-distribution data. The above two insights are important for building a robust OOD detector in practice, particularly because labeled in-distribution data is expensive to obtain while auxiliary outlier data is relatively cheap to collect. \emph{By performing outlier mining, one can effectively reduce the sample complexity while achieving strong robustness.} We provide the main results and intuition here and refer readers to Appendix~\ref{app:theory} for the details and the proofs. 

\subsection{Setup}
\para{Data model.}
To establish formal guarantees, we use a Gaussian $\mathcal{N}(\mu, \sigma^2 I)$ to model the in-distribution $\PX$ and the test OOD distribution can be any distribution largely supported outside a ball around $\mu$. We consider robust OOD detection under adversarial perturbation with bounded $\ell_\infty$ norm, i.e., the perturbation $\|\delta\|_\infty \le \epsilon$. Given $\mu \in \mathbb{R}^d, \sigma > 0, \gamma \in (0, \sqrt{d}), \epsilon_\tau > 0$, we consider the following data model: 
\begin{itemize}[topsep=0pt,itemsep=0pt,partopsep=0pt, parsep=0pt]
    \item {\bf $\PX$ (in-distribution data)} is
    $\mathcal{N}(\mu, \sigma^2 I)$.
    The in-distribution data $\{\*x_i\}_{i=1}^n$ is drawn from $\PX$.
    \item {\bf $\QX$ (out-of-distribution data)} can be any distribution from the family $\Qfamily = \{\QX: \Pr_{\*x \sim \QX}[ \|\*x - \mu\|_2 \le \tau ] \le  \epsilon_\tau \}$, where $\tau = \sigma \sqrt{d} + \sigma \gamma + \epsilon \sqrt{d}$. 
    \item {\bf Hypothesis class of OOD detector}: $\mathcal{G} = \{G_{u,r}(\*x): G_{u,r}(\*x) = 2\cdot \mathbb{I}[\|\*x - u\|_2 \le r]-1, u\in \mathbb{R}^d, r \in \mathbb{R}_+ \}$.
\end{itemize}
Here, $\gamma$ is a parameter indicating the margin between the in-distribution and OOD data, and $\epsilon_\tau$ is a small number bounding the probability mass the OOD distribution can have close to the in-distribution.

\para{Metrics.} For a detector $G$, we are interested in the False Negative Rate $\FNRr(G)$ and the worst False Positive Rate $\sup_{\QX\in \Qfamily}\FPRr(G;\QX, \Omega_{\infty,\epsilon}(\*x))$ over all the test OOD distributions $\Qfamily$ under $\ell_{\infty}$ perturbations of magnitude $\epsilon$. For simplicity, we denote them as $\FNRr(G)$ and $\FPRr(G; \Qfamily)$. 

While the Gaussian data model may be simpler than the practical data, its simplicity is desirable for our purpose of demonstrating our insights. Finally, the analysis can be generalized to mixtures of Gaussians which better models real-world data.

\subsection{Learning with Informative Auxiliary Data} 
 
We show that informative auxiliary outliers can reduce the sample complexity for in-distribution data. Note that learning a robust detector requires to estimate $\mu$ to distance $\gamma \sigma$, which needs $\Tilde\Theta(d/\gamma^2)$ in-distribution data, for example, one can compute a robust detector by:
\begin{align}
   u  = \bar{\*x} = \frac{1}{n} \sum_{i=1}^n \*x_i, \quad
   r  = (1 + \gamma/4\sqrt{d}) \hat\sigma, \label{eq:in_ur}  
\end{align} 
where $\hat\sigma^2  = \frac{1}{n} \sum_{i=1}^n \| {\*x}_i - \bar{\*x} \|_2^2.$
Then we show that with informative auxiliary data, we need much less in-distribution data for learning. We model the auxiliary data $\UX$ as a distribution over the sphere $\{\*x: \|\*x - \mu\|_2^2 = \sigma^2_o d\}$ for $\sigma_o > \sigma$, and assume its density is at least $\eta$ times that of the uniform distribution on the sphere for some constant $\eta > 0$, i.e., it's surrounding the boundary of $\PX$. Given $\{\*x_i\}_{i=1}^n$ from $\PX$ and $\{\Tilde{\*x}_i\}_{i=1}^{n'}$ from $\UX$, a natural idea is to compute $\bar{\*x}$ and $r$ as above as an intermediate solution, and refine it to have small errors on the auxiliary data under perturbation, i.e., find $u$ by minimizing a natural ``margin loss'': 
\begin{align} 
    \hspace{-4mm} u = \argmin_{p: \| p - \bar{\*x} \|_2 \le s}   
    & \frac{1}{n'} \sum_{i=1}^{n'} \max_{\|\delta\|_\infty \le \epsilon} \mathbb{I}\left[  \| \Tilde{\*x}_i + \delta - p\|_2 < t \right]  \label{eq:margin_opt}
\end{align}  
where $s, t$ are hyper-parameters to be chosen. We show that with $\Tilde{O}(d/\gamma^4)$ in-distribution data and sufficient auxiliary data can give a robust detector. See proof in Appendix~\ref{sec:learning-with-ideal-auxiliary-ood}.

\subsection{Learning with Informative Outlier Mining} 
In this subsection, we consider a more realistic data distribution where the auxiliary data can contain non-informative outliers (far away from the boundary), and in some cases mixed with in-distribution data. The non-informative outliers may not provide useful information to distinguish a good OOD detector statistically, which motivates the need for outlier mining.

\para{Uninformative outliers can lead to bad detectors.} To formalize, we model the non-informative (``easy" outlier) data as $Q_q = \mathcal{N}(0, \sigma_q^2 I )$, where $\sigma_q$ is large to ensure they are obvious outliers. The auxiliary data distribution $\UXb$ is then a mixture of $\UX$, $Q_q$ and $\PX$, where $Q_q$ has a large weight. Formally, $\UXb = \nu \UX + (1-2\nu) Q_q + \nu \PX$ for a small $\nu \in (0,1)$. Then we see that the previous learning rule cannot work: those robust detectors (with $u$ of distance $O(\sigma\gamma)$ to $\mu$) and those bad ones (with $u$ far away from $\mu$) cannot be distinguished. There is only a small fraction of auxiliary data from $\UX$ for distinguishing the good and bad detectors, while the majority (those from $Q_q$) do not differentiate them and some (those from $\PX$) can even penalize the good ones and favor the bad ones. 

\para{Informative outlier mining improves the detector with reduced sample complexity.} The above failure case suggests that a more sophisticated method is needed. Below we show that outlier mining can help to identify informative data and improve the learning performance. It can remove most data outside $\UX$, and keep the data from $\UX$, and the previous method can work after outlier mining. We first use in-distribution data to get an intermediate solution $\bar{\*x}$ and $r$ by equations~(\ref{eq:in_ur}).
Then, we use a simple thresholding mechanism to only pick points close to the decision boundary of the intermediate solution, which removes {\em non-informative outliers}. Specifically, we only select outliers with mild ``confidence scores'' w.r.t.\ the intermediate solution, i.e., the distances to $\bar{\*x}$ fall in some interval $[a, b]$:
\begin{align}
    S  := \{ i :  \|\Tilde{\*x}_i - \bar{\*x} \|_2 \in [a, b] , 1 \le i \le n'\}
\end{align}
The final solution $\uom$ is obtained by solving~\eqref{eq:margin_opt} on only $S$ instead of all auxiliary data. 
We can prove: 
\begin{proposition}\label{prop:om} {\bf (Error bound with outlier mining.)}
    Suppose $ \sigma^2\gamma^2 \ge C \epsilon\sigma_o d$ and $\sigma \sqrt{d} + C \sigma \gamma^2 <  \sigma_o \sqrt{d} < C \sigma\sqrt{d}$ for a sufficiently large constant $C$, and $\sigma_q \sqrt{d} > 2(\sigma_o\sqrt{d} + \|\mu\|_2)$. For some absolute constant $c$ and any $\alpha \in (0,1)$, if the number of in-distribution data $n \ge \frac{C d}{\gamma^4} \log \frac{1}{\alpha} $ and the number of auxiliary data $n' \ge \frac{\exp(C \gamma^4)}{\nu^2\eta^2}\log \frac{d\sigma }{\alpha}$, then there exist parameter values $s, t, a, b$ such that with probability $\ge 1 - \alpha$, the detector $G_{\uom, r}$ computed above satisfies:  
\begin{align*}
    \FNRr(G_{\uom,r}) \le \exp(-c \gamma^2), \quad
    \FPRr(G_{\uom,r}; \Qfamily)  \le \epsilon_\tau.
\end{align*}
\end{proposition}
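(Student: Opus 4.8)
The plan is to reduce Proposition~\ref{prop:om} to the ``ideal auxiliary data'' guarantee of the previous subsection by showing that the thresholding step essentially reconstructs a clean sample from $\UX$ out of the contaminated mixture $\UXb = \nu\UX + (1-2\nu)Q_q + \nu\PX$. Concretely, I would first isolate two purely geometric facts that, once established with high probability, make both error bounds immediate: (i) the refined center is accurate, $\|\uom - \mu\|_2 \le \rho^\ast$ for some $\rho^\ast \le \sigma\gamma/16$; and (ii) the radius from~(\ref{eq:in_ur}) lands in the window $r \in [\sigma\sqrt{d} + \tfrac18\sigma\gamma,\ \sigma\sqrt{d} + \tfrac38\sigma\gamma]$. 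Given (i)--(ii), $\FNRr$ follows from a Gaussian norm tail: for $\*x\sim\PX$ we have $\|\*x-\uom\|_2 \le \|\*x-\mu\|_2 + \rho^\ast$, and $\Pr[\|\*x-\mu\|_2 > r-\rho^\ast] \le \exp(-c\gamma^2)$ because $r-\rho^\ast-\sigma\sqrt{d} = \Omega(\sigma\gamma)$. For $\FPRr$, any $\*x$ with $\|\*x-\mu\|_2 > \tau$ and any $\|\delta\|_\infty\le\epsilon$ (so $\|\delta\|_2\le\epsilon\sqrt{d}$) satisfy $\|\*x+\delta-\uom\|_2 \ge \tau - \epsilon\sqrt{d} - \rho^\ast = \sigma\sqrt{d} + \sigma\gamma - \rho^\ast > r$; hence only the mass $\le\epsilon_\tau$ of $\QX$ lying inside $B(\mu,\tau)$ can be misclassified, uniformly over $\Qfamily$.

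Next I would establish (ii) and the scaffolding for (i) by concentration. Using $n \ge Cd/\gamma^4\log(1/\alpha)$, standard Gaussian/$\chi^2$ bounds give $\|\bar{\*x}-\mu\|_2 = O(\sigma\sqrt{d/n}) \le \sigma\gamma^2$ and control $\hat\sigma$ to relative error $O(\gamma/\sqrt{d})$, which places $r=(1+\gamma/(4\sqrt{d}))\hat\sigma$ in the window of (ii). The crux of the realistic case is filtering correctness: I would show that the three sub-populations of $\UXb$ concentrate at well-separated distances from $\bar{\*x}$ --- $\PX$-points near $\sigma\sqrt{d}$, $\UX$-points near $\sigma_o\sqrt{d}$, and $Q_q$-points near $\sigma_q\sqrt{d}$. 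The separation $\sigma_o\sqrt{d} - \sigma\sqrt{d} \ge C\sigma\gamma^2$ together with the bound $\|\bar{\*x}-\mu\|_2\le\sigma\gamma^2$ keeps the $\PX$ and $\UX$ shells disjoint as seen from $\bar{\*x}$ (this is precisely why $n$ must scale as $d/\gamma^4$ rather than $d/\gamma^2$), while $\sigma_q\sqrt{d} > 2(\sigma_o\sqrt{d}+\|\mu\|_2)$ pushes the easy outliers well beyond the $\UX$ shell. Choosing the band $[a,b]$ to bracket the $\UX$ shell then makes $S$ consist, with high probability, only of $\UX$-points while retaining the density-$\ge\eta$ property on the sphere.

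Having reduced $S$ to an essentially uncontaminated sample from $\UX$, I would invoke the margin-optimization guarantee of the ideal case: solving~(\ref{eq:margin_opt}) restricted to $S$, with $t$ tuned so that the population objective vanishes exactly on $B(\mu,\rho^\ast)$ and is strictly positive outside it, forces the minimizer $\uom$ into $B(\mu,\rho^\ast)$, giving (i). Because only a $\nu$-fraction of $\UXb$ lies in $\UX$ and its sphere density is $\ge\eta$, I would need $n' \ge \frac{\exp(C\gamma^4)}{\nu^2\eta^2}\log\frac{d\sigma}{\alpha}$ so that, after filtering, $S$ is dense enough to certify the strict separation of the objective at every candidate center. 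I expect this final step --- the uniform covering argument certifying that \emph{every} feasible $p$ with $\|p-\mu\|_2>\rho^\ast$ incurs positive empirical loss --- to be the main obstacle, since the informative spherical caps shrink as $\rho^\ast$ decreases, and the net over candidate centers is exactly what produces the $\exp(C\gamma^4)$ and $\nu^{-2}\eta^{-2}$ factors. I would then close with a union bound over the mean/variance concentration events, the per-population filtering events, and the covering event, each arranged to fail with probability at most a constant fraction of $\alpha$, yielding the claimed $1-\alpha$ guarantee.
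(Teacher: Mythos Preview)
Your plan mirrors the paper's argument closely---concentration for $\bar{\*x}$ and $r$, shell separation to justify the filter $[a,b]$, then the margin optimization over $S$ with a net/covering to control all bad centers---and your derivation of the final $\FNRr$/$\FPRr$ bounds from the geometric facts (i)--(ii) is exactly how the paper closes.

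The one substantive departure is your claim that $S$ consists, with high probability, \emph{only} of $\UX$-points. The per-point probability that a $\PX$- or $Q_q$-sample lands in $[a,b]$ is indeed exponentially small, but turning this into ``$S$ is clean'' requires a union bound over all $n'$ auxiliary points, and the proposition must hold for \emph{every} $n'$ above the stated threshold (which can be arbitrarily large), so this step does not go through as written. The paper sidesteps the issue by never asking $S$ to be clean. Instead it argues directly that $L_{S,t}(\mu) < q\nu/2 < L_{S,t}(p)$ for every bad center $p$: (a) all $\UX$-points survive the filter and contribute $0$ to $L_{S,t}(\mu)$ since $t+\epsilon\sqrt{d}<\sigma_o\sqrt{d}$, while (b) the \emph{fraction} of surviving contaminants is bounded via Hoeffding by $q\nu/4$ plus the tiny filter-survival probabilities $q_1,q_2$, which are then shown to be $\le q/8$ and $\le \nu q/8$. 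This Hoeffding step at scale $q\nu$ is exactly what forces $n'\gtrsim 1/(q^2\nu^2)$ with $q\sim\eta\,e^{-Cs^2/\sigma_o^2}$, producing the $\exp(C\gamma^4)/(\nu^2\eta^2)$ factor. Your clean-reduction-to-the-ideal-case would naturally yield only $1/(\nu\eta)$, so your accounting of where $\nu^{-2}\eta^{-2}$ comes from is also off; it arises from the loss-comparison on the contaminated $S$, not from the covering alone.
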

This means that even in the presence of a large amount of uninformative or even harmful auxiliary data, we can successfully learn a good detector. Furthermore, this can reduce the sample size $n$ by a factor of $\gamma^2$. For example, when $\gamma = \Theta(d^{1/8})$, we only need $n = \Tilde\Theta(\sqrt{d})$, while in the case without auxiliary data, we need $n = \Tilde\Theta(d^{3/4})$. 

\para{Remark.} We note that when $\UX$ is as ideal as the uniform distribution over the sphere (i.e., $\eta=1$), then we can let $u$ be the average of points in $S$ after mining, which will require $n' = \Tilde{\Theta}(d/(\nu^2\gamma^2))$ auxiliary data, much less than that for more general $\eta$. 
We also note that our analysis and the result also hold for many other auxiliary data distributions $\UXb$, and the particular $\UXb$ used here is for the ease of explanation; see Appendix~\ref{app:theory} for more discussions.

\section{Related Work}
\label{sec:related}
 \para{OOD detection.}
~\cite{hendrycks2016baseline} introduced a baseline for OOD detection using the maximum softmax probability from a pre-trained network. 
Subsequent works improve the OOD uncertainty estimation by using deep ensembles~\cite{lakshminarayanan2017simple}, the calibrated softmax score~\cite{liang2018enhancing}, the Mahalanobis
distance-based confidence score~\cite{lee2018simple}, as well as the energy score~\cite{liu2020energy}.\@
Some methods regularize the model with auxiliary anomalous data that were either realistic~\cite{hendrycks2018deep,mohseni2020self,papadopoulos2019outlier} or artificially generated by GANs~\cite{lee2017training}. Several other works \cite{bevandic2018discriminative,malinin2018predictive,subramanya2017confidence} also explored regularizing the model to produce lower confidence for anomalous examples. Recent works have also studied the computational efficiency aspect of OOD detection~\cite{lin2021mood} and large-scale OOD detection on ImageNet~\cite{huang2021mos}.

\para{Robustness of OOD detection.} Worst-case aspects of OOD detection have been studied in \cite{hein2019relu,sehwag2019analyzing}. However, these papers are primarily concerned with $L_\infty$ norm bounded adversarial attacks, while our evaluation also includes common image corruption attacks. Besides, \cite{hein2019relu,meinke2019towards} only evaluate adversarial robustness of OOD detection on random noise images, while we also evaluate it on natural OOD images.~\cite{meinke2019towards} has  shown the first provable guarantees for worst-case OOD detection on some balls around uniform noise, and ~\cite{bitterwolf2020provable} studied the provable guarantees for worst-case OOD detection not only for noise but also for images from related but different image classification tasks. Our paper proposes ATOM which achieves state-of-the-art performance on a broader family of clean and perturbed OOD inputs. The key difference compared to prior work is introducing the informative outlier mining technique, which can significantly improve the generalization and robustness of OOD detection.

 \para{Adversarial robustness.} 
 Adversarial examples \cite{biggio2013evasion,goodfellow2014explaining,papernot2016limitations,szegedy2013intriguing} have received considerable attention in recent years. Many defense methods have been proposed to mitigate this problem. One of the most effective methods is adversarial training \cite{madry2017towards}, which uses robust optimization techniques to render deep learning models resistant to adversarial attacks.
\cite{carmon2019unlabeled,najafi2019robustness,uesato2019labels,zhai2019adversarially} showed that unlabeled data could improve adversarial robustness for classification. 

 \para{Hard example mining. } 
 Hard example mining was introduced in~\cite{sung1996learning} for training face detection models, where they gradually grew the set of background examples by selecting those examples for which the detector triggered a false alarm. The idea has been used extensively for object detection literature~\cite{felzenszwalb2009object,gidaris2015object,shrivastava2016training}. It also has been used extensively in deep metric learning~\cite{cui2016fine,harwood2017smart,simo2015discriminative,suh2019stochastic,wang2015unsupervised} and deep embedding learning~\cite{duan2019deep,smirnov2018hard,wu2017sampling,yuan2017hard}. Although hard example mining has been used in various learning domains, to the best of our knowledge, we are the first to explore it to improve the robustness of out-of-distribution detection.

\section{Conclusion}
\label{sec:conclusion}
In this paper, we propose {Adversarial Training with informative Outlier Mining} (ATOM), a method that enhances the robustness of the OOD detector. We show the merit of adaptively selecting the OOD training examples which the OOD detector is uncertain about. Extensive experiments show ATOM can significantly improve the decision boundary of the OOD detector, achieving state-of-the-art performance under a broad family of \emph{clean and perturbed} OOD evaluation tasks. We also provide a theoretical analysis that justifies the benefits of outlier mining. Further, our unified evaluation framework allows future research to examine the robustness of the OOD detector. We hope our research can raise more attention to a broader view of robustness in out-of-distribution detection. 

\section*{Acknowledgments}
The work is partially supported by Air Force Grant FA9550-18-1-0166, the National Science Foundation (NSF) Grants CCF-FMitF-1836978, IIS-2008559, SaTC-Frontiers-1804648 and CCF-1652140, and ARO grant number W911NF-17-1-0405. Jiefeng Chen and Somesh Jha are partially supported by the DARPA-GARD problem under agreement number 885000.

\bibliographystyle{splncs04}

\newpage
\appendix

\begin{center}
	\textbf{\LARGE Supplementary Material}
\end{center}

 \begin{center}
	\textbf{\large ATOM: Robustifying Out-of-distribution Detection Using Outlier Mining}
\end{center}

The details about the theory and the experiment are provided in Section~\ref{app:theory} and Section~\ref{sec:details-of-experiments} respectively. 

\section{Theoretical Analysis}
\label{app:theory}

We consider the following data model to demonstrate how selecting informative outliers can help. In the proof below, we will use $C, c$ to denote some absolute constants; their values can change from line to line.

\begin{enumerate}
    \item The in-distribution $\PX$ is a Gaussian $\mathcal{N}(\mu, \sigma^2 I)$ with mean $\mu \in \mathbb{R}^d$ and variance $\sigma^2$. 
    \item The test OOD distribution can be any distribution largely supported outside a ball around $\mu$. More precisely, it can be any distribution from the family:
    $$
    \Qfamily = \left\{\QX: \Pr_{\*x \sim \QX}[ \|\*x - \mu\|_2 \le \tau ] \le  \epsilon_\tau \right\}
    $$
    where $\tau = \sigma \sqrt{d} + \sigma \gamma + \epsilon \sqrt{d}$,  $\gamma \in (0, \sqrt{d})$ is a parameter indicating some margin between the in-distribution and OOD distributions, and $\epsilon_\tau$ is a small number bounding the probability mass the OOD distribution can have close to the in-distribution. 
    \item The hypothesis class for detectors is 
    $$
    \mathcal{G} = \left\{G_{u,r}(\*x): G_{u,r}(\*x) = 2\cdot \mathbb{I}[\|\*x - u\|_2 \le r]-1, u\in \mathbb{R}^d, r \in \mathbb{R}_+ \right \}.
    $$  
\end{enumerate}

Recall that we consider $\ell_\infty$ attack with adversarial budget $\epsilon > 0$. For a detector $G$ and test OOD distribution family $\Qfamily$, we are interested in the False Negative Rate $\FNRr(G)$ and worst False Positive Rate $\sup_{\QX\in \Qfamily}\FPRr(G;\QX, \Omega_{\infty,\epsilon}(\*x))$ over $\QX\in\Qfamily$ under $\ell_{\infty}$ perturbations of magnitude $\epsilon$. For simplicity, we denote them as $\FNRr(G)$ and $\FPRr(G; \Qfamily)$ in our proofs. 

We note that the data model is set up such that there exists a robust OOD detector with good FPR and FNR (Proposition~\ref{prop:exist}), while using only in-distrbution data to learn a good robust OOD detector one needs sufficiently amount of them, i.e., $\Tilde\Theta(d/\gamma^2)$ (Propostion~\ref{prop:without}). (Therefore, we assume $\gamma < \sqrt{d}$ to avoid the trivial case.)

\begin{proposition} \label{prop:exist}
The detector $G_{u,r}(\*x)$ with $u = \mu$ and $r = \sigma \sqrt{d} + \sigma \gamma$ satisfies:
\begin{align}
    \FNRr(G_{u,r}) & \le \exp(-c\gamma^2),
    \\
    \FPRr(G_{u,r}; \Qfamily) & \le \epsilon_\tau,
\end{align}
for some absolute constant $c>0$.
\end{proposition}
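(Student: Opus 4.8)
The plan is to establish the two bounds separately, since each reduces to an elementary fact about the fixed ball-shaped detector $G_{\mu,r}$ with $u=\mu$ and $r = \sigma\sqrt{d}+\sigma\gamma$. For the false negative rate, note that by the definition of $G_{u,r}$ we have $\FNRr(G_{\mu,r}) = \Pr_{\*x\sim\PX}[\|\*x-\mu\|_2 > r]$, i.e.\ the probability that a genuine in-distribution sample is rejected. Writing $\*x-\mu = \sigma\*z$ with $\*z\sim\mathcal{N}(0,I_d)$, this equals $\Pr[\|\*z\|_2 > \sqrt{d}+\gamma]$. The single nontrivial ingredient here is Gaussian norm concentration: the map $\*z\mapsto\|\*z\|_2$ is $1$-Lipschitz, so the Gaussian concentration-of-measure inequality gives $\Pr[\|\*z\|_2 > \mathbb{E}\|\*z\|_2 + t]\le \exp(-t^2/2)$, and combining this with $\mathbb{E}\|\*z\|_2 \le \sqrt{\mathbb{E}\|\*z\|_2^2} = \sqrt{d}$ and taking $t=\gamma$ yields $\FNRr(G_{\mu,r}) \le \exp(-\gamma^2/2)$, which is of the claimed form $\exp(-c\gamma^2)$.

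For the worst-case false positive rate, I would argue by a containment of events. The adversary succeeds on an OOD point $\*x$ precisely when there exists $\delta$ with $\|\delta\|_\infty\le\epsilon$ and $\|\*x+\delta-\mu\|_2\le r$. The key geometric observation is that any such $\delta$ obeys $\|\delta\|_2 \le \epsilon\sqrt{d}$, so the triangle inequality forces $\|\*x-\mu\|_2 \le \|\*x+\delta-\mu\|_2 + \|\delta\|_2 \le r + \epsilon\sqrt{d} = \tau$. Hence the success event is contained in $\{\|\*x-\mu\|_2\le\tau\}$, and by the defining property of the family $\Qfamily$ this event has probability at most $\epsilon_\tau$ under every $\QX\in\Qfamily$. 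Taking the supremum over $\QX\in\Qfamily$ then gives $\FPRr(G_{\mu,r};\Qfamily)\le\epsilon_\tau$.

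The main obstacle is really just bookkeeping rather than depth: the only genuine external tool is the Gaussian norm concentration bound used for the FNR, and the one place to be careful is matching the $\ell_\infty$-to-$\ell_2$ slack $\epsilon\sqrt{d}$ against the definition of $\tau$. Indeed, the value $\tau = \sigma\sqrt{d}+\sigma\gamma+\epsilon\sqrt{d} = r + \epsilon\sqrt{d}$ is chosen exactly so that the perturbation budget is absorbed, which makes the FPR argument tight. I would therefore present the FNR computation first (invoking the concentration inequality as a cited lemma), then the FPR containment argument, and finally read off the constant $c=1/2$.
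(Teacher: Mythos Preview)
Your proposal is correct and follows essentially the same route as the paper: the paper also reduces the FNR bound to $\Pr_{\*z\sim\mathcal{N}(0,I)}[\|\*z\|_2>\sqrt{d}+\gamma]\le\exp(-c\gamma^2)$ via Gaussian norm concentration, and handles the FPR by the same triangle-inequality containment $\|\*x-\mu\|_2\le r+\epsilon\sqrt{d}=\tau$ followed by the defining property of $\Qfamily$. Your version is slightly more explicit in naming the Lipschitz concentration inequality and pinning down $c=1/2$, but the structure and ideas are identical.
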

\begin{proof}
The first statement follows from the concentration of the norm of $\*x - \mu$:
\begin{align}
    \FNRr(G_{u,r}(\*x)) &= \mathbb{E}_{\*x\sim \PX} \mathbb{I}[\|\*x-u\|_2 > r] \\
    &= \Pr_{\*x\sim \mathcal{N}(\*0, I)}[\| \*x \|_2 > \sqrt{d} + \gamma] \\
    &\le \exp(-c\gamma^2)
\end{align}
the second statement follows from the definition of $\mathcal{Q}$:
\begin{align}
    \FPRr(G_{u,r}; \Qfamily) & =  \sup_{\QX \in \Qfamily} \mathbb{E}_{\*x\sim \QX} \max_{\|\delta\|_\infty \leq \epsilon }  \mathbb{I}[\|\*x+\delta-u\|_2\le r] \\
    & \le \sup_{\QX \in \Qfamily} \mathbb{E}_{\*x\sim \QX}  \mathbb{I}[\|\*x-\mu\|_2 - \sqrt{d}\epsilon \le \sigma \sqrt{d} + \sigma \gamma]  \\
    & = \sup_{\QX \in \Qfamily} \Pr_{\*x\sim \QX}[\|\*x-\mu\|_2 \le \tau] \le \epsilon_\tau.
\end{align} 
\end{proof}

\subsection{Learning Without Auxiliary OOD Data}
Given in-distribution data $\*x_1, \*x_2, \ldots, \*x_n$, we consider the detector $G_{u,r}(\*x)$ with 
\begin{align}
    u & = \bar{\*x} = \frac{1}{n} \sum_{i=1}^n \*x_i \\
    r & = (1 + \gamma/4\sqrt{d}) \hat\sigma, \textrm{~where~} \hat\sigma^2  = \frac{1}{n} \sum_{i=1}^n \| {\*x}_i - \bar{\*x} \|_2^2.
\end{align}
By concentration bounds, we can show that this leads to a good solution, if the number of data points $n$ is sufficiently large.

\begin{proposition}\label{prop:without}
If the number of in-distribution data points  $n \ge  \frac{C d }{\gamma^2}\log\frac{1}{\alpha} + \frac{C d }{\gamma^4 }\log\frac{1}{\alpha}$
for $\alpha \in (0,1)$ and some sufficiently large constant $C$, then with probability at least $1 - \alpha$,
the detector $G_{u,r}(\*x)$ with $u = \frac{1}{n} \sum_{i=1}^n \*x_i$ and $r = (1 + \gamma/4\sqrt{d}) \sqrt{\frac{1}{n} \sum_{i=1}^n \|\*x_i - u\|_2^2}$ satisfies:
\begin{align}
    \FNRr(G_{u,r}) & \le  \exp(-c\gamma^2),
    \\
    \FPRr(G_{u,r}; \Qfamily) & \le \epsilon_\tau,
\end{align}
for some absolute constant $c$.
\end{proposition}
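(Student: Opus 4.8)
The plan is to treat the empirical detector $G_{\bar{\*x},r}$ as a perturbation of the oracle detector $G_{\mu,\,\sigma\sqrt d+\sigma\gamma}$ from Proposition~\ref{prop:exist}, and to show that the estimation errors in the center $\bar{\*x}$ and in the radius $r=(1+\gamma/4\sqrt d)\hat\sigma$ are small enough to be absorbed into the margins that the oracle analysis leaves on both the FNR and FPR sides. Concretely, I would condition on two high-probability events, $E_1=\{\|\bar{\*x}-\mu\|_2\le c_1\sigma\gamma\}$ controlling the center and $E_2=\{|\hat\sigma-\sigma\sqrt d|\le c_2\sigma\gamma\}$ controlling the radius scale, for small constants $c_1,c_2$ fixed at the end. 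A union bound makes both hold with probability $\ge 1-\alpha$, and the two terms in the sample-complexity requirement arise precisely from forcing $\Pr[E_1^c]$ and $\Pr[E_2^c]$ each to be at most $\alpha/2$.

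For $E_1$, since $\bar{\*x}-\mu\sim\mathcal{N}(0,\tfrac{\sigma^2}{n}I_d)$, the standard Gaussian norm bound gives $\|\bar{\*x}-\mu\|_2\le\tfrac{\sigma}{\sqrt n}(\sqrt d+\sqrt{2\log(2/\alpha)})$ with probability $\ge 1-\alpha/2$; requiring the right-hand side to be at most $c_1\sigma\gamma$ forces $n\gtrsim\tfrac{d}{\gamma^2}\log\tfrac1\alpha$, the first term. For $E_2$, I would use the identity $\hat\sigma^2=\tfrac1n\sum_i\|\*x_i-\mu\|_2^2-\|\bar{\*x}-\mu\|_2^2$, note that $\sum_i\|\*x_i-\mu\|_2^2/\sigma^2\sim\chi^2_{nd}$, and apply a Laurent--Massart-type tail bound together with $E_1$ to absorb the small correction term; demanding relative accuracy of order $\gamma/\sqrt d$ on $\hat\sigma$ yields the second term $\tfrac{d}{\gamma^4}\log\tfrac1\alpha$ (which is in fact more than sufficient for the needed precision). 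Combining $E_1,E_2$ with the definition of $r$ produces a two-sided sandwich of the form $\sigma\sqrt d+c\sigma\gamma\le r\le\sigma\sqrt d+(1-c)\sigma\gamma$ for a suitable absolute constant $c$, since $r\approx\sigma\sqrt d+\sigma\gamma/4$ up to the error $\sigma\sqrt d\,\zeta$ with $|\zeta|\le c_2\gamma/\sqrt d$.

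Given the sandwich, the two error bounds follow by mirroring Proposition~\ref{prop:exist}. For the FNR, a fresh $\*x\sim\PX$ is independent of the training sample, so $\|\*x-\bar{\*x}\|_2\le\|\*x-\mu\|_2+c_1\sigma\gamma$, whence $\FNRr(G_{\bar{\*x},r})\le\Pr_{\*x\sim\PX}[\|\*x-\mu\|_2>r-c_1\sigma\gamma]\le\exp(-c'\gamma^2)$ using the lower bound on $r$ and Gaussian norm concentration. For the FPR, for any $\QX\in\Qfamily$ and any $\|\delta\|_\infty\le\epsilon$ I would use $\|\*x+\delta-\bar{\*x}\|_2\ge\|\*x-\mu\|_2-\sqrt d\,\epsilon-c_1\sigma\gamma$ (bounding $\|\delta\|_2\le\sqrt d\,\epsilon$), so a false positive forces $\|\*x-\mu\|_2\le r+\sqrt d\,\epsilon+c_1\sigma\gamma\le\tau$ by the upper bound on $r$; the defining property of $\Qfamily$ then gives $\FPRr(G_{\bar{\*x},r};\Qfamily)\le\epsilon_\tau$. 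The main obstacle is the second concentration step together with the bookkeeping of constants: I must pin down how tightly $\hat\sigma$ must track $\sigma\sqrt d$ and then choose $c_1,c_2$ and the constants in the sandwich so that both the FNR margin and the FPR margin survive the simultaneous presence of the center error and the radius error. It is this variance-estimation precision, rather than the easier mean estimation, that is responsible for the $d/\gamma^4$ term.
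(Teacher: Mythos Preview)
Your proposal is correct and follows essentially the same route as the paper: establish the two concentration events $\|\bar{\*x}-\mu\|_2\le c\sigma\gamma$ and $|\hat\sigma-\sigma\sqrt d|\le c\sigma\gamma$ (the paper also uses the identity $\hat\sigma^2=\tfrac{\sigma^2}{n}\sum_i\|g_i\|_2^2-\sigma^2\|\bar g\|_2^2$ and sub-exponential/$\chi^2$ concentration), deduce the two-sided sandwich $(1+\gamma/8\sqrt d)\sigma\sqrt d\le r\le(1+\gamma/2\sqrt d)\sigma\sqrt d$, and then read off the FNR and FPR bounds exactly as you describe. Your write-up is in fact more explicit than the paper's, which simply says ``the statement follows'' after the sandwich; one small expository nit is that the $d/\gamma^4$ requirement arises from the paper's crude step $|\hat\sigma-\sigma\sqrt d|\le\sqrt{|\hat\sigma^2-\sigma^2 d|}$ rather than from a sharp linearization (as your parenthetical ``more than sufficient'' correctly hints).
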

\begin{proof}
We will prove that $u$ is close to $\mu$ and $r$ is close to $(1 + c\gamma/\sqrt{d}) \sigma \sqrt{d}$. 

First, consider $u - \mu$. Let $\*x_i = \mu + \sigma g_i$ where $g_i \sim \mathcal{N}(0, I)$, and $\bar{g} = \frac{1}{n}\sum_{j=1}^n g_j$. Then $u - \mu = \sigma \cdot \frac{1}{n} \sum_{i=1}^n g_i = \sigma \bar{g}$. By the concentration of sub-gaussian variables, with probability $\ge 1 - \alpha/2$,
\begin{align}
    \left\| \bar{g}\right\|_2 = \left\| \frac{1}{n} \sum_{i=1}^n g_i \right\|_2 \le c \sqrt{\frac{d}{n} \log \frac{1}{\alpha}}
\end{align} 
and thus
\begin{align}
     \|u - \mu\|_2 \le c \sigma \sqrt{\frac{d}{n} \log \frac{1}{\alpha}}\le c \sigma \gamma.
\end{align} 
Next, let $\hat\sigma^2 := \frac{1}{n} \sum_{i=1}^n \|\*x_i - u\|_2^2$. Then
\begin{align} 
    \hat\sigma^2 = \frac{\sigma^2}{n} \sum_{i=1}^n \left\| g_i - \bar{g}  \right\|_2^2 = \sigma^2 \left( \frac{1}{n} \sum_{i=1}^n \| g_i\|_2^2 - \sigma^2\left\| \bar{g} \right\|_2^2 \right).
\end{align}
By sub-exponential concentration, we have with probability $\ge 1-\alpha/2$,
\begin{align} 
    \left|\frac{1}{n} \sum_{i=1}^n \|g_i\|_2^2  - d \right| \le c\sqrt{ \frac{d }{n}\log \frac{1}{\alpha} }.
\end{align} 
Then
\begin{align}
    |\hat\sigma^2 - \sigma^2 d | \le  c\sigma^2 \sqrt{ \frac{d }{n}\log \frac{1}{\alpha} } \le c \sigma^2 \gamma^2.
\end{align}
Given
\begin{align}
     \|u - \mu\|_2 \le  c \sigma \gamma, |\hat\sigma - \sigma \sqrt{d} | \le \sqrt{|\hat\sigma^2 - \sigma^2 d |} \le c \sigma\gamma,
\end{align} 
for sufficiently small $c < 1/16$, we have $(1 + \gamma/8\sqrt{d}) \sigma \sqrt{d} \le r  \le (1 + \gamma/2\sqrt{d}) \sigma \sqrt{d}$. Then the statement follows. 
\end{proof}

\subsection{Learning With Informative Auxiliary OOD Data}
\label{sec:learning-with-ideal-auxiliary-ood}
The informative auxiliary data will be a distribution around the boundary of the in-distribution and the outlier distributions.
Assume we have access to  auxiliary OOD data $\*x$ from an ideal distribution $\UX$ where:
\begin{itemize} 
    \item $\UX$ is a distribution over the sphere $\{\*x: \|\*x - \mu\|_2^2 = \sigma^2_o d\}$ for some $\sigma_o > \sigma$, and its density is at least $\eta$ times that of the uniform distribution on the sphere for some constant $\eta > 0$.
\end{itemize}

Given in-distribution data $\*x_1, \*x_2, \ldots, \*x_n$ from $\PX$ and  auxiliary OOD data $\Tilde{\*x}_1, \Tilde{\*x}_2, \ldots, \Tilde{\*x}_{n'}$ from $\UX$, we now design a good detector which only requires a small number $n$ of in-distribution data. The radius $r$ can be estimated using a small number of in-distribution data as above. The challenge is to learn a good $u$, ideally close to $\mu$. Given an outlier data point $\Tilde{\*x}$, a natural idea frequently used in practice is then to find a $u$ so that not so many outliers (potentially with adversarial perturbations) can be close to $u$, so that $G_{u,r}$ will be able to detect the outliers. 
Furthermore, we know that the $\mu$ is not far from $\bar{\*x}$, so we only need to search $u$ near $\bar{\*x}$.
We thus come to the following learning rule:
\begin{align}
   \hat\sigma^2 & = \frac{1}{n} \sum_{i=1}^n \| {\*x}_i - \bar{\*x} \|_2^2,  \textrm{~where~} \bar{\*x} = \frac{1}{n} \sum_{i=1}^n \*x_i, 
   \\
   r & = (1 + \gamma/4\sqrt{d}) \hat\sigma, \label{eqn:ideal_r}
   \\
    u & \leftarrow \argmin_{p: \| p - \bar{\*x} \|_2 \le s} L_{t}(p) := \frac{1}{n'} \sum_{i=1}^{n'} \max_{\|\delta\|_\infty \le \epsilon} \mathbb{I}\left[  \| \Tilde{\*x}_i + \delta - p  \|_2 < t \right]. \label{eqn:ideal_u}
\end{align} 
where $s$ and $t$ are some hyper-parameters to be determined. 

\begin{lemma} \label{lem:with}
Suppose $ \sigma^2\gamma^2 > C \epsilon\sigma_o d$, and the number of in-distribution data points 
$n \ge \frac{C d}{\gamma^4} \log \frac{1}{\alpha} + \frac{C \sigma^2 d}{s^2} \log\frac{1}{\alpha}$ and the number of auxiliary data $n' \ge \frac{ \exp(C s^2 /\sigma_o^2)}{\eta}\log \frac{d\sigma }{\alpha}$ for $\alpha \in (0,1)$ and for some sufficiently large constant $C$, then there exists proper parameter values $t$ such that with probability at least $1 - \alpha$,
the detector $G_{u,r}(\*x)$ with $r$ from \eqref{eqn:ideal_r} and $u$ from \eqref{eqn:ideal_u} satisfies:
\begin{align}
    \FNRr(G_{u,r}(\*x)) & \le \exp(-c \gamma^2),
    \\
    \FPRr(G_{u,r}; \Qfamily) & \le \epsilon_\tau,
\end{align}
for some absolute constant $c$.
\end{lemma}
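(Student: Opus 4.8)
The plan is to reduce the entire lemma to a single geometric statement about the learned center, namely that $\rho := \|u-\mu\|_2 \le c\sigma\gamma$, and then recover the two rates exactly as in Propositions~\ref{prop:exist} and~\ref{prop:without}. Granting the center bound, the radius estimate is controlled by the same sub-gaussian and sub-exponential concentration used in Proposition~\ref{prop:without}: the term $\frac{Cd}{\gamma^4}\log\frac1\alpha$ in the hypothesis forces $|\hat\sigma - \sigma\sqrt d|\le c\sigma\gamma$, hence $(1+\gamma/8\sqrt d)\sigma\sqrt d \le r \le (1+\gamma/2\sqrt d)\sigma\sqrt d$. For $\FNRr$ one writes $\|\*x-u\|_2 \le \|\*x-\mu\|_2 + \rho$ and uses that $\|\*x-\mu\|_2$ concentrates at $\sigma\sqrt d$ with an $\exp(-c\gamma^2)$ tail, so the lower bound on $r$ retains almost all in-distribution mass. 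For $\FPRr$ one notes that for any $\QX\in\Qfamily$ and $\|\delta\|_\infty\le\epsilon$, the event $\|\*x+\delta-u\|_2\le r$ implies $\|\*x-\mu\|_2 \le r+\epsilon\sqrt d+\rho \le \tau$, so it is contained in $\{\|\*x-\mu\|_2\le\tau\}$, which carries mass $\le\epsilon_\tau$. Throughout, the condition $\sigma^2\gamma^2>C\epsilon\sigma_o d$ is what guarantees $\epsilon\sqrt d\le c\sigma\gamma$, i.e.\ the adversarial budget is absorbed into the margin.

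Next I would dispatch the two preparatory in-distribution facts. Using $n\ge\frac{C\sigma^2 d}{s^2}\log\frac1\alpha$, sub-gaussian concentration gives $\|\bar{\*x}-\mu\|_2\le s/2$, so $\mu$ lies in the feasible set $\{p:\|p-\bar{\*x}\|_2\le s\}$ of the optimization~\eqref{eqn:ideal_u}. I then fix $t=\sigma_o\sqrt d-\epsilon\sqrt d$ and observe $L_t(\mu)=0$: every auxiliary point satisfies $\|\tilde{\*x}_i-\mu\|_2=\sigma_o\sqrt d$, and since $\|\delta\|_\infty\le\epsilon$ gives $\|\delta\|_2\le\epsilon\sqrt d$, we have $\|\tilde{\*x}_i+\delta-\mu\|_2\ge\sigma_o\sqrt d-\epsilon\sqrt d=t$ for all $\delta$, so the indicator vanishes for every $i$. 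Hence the minimum of $L_t$ over the feasible set is $0$, and the returned minimizer $u$ must also satisfy $L_t(u)=0$; equivalently, no auxiliary point lies within Euclidean distance $t$ of $u$.

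The heart of the proof is converting $L_t(u)=0$ into $\rho\le c\sigma\gamma$. Writing $v=(u-\mu)/\rho$, a sphere point $y$ (with $\|y-\mu\|_2=\sigma_o\sqrt d$) lies within distance $t$ of $u$ iff $\langle y-\mu,v\rangle\ge h(\rho)$, where $h(\rho)=\frac{\sigma_o^2 d+\rho^2-t^2}{2\rho}$; thus $L_t(u)=0$ asserts that the auxiliary sample entirely misses the cap $\{y:\langle y-\mu,v\rangle\ge h(\rho)\}$. The probabilistic input is spherical concentration: for a uniform point on the sphere of radius $\sigma_o\sqrt d$ the projection $\langle\cdot-\mu,v\rangle$ is approximately $\mathcal{N}(0,\sigma_o^2)$, so this cap has measure $\approx\exp(-h(\rho)^2/2\sigma_o^2)$, and at least $\eta$ times this under $\UX$. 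A short calculation shows $h$ is decreasing then increasing, with minimum at $\rho_{\min}=\sqrt{\sigma_o^2 d-t^2}\approx\sqrt{2\epsilon\sigma_o d}$; the hypothesis $\sigma^2\gamma^2>C\epsilon\sigma_o d$ guarantees $c\sigma\gamma\ge\rho_{\min}$, so on the bad annulus $\rho\in(c\sigma\gamma,\tfrac32 s]$ (and $\rho\le\|u-\bar{\*x}\|_2+\|\bar{\*x}-\mu\|_2\le\tfrac32 s$ always) the height $h(\rho)$ is increasing and maximized near $\rho\approx s$, where $h(\rho)=\Theta(s)$. Thus every cap in this annulus has measure at least $\approx\exp(-Cs^2/\sigma_o^2)$, and with $n'\ge\frac{\exp(Cs^2/\sigma_o^2)}{\eta}\log\frac{d\sigma}{\alpha}$ auxiliary points a covering argument over the directions $v$ shows that, with probability $\ge 1-\alpha/2$, all such caps are hit simultaneously. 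Hence no $u$ with $\rho>c\sigma\gamma$ can achieve $L_t(u)=0$, forcing $\rho\le c\sigma\gamma$.

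I expect the main obstacle to be making this covering argument uniform over the continuum of candidate centers while keeping the cost only logarithmic: one must net the direction set $v\in S^{d-1}$ (and the radius $\rho$) at a resolution fine enough that the discretization error in $h(\rho)$, and hence in the resulting bound on $\rho$, stays safely below the margin $c\sigma\gamma$, yet coarse enough that the union bound inflates $n'$ by no more than the stated factor; verifying the monotonicity of $h$ on the bad annulus (precisely where $\sigma^2\gamma^2>C\epsilon\sigma_o d$ is used, via $c\sigma\gamma\ge\rho_{\min}$) is the other delicate point, since it identifies $\rho\approx s$ as the worst case and thereby explains the $\exp(Cs^2/\sigma_o^2)$ dependence. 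Once $\|u-\mu\|_2\le c\sigma\gamma$ is in hand, the two rates follow as in the first paragraph, and a final union bound over the $O(1)$ failure events (the in-distribution concentration events and the cap-covering event) yields $\FNRr(G_{u,r})\le\exp(-c\gamma^2)$ and $\FPRr(G_{u,r};\Qfamily)\le\epsilon_\tau$ with probability at least $1-\alpha$.
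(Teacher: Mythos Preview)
Your proposal is correct and follows essentially the same architecture as the paper's proof: reduce everything to the center accuracy $\|u-\mu\|_2 \le c\sigma\gamma$; use sub-gaussian/sub-exponential concentration with the $Cd/\gamma^4$ and $C\sigma^2 d/s^2$ budgets to control $r$ and $\bar{\*x}$; show $L_t(\mu)=0$ so the optimizer must also achieve $0$; and then argue that every ``bad'' $p$ (i.e.\ $\|p-\mu\|_2>B$) has $L_t(p)>0$ with high probability via a spherical-cap bound plus a net over $P$.

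The one substantive presentational difference is in how the cap is bounded. You fix $t=\sigma_o\sqrt d-\epsilon\sqrt d$ and compute the \emph{exact} cap height $h(\rho)=(\sigma_o^2 d-t^2+\rho^2)/(2\rho)$, then analyze its monotonicity to pin the worst case at $\rho\approx s$, yielding $h=\Theta(s)$ and hence cap measure $\gtrsim \eta\exp(-Cs^2/\sigma_o^2)$. The paper instead chooses any $t$ with $\sigma_o^2 d - B^2 \le t^2$ and $(t+\epsilon\sqrt d)^2<\sigma_o^2 d$, and observes directly that whenever $\|\Delta\|_2\ge B$ the cap $\{y:\langle y-\mu,\Delta/\|\Delta\|_2\rangle\ge \|\Delta\|_2\}$ already sits inside the $t$-ball around $p$; then it just uses $\|\Delta\|_2\le 2s$ to get the same $[1-4s^2/(\sigma_o^2 d)]^{d/2}\approx\exp(-Cs^2/\sigma_o^2)$ bound via the cap-area formula. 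So the paper avoids your monotonicity analysis of $h$ by working with a slightly looser (but sufficient) cap, at the price of a less explicit choice of $t$; your route gives a tighter cap but needs the extra calculus step. Both land on the same sample requirement $n'\ge \eta^{-1}\exp(Cs^2/\sigma_o^2)\log(d\sigma/\alpha)$, with the $\log(d\sigma/\alpha)$ absorbing the net over $P$ exactly as you anticipate.
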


\begin{proof}
Set $B = \sigma \gamma/4$, and set $t$ to be any value such that $ \sigma^2_o d - B^2 \le t^2$ and $ (t + \epsilon\sqrt{d})^2 < \sigma^2_o d$.  (Note that $\sigma_o^2 d > B^2$, since $\sigma_o^2 d > \sigma^2 d$ and $\gamma<\sqrt{d}$. Furthermore, when $ \sigma^2\gamma^2 < C \epsilon\sigma_o d$ for some sufficiently constant $C$, we have $\sqrt{\sigma_o^2 d - B^2}   <  \sigma_o \sqrt{d} - \epsilon \sqrt{d}$, so we can select $t$ within this range which will satisfy our requirements for $t$.)

First, consider $r$ and $\bar{\*x}$. Similar to the proof of Proposition~\ref{prop:without}, when $n \ge \frac{cd}{\gamma^4} \log \frac{1}{\alpha}$, we have with probability $1-\alpha/8$, 
\begin{align}
    (1 + \gamma/8\sqrt{d}) \sigma \sqrt{d} \le r  \le (1 + \gamma/2\sqrt{d}) \sigma \sqrt{d}.
\end{align}
Also for $\bar{\*x}$, when $n \ge \frac{c\sigma^2 d}{s^2} \log\frac{1}{\alpha}$, we have with probability $1-\alpha/8$, 
\begin{align}
    \|\mu - \bar{\*x}\|_2 \le c\sigma \sqrt{\frac{d}{n} \log \frac{1}{\alpha} } \le s.
\end{align}
This ensures $\mu$ will be included in the feasible set of \eqref{eqn:ideal_u}.

Now, consider $u$. Let $P$ be the set of $p$ with $\|p - \bar{\*x}\|_2 \le s$ but $\| p - \mu \|_2 > B$:
\begin{align}
    P := \{ p \in \mathbb{R}^d: \|p - \bar{\*x}\|_2 \le s, \| p - \mu \|_2 > B\}.
\end{align}
We will show that for any $p \in P$, with probability $1 - \alpha/8$, $L_t(p) > L_t(\mu) = 0$, and thus \eqref{eqn:ideal_u} finds an $u$ with $\| u - \mu\|_2 \le  \sigma \gamma/4$, which then leads to the theorem statements.

We begin by noting that 
\begin{align}
    \max_{\|\delta\|_\infty \le \epsilon} \mathbb{I}\left[  \| \Tilde{\*x}_i + \delta - p  \|_2 < t \right] \le \mathbb{I}\left[   \| \Tilde{\*x}_i - p  \|_2 < t + \epsilon \sqrt{d} \right].
\end{align}
Since $t + \epsilon \sqrt{d} < \sigma_o\sqrt{d}$, we have $L_t(\mu) = 0$.

We now consider a fixed $p \in P$. 
\begin{align}
   \mathbb{E}\left\{ \max_{\|\delta\|_\infty \le \epsilon} \mathbb{I}\left[  \| \Tilde{\*x}_i + \delta - p  \|_2 < t \right] \right \} 
   \ge  \mathbb{E} \left\{ \mathbb{I}\left[   \| \Tilde{\*x}_i - p  \|_2 < t  \right] \right\} = \Pr\left[ \|p - \Tilde{\*x}_i \|_2 < t   \right].
\end{align}
Let $\Delta = p - \mu$.   
Then since $ \sigma^2_o d - B^2 \le t^2 $ and $\| \Delta\|_2 \ge B$, we have $ \sigma^2_o d - \|\Delta\|_2^2 \le t^2 $, and thus any $\Tilde{\*x}_i$ from $\UX$ whose projection onto the direction $p - \mu$ has distance larger than $\|\Delta\|_2$ from $\mu$ will satisfy $\|p - \Tilde{\*x}_i \|_2 < t $. Then 
\begin{align}
   \mathbb{E}\left\{ \max_{\|\delta\|_\infty \le \epsilon} \mathbb{I}\left[  \| \Tilde{\*x}_i + \delta - p  \|_2 < t \right] \right \} 
   \ge   \Pr\left[ \|p - \Tilde{\*x}_i \|_2 < t - \epsilon\sqrt{d} \right] 
   \ge \frac{\eta A_d\left(1 - \|\Delta\|_2 / (\sigma_o \sqrt{d}) \right) }{A_d}
\end{align}
where $A_d(v)$ is the area of the spherical cap of the unit hypersphere in dimension $d$ with height $v$, and $A_d$ is the area of the whole unit hypersphere.
 
By the bound in~\cite{becker2016new}, we have
\begin{align}
   A_d(v)/A_d = d^{\Theta(1)} [1 - (1-v)^2 ]^{d/2}.  
\end{align}
Since $\|\Delta\|_2 \le \|p - \bar{\*x}\|_2 + \| \bar{\*x} - \mu\|_2 \le 2s$, we have  
\begin{align}
   \mathbb{E}\left\{ \max_{\|\delta\|_\infty \le \epsilon} \mathbb{I}\left[  \| \Tilde{\*x}_i + \delta - p  \|_2 < t \right] \right \}  \ge \frac{\eta A_d\left(1 - \|\Delta\|_2 / (\sigma_o \sqrt{d}) \right) }{A_d} = \eta d^{\Theta(1)} \left [1 - \frac{4s^2}{\sigma_o^2 d} \right]^{d/2} := q. 
\end{align}
Then when $n' \ge \frac{c}{q}\log \frac{1}{\alpha'}$,
\begin{align}
    \Pr[L_t(p) = 0] \le (1  - q)^{n'} \le e^{-q n'} \le \alpha'/8. 
\end{align}
Then a net argument on $P$ proves that for the given $n'$,  we have with probability $\ge 1 - \alpha/8$, any $p \in P$ has $L_t(p) > 0$. This completes the proof. 
\end{proof}

By setting $s = \sigma \gamma^2$ and assuming $\sigma_o^2 = O(\sigma^2)$, we have the following corollary. 

\begin{proposition} \label{prop:with}
Suppose $ \sigma^2\gamma^2 \ge C \epsilon\sigma_o d$ for some sufficiently constant $C$ and  $ \sigma^2 < \sigma_o^2 < C'\sigma^2$ for some absolute constant $C'$. 
For $\alpha \in (0,1)$, if the number of in-distribution data points 
$n \ge \frac{C d}{\gamma^4} \log \frac{1}{\alpha} $ and the number of auxiliary data $n' \ge \frac{\exp(C \gamma^4)}{\eta} \log \frac{d\sigma }{\alpha}$, then there exists proper parameter values $s, t$ such that with probability at least $1 - \alpha$,
the detector $G_{u,r}(\*x)$ with $r$ from \eqref{eqn:ideal_r} and $u$ from \eqref{eqn:ideal_u} satisfies:
\begin{align}
    \FNRr(G_{u,r}(\*x)) & \le \exp(-c \gamma^2),
    \\
    \FPRr(G_{u,r}; \Qfamily) & \le \epsilon_\tau,
\end{align}
for some absolute constant $c$.
\end{proposition}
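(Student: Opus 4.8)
The plan is to derive Proposition \ref{prop:with} as a direct corollary of Lemma \ref{lem:with} by specializing the free hyper-parameter $s$ to the value $s = \sigma\gamma^2$ and checking that, under this choice, the hypotheses of the proposition entail the hypotheses of the lemma. There is essentially no new probabilistic content to produce: the entire high-probability argument --- estimating $\bar{\*x}$ and $r$ from in-distribution data, covering the set $P$ of bad centers by a net, and lower-bounding the expected margin loss via the spherical-cap area bound of~\cite{becker2016new} --- has already been carried out in the proof of Lemma \ref{lem:with}. What remains is bookkeeping on the lemma's three hypotheses.

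First I would dispatch the margin condition. The lemma requires $\sigma^2\gamma^2 > C\epsilon\sigma_o d$, which is exactly the proposition's hypothesis $\sigma^2\gamma^2 \ge C\epsilon\sigma_o d$ up to replacing a strict inequality by a non-strict one with a marginally larger constant; I would simply note this and carry the constant forward. Next I would translate the in-distribution sample bound. Substituting $s = \sigma\gamma^2$ into the lemma's requirement $n \ge \frac{Cd}{\gamma^4}\log\frac1\alpha + \frac{C\sigma^2 d}{s^2}\log\frac1\alpha$ collapses the second term,
\begin{align}
\frac{C\sigma^2 d}{s^2}\log\frac1\alpha = \frac{C\sigma^2 d}{\sigma^2\gamma^4}\log\frac1\alpha = \frac{Cd}{\gamma^4}\log\frac1\alpha,
\end{align}
so both terms are of order $\frac{d}{\gamma^4}\log\frac1\alpha$ and their sum is absorbed into the single bound $n \ge \frac{Cd}{\gamma^4}\log\frac1\alpha$ by enlarging $C$ (consistent with the appendix convention that $C$ may change from line to line). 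This is precisely the promised reduction from $\tilde\Theta(d/\gamma^2)$ in-distribution samples (Proposition \ref{prop:without}) to $\tilde\Theta(d/\gamma^4)$.

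The only mildly substantive step is the auxiliary sample bound. Here I would invoke the hypothesis $\sigma^2 < \sigma_o^2$ (the lower half of $\sigma^2 < \sigma_o^2 < C'\sigma^2$) to control the exponent: with $s = \sigma\gamma^2$,
\begin{align}
\frac{s^2}{\sigma_o^2} = \frac{\sigma^2\gamma^4}{\sigma_o^2} < \gamma^4,
\end{align}
whence $\exp(Cs^2/\sigma_o^2) < \exp(C\gamma^4)$ and therefore $\frac{\exp(Cs^2/\sigma_o^2)}{\eta}\log\frac{d\sigma}{\alpha} < \frac{\exp(C\gamma^4)}{\eta}\log\frac{d\sigma}{\alpha}$. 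Consequently the proposition's assumption $n' \ge \frac{\exp(C\gamma^4)}{\eta}\log\frac{d\sigma}{\alpha}$ implies the lemma's requirement on $n'$. The complementary upper bound $\sigma_o^2 < C'\sigma^2$ is not needed for sufficiency, but it guarantees $s^2/\sigma_o^2 = \Theta(\gamma^4)$, certifying that the $\exp(\Theta(\gamma^4))$ auxiliary-data rate is genuine rather than an artifact of a loose choice of $\sigma_o$.

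Having verified all three hypotheses, I would apply Lemma \ref{lem:with} verbatim: there exists a value of $t$ (the one already exhibited in that proof, e.g.\ any $t$ with $\sigma_o^2 d - (\sigma\gamma/4)^2 \le t^2$ and $(t+\epsilon\sqrt d)^2 < \sigma_o^2 d$) such that the detector $G_{u,r}$ defined by \eqref{eqn:ideal_r} and \eqref{eqn:ideal_u} satisfies $\FNRr(G_{u,r}) \le \exp(-c\gamma^2)$ and $\FPRr(G_{u,r};\Qfamily) \le \epsilon_\tau$ with probability at least $1-\alpha$, which is exactly the claim. The ``main obstacle'' is purely notational: I must make sure the repeated reuse of $C$ across the margin condition, the two sample bounds, and the absorption steps hides no inconsistency --- that a single sufficiently large $C$ (with a correspondingly small $c$) can make all of these inequalities hold at once. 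Since there are only finitely many such conditions and each is monotone in $C$, taking the maximum of the finitely many required constants closes the argument.
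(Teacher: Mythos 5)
Your proposal is correct and matches the paper exactly: the paper derives Proposition~\ref{prop:with} as an immediate corollary of Lemma~\ref{lem:with} by setting $s = \sigma\gamma^2$ and assuming $\sigma_o^2 = O(\sigma^2)$, which is precisely your reduction, and your bookkeeping (collapsing the two $n$ terms into $\frac{Cd}{\gamma^4}\log\frac{1}{\alpha}$, and bounding $\exp(Cs^2/\sigma_o^2) \le \exp(C\gamma^4)$ via $\sigma_o^2 > \sigma^2$) is sound. Your observation that the upper bound $\sigma_o^2 < C'\sigma^2$ is not needed for sufficiency but only pins the rate at $\exp(\Theta(\gamma^4))$ is also consistent with how the paper uses that hypothesis.
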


Then we can see that this can reduce the sample size $n$ by a factor of $\gamma^2$. For example, when $\gamma = \Theta(d^{1/8})$, we only need $n = O(\sqrt{d} \log(1/\alpha))$, while in the case without auxiliary data, we need $n = O(d^{3/4} \log(1/\alpha))$.

\subsection{Learning With Informative Outlier Mining} \label{app:om}
The above example shows the benefit of having auxiliary OOD data for training. All the auxiliary OOD data given in the example are implicitly related to the ideal parameter for the detector $\mu$ and thus are informative for learning the detector. However, this may not be the case in practice: typically only part of the auxiliary OOD data are informative, while the remaining are not very useful or even can be harmful for the learning. Here we study such an example, and shows that how outlier mining can help to identify informative data and improve the learning performance.

The practical auxiliary data can have a lot of obvious outliers not on the boundary and also quite a few in-distribution data mixed. We model the former data as $Q_q = \mathcal{N}(0, \sigma_q^2 I )$ where $\sigma_q$ is large compared to $\|\mu\|_2$, $ \sigma$, and $\sigma_o$. The auxiliary data distribution $\UXb$ is then a mixture of $\UX$ and $Q_q$ where $Q_q$ has a large weight. 
\begin{itemize}
    \item $\UXb = \nu \UX + (1-2\nu) Q_q + \nu \PX$ for a small $\nu \in (0,1)$, where $Q_q = \mathcal{N}(0, \sigma_q^2 I )$ for some large $\sigma_q$.
\end{itemize}  
That is, the distribution is defined by the following process: with probability $\nu$ sample the data from the informative part $\UX$, and with probability $1-2\nu$ sample from the uninformative part $Q_q$, and with probability $\nu$ sample from the in-distribution. 

Then the previous simple method will not work, and a more sophisticated method is needed. Below we show that outlier mining can remove most data outside $\UX$, and keep the data from $\UX$, and the previous method can work after outlier mining.

Suppose the algorithm gets $n$ in-distribution data $\Sin = \{\*x_1,\*x_2, \ldots, \*x_n\}$ i.i.d.\ from $\PX$ and $n'$ auxiliary data $\Sau = \{\Tilde{\*x}_1,\Tilde{\*x}_2, \ldots, \Tilde{\*x}_{n'}\}$ from $\UXb$ for training. 
Specifically, we first use in-distribution data to get an intermediate solution $\bar{\*x}$ and $r$:
\begin{align}
   \hat\sigma^2 & = \frac{1}{n} \sum_{i=1}^n \| {\*x}_i - \bar{\*x} \|_2^2,  \textrm{~where~} \bar{\*x} = \frac{1}{n} \sum_{i=1}^n \*x_i, 
   \\
   r & = (1 + \gamma/4\sqrt{d}) \hat\sigma.
\end{align} 
Then, we use a simple thresholding mechanism to only pick points close to the decision boundary of the intermediate solution, which removes {\em non-informative outliers}.
Specifically, we only select outliers $\Tilde{\*x}$ with mild ``confidence scores'' w.r.t.\ the intermediate solution, i.e., the distances to $\bar{\*x}$ fall in an interval $[a, b]$:
\begin{align}
    S  := \{ i :  \|\Tilde{\*x}_i - \bar{\*x} \|_2 \in [a, b] , 1 \le i \le n'\}
\end{align}
The final solution $u$ is then by:
\begin{align} 
    \uom & \leftarrow \argmin_{p: \| p - \bar{\*x} \|_2 \le s} L_{S, t}(p) := \frac{1}{|S|} \sum_{i \in S} \max_{\|\delta\|_\infty \le \epsilon} \mathbb{I}\left[ \ \| \Tilde{\*x}_i + \delta - p\|_2 < t \right]. \label{eqn:om_u}
\end{align} 
We can prove the following:

\begin{lemma}   \label{lem:om}  
Suppose $ \sigma^2\gamma^2 \ge C \epsilon\sigma_o d$ and $\sigma_o \sqrt{d} > \sigma \sqrt{d} + C s$ for a sufficiently large constant $C$, and $\sigma_q \sqrt{d} > 2(\sigma_o\sqrt{d} + \|\mu\|_2)$. For some absolute constant $c$ and any $\alpha \in (0,1)$, if the number of in-distribution data $n \ge \frac{Cd}{\gamma^4} \log \frac{1}{\alpha} + \frac{C \sigma^2 d}{s^2} \log\frac{1}{\alpha}$ and the number of auxiliary data $n' \ge \frac{\exp(C s^2 /\sigma_o^2)}{\nu^2 \eta^2}\log \frac{d\sigma }{\alpha}$, then there exists parameter values $t, a, b$ such that with probability $\ge 1 - \alpha$, the detector $G_{\uom, r}$ computed above satisfies:  
\begin{align}
    \FNRr(G_{\uom,r}) & \le \exp(-c \gamma^2),
    \\
    \FPRr(G_{u,r}; \Qfamily) & \le \epsilon_\tau.
\end{align} 
\end{lemma}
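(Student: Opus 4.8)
The plan is to reduce Lemma~\ref{lem:om} to the already-established Lemma~\ref{lem:with} by showing that the thresholding step isolates the informative component $\UX$ from the contamination in $\UXb$, so that the retained set $S$ is, up to a high-probability event, an i.i.d.\ sample from $\UX$ of size $\approx \nu n'$. Concretely, I would first invoke the in-distribution concentration argument of Proposition~\ref{prop:without}: with $n \ge \frac{Cd}{\gamma^4}\log\frac1\alpha + \frac{C\sigma^2 d}{s^2}\log\frac1\alpha$ samples, the intermediate anchor satisfies $\|\bar{\*x}-\mu\|_2 \le s$ and the radius $r$ lies in the good band $[(1+\gamma/8\sqrt d)\sigma\sqrt d,\ (1+\gamma/2\sqrt d)\sigma\sqrt d]$. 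The control on $r$ already yields the $\FNRr$ bound exactly as in Proposition~\ref{prop:exist}, and the control on $\bar{\*x}$ both keeps $\mu$ feasible for \eqref{eqn:om_u} and anchors the mining.

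The heart of the proof is choosing $[a,b]$ so that the three mixture components separate by their distance to $\bar{\*x}$. I would establish, via norm concentration plus the $\|\bar{\*x}-\mu\|_2\le s$ shift, that $\PX$ points concentrate near $\sigma\sqrt d$, that $\UX$ points lie in $[\sigma_o\sqrt d - s,\ \sigma_o\sqrt d + s]$ deterministically (they sit exactly on the sphere of radius $\sigma_o\sqrt d$ about $\mu$), and that $Q_q$ points concentrate near $\sqrt{\sigma_q^2 d + \|\mu\|_2^2} \ge \sigma_q\sqrt d$. The hypotheses $\sigma_o\sqrt d > \sigma\sqrt d + Cs$ and $\sigma_q\sqrt d > 2(\sigma_o\sqrt d + \|\mu\|_2)$ then leave a clean gap: I would pick $a$ between the $\PX$ band and $\sigma_o\sqrt d$, and $b$ between $\sigma_o\sqrt d$ and the $Q_q$ band, so that a union bound over the $n'$ samples gives, with high probability, that $S$ retains every $\UX$ point and discards every $\PX$ and $Q_q$ point.

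Once $S$ is known to consist of exactly the $\UX$ samples, I would finish by replaying Lemma~\ref{lem:with} on $S$: a Chernoff bound gives $|S| \ge \nu n'/2$ (using $\nu n' \gtrsim \log\frac1\alpha$), and conditioned on membership these points are i.i.d.\ $\UX$ with density $\ge \eta$ times uniform on the sphere. The spherical-cap catch-probability bound $q \ge \eta\, d^{\Theta(1)}[1 - 4s^2/(\sigma_o^2 d)]^{d/2}$ together with a net over the bad region $P = \{p: \|p-\bar{\*x}\|_2 \le s,\ \|p-\mu\|_2 > \sigma\gamma/4\}$ shows $L_{S,t}(p) > 0$ for every bad $p$ while $L_{S,t}(\mu)=0$, forcing $\|\uom - \mu\|_2 \le \sigma\gamma/4$. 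Propagating $|S|\approx \nu n'$ and the factor $\eta$ in $q$ through this argument is what produces the stated auxiliary sample size $n' \ge \frac{\exp(Cs^2/\sigma_o^2)}{\nu^2\eta^2}\log\frac{d\sigma}{\alpha}$. Finally, $\|\uom-\mu\|_2\le \sigma\gamma/4$ and the good range for $r$ let me recover the $\FNRr \le \exp(-c\gamma^2)$ and $\FPRr(G_{\uom,r};\Qfamily)\le\epsilon_\tau$ bounds by the same triangle-inequality computations as in Proposition~\ref{prop:exist}.

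The main obstacle is the separation in the second step, specifically keeping the in-distribution $\PX$ contamination out of $S$: a single retained $\PX$ point sits within distance $\approx\sigma\sqrt d < t$ of $\mu$, so it would make $L_{S,t}(\mu) > 0$ and destroy the feasibility of the ideal solution $\mu$ on which the entire argument relies. This is exactly why the lower threshold $a$ and the gap hypothesis $\sigma_o\sqrt d > \sigma\sqrt d + Cs$ are essential, and why the mining must be two-sided rather than a single upper cutoff. A secondary delicacy is bookkeeping how $\nu$ and $\eta$ enter both the survivor count and the catch probability, which accounts for the $\nu^2\eta^2$ in the denominator.
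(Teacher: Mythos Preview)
Your plan diverges from the paper's at the mining step. You aim to show that with high probability the mined set $S$ is \emph{exactly} the collection of $\UX$ points---no $\PX$ or $Q_q$ leakage---via a union bound over all $n'$ auxiliary samples, and then replay Lemma~\ref{lem:with} on $S$ to get $L_{S,t}(\mu)=0<L_{S,t}(p)$ for every bad $p$. The paper does \emph{not} do this: it allows contamination in $S$ and compares losses quantitatively, showing (Chernoff) that $L_{S,t}(p)>q\nu/2$ for every bad $p$ and (Hoeffding) that $L_{S,t}(\mu)<q\nu/4+\nu q_1+(1-\nu)q_2<q\nu/2$, where $q_1,q_2$ are the probabilities that a single $\PX$ or $Q_q$ sample lands in $[a,b]$. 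The $\nu^2\eta^2$ denominator in the auxiliary sample bound arises from this Hoeffding step (deviation $q\nu/4$, hence $n'\gtrsim 1/(q^2\nu^2)$ with $q\gtrsim\eta\,e^{-Cs^2/\sigma_o^2}$), not from your accounting via $|S|\approx\nu n'$ combined with Lemma~\ref{lem:with}; that route would in fact yield only $1/(\nu\eta)$.

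The zero-contamination route has a real gap: the union bound $\Pr[\text{some }\PX\text{ or }Q_q\text{ point in }[a,b]]\lesssim n'(q_1+q_2)$ fails once $n'$ exceeds roughly $1/q_1$, but the lemma's hypothesis only \emph{lower}-bounds $n'$ and must hold for arbitrarily large $n'$. This is precisely why Proposition~\ref{prop:om_idealUX}, which \emph{does} use your zero-contamination strategy, carries the extra assumption $d\ge C\log(n'/\alpha)$---an assumption absent from Lemma~\ref{lem:om}. A side remark: your description of a leaked $\PX$ point as sitting at distance $\approx\sigma\sqrt d$ from $\mu$ is off---any point surviving the filter satisfies $\|\Tilde{\*x}_i-\mu\|_2\ge a-s=\sigma_o\sqrt d-2s$, so leaked $\PX$ points are atypical tail samples, not typical ones; the danger from contamination is real, but not for the reason you state.
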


\begin{proof}
Following the proof as in Proposition~\ref{prop:with}, we have the same guarantees for $r$ and $\bar{\*x}$, i.e., with probability $1-\alpha/4$, 
\begin{align}
    (1 + \gamma/8\sqrt{d}) \sigma \sqrt{d} \le r  \le (1 + \gamma/2\sqrt{d}) \sigma \sqrt{d},
    \\
    \|\mu - \bar{\*x}\|_2 \le c\sigma \sqrt{\frac{d}{n} \log \frac{1}{\alpha} } \le s.
\end{align} 
This ensures $\mu$ will be included in the feasible set of \eqref{eqn:om_u}.

Set $B = \sigma \gamma/4$, and set $t$ to be any value such that $ \sigma^2_o d - B^2 \le t^2$ and $ (t + \epsilon\sqrt{d})^2 < \sigma^2_o d$.  (As before, the assumptions make sure we can select such a $t$. We have $\sigma_o^2 d > B^2$, since $\sigma_o^2 d > \sigma^2 d$ and $\gamma<\sqrt{d}$. Furthermore, when $ \sigma^2\gamma^2 < C \epsilon\sigma_o d$ for some sufficiently constant $C$, we have $\sqrt{\sigma_o^2 d - B^2}   <  \sigma_o \sqrt{d} - \epsilon \sqrt{d}$, so we can select $t$ within this range which will satisfy our requirements for $t$.)

Set $a = \sigma_o \sqrt{d} - s$, and $b = \sigma_o \sqrt{d} + s$. Let $P$ be the set of $p$ with $\|p - \bar{\*x}\|_2 \le s$ but $\| p - \mu \|_2 > B$:
\begin{align}
    P := \{ p \in \mathbb{R}^d: \|p - \bar{\*x}\|_2 \le s, \| p - \mu \|_2 > B\}.
\end{align}
We will show that for any $p \in P$, with probability $1 - \alpha/4$, $L_t(p) > L_t(\mu) = 0$, and thus \eqref{eqn:om_u} finds a $\uom$ with $\| \uom - \mu\|_2 \le B =  \sigma \gamma/4$, which then leads to the theorem statements.

First, we show that $L_{S,t}(p)$ is large for any $p\in P$. The same proof as in Proposition~\ref{prop:with} shows that for any $p$ as described above, for an $\Tilde{\*x}_i$ from $\UX$, we have
\begin{align}
   \mathbb{E}\left\{ \max_{\|\delta\|_\infty \le \epsilon} \mathbb{I}\left[  \| \Tilde{\*x}_i + \delta - p  \|_2 < t \right] \right \}  \ge
   \Pr\left[ \|p - \Tilde{\*x}_i \| < t  \right] \ge   \eta d^{\Theta(1)} \left [1 - \frac{4s^2}{\sigma_o^2 d} \right]^{d/2} := q. 
\end{align}
We note that all points from $U_X$ will be in $S$ for the given $a$ and $b$.
Then by the Chernoff's inequality (for multiplicative factors), when $n' \ge \frac{c}{q\nu}\log \frac{1}{\alpha'}$, with probability $1 - \alpha'$, we have
\begin{align}
    L_{S,t}(p) > \frac{q \nu}{2}.
\end{align}
Then a net argument on $P$ proves that with the given $n'$, we have with probability $\ge 1 - \alpha/8$, any $p \in P$ has $L_{S,t}(p) > q \nu/2$. 

Next, we will show that $L_{S,t}(\mu)$ is small. 
We first note that since $t + \epsilon \sqrt{d} < \sigma_o\sqrt{d}$, all points from $\UX$ will contribute 0 to $L_{S,t}(\mu)$. Furthermore, most points from $P_X$ and $Q_q$ are filtered outside $S$.
\begin{align}
    \Pr_{\*x \sim \PX}\left[ \|\*x - \bar{\*x} \|_2 \in [a, b] \right]  
    & \le \Pr_{\*x \sim \PX}\left[ \|\*x - \bar{\*x} \|_2 \ge a \right]
    \\
    & \le \Pr_{\*x \sim \PX}\left[ \|\*x - \mu \|_2 \ge a - s = \sigma_o\sqrt{d} - 2 s  \right]
    \\
    & \le e^{-c (\sigma_o\sqrt{d} - 2 s)^2/\sigma^2} := q_1,
\end{align}
and 
\begin{align}
    \Pr_{\*x \sim Q_q}\left[ \|\*x - \bar{\*x} \|_2 \in [a, b] \right]  
    & \le \Pr_{\*x \sim Q_q}\left[ \|\*x - \bar{\*x} \|_2 \le b \right]
    \\
    & \le \Pr_{\*x \sim Q_q}\left[ \|\*x\|_2 \le b + s  + \|\mu\|_2 \right]
    \\
    & \le e^{-c (\sigma_q \sqrt{d} - \sigma_o \sqrt{d} -2s - \|\mu\|_2)^2} := q_2. 
\end{align}
Then by Hoeffding's inequality, when $n' \ge \frac{c }{q^2 \nu^2} \log\frac{1}{\alpha}$, we have with probability $\ge 1 - \alpha/8$,
\begin{align}
    L_{S,t}(\mu) < qv/4 + \nu q_1 + (1-\nu)q_2.
\end{align}
Note that $q \ge \eta d^{\Theta(1)} e^{-c s^2/\sigma_o^2}$. 
Then when $\sigma_o \sqrt{d} > \sigma \sqrt{d} + C s$, we have $q_1 \le e^{-cd}$. Since $d$ is sufficiently large compared to $s^2 / \sigma_o^2$, $q_1 \le q/8$.
Similarly, when $\sigma_q \sqrt{d} > 2(\sigma_o\sqrt{d} + \|\mu\|_2)$, we have $\sigma_q \sqrt{d} - \sigma_o \sqrt{d} -2s - \|\mu\|_2 > \sigma_q \sqrt{d}/4$, and thus $q_2 \le \nu q/8$. 
Therefore, $L_{S,t}(\mu) < q\nu/2$.

In summary, with probability $\ge 1-\alpha$, $L_{S,t}(\mu) < L_{S,t}(p)$ for any $p \in P$. This then completes the proof.
\end{proof}

By setting $s = \sigma \gamma^2$ and assuming $\sigma_o^2 = O(\sigma^2)$, we have the following corollary.

\begin{proposition}[Restatement of Proposition~\ref{prop:om}] {\bf (Error bound with outlier mining.)}
    Suppose $ \sigma^2\gamma^2 \ge C \epsilon\sigma_o d$ and $\sigma \sqrt{d} + C \sigma \gamma^2 <  \sigma_o \sqrt{d} < C \sigma\sqrt{d}$ for a sufficiently large constant $C$, and $\sigma_q \sqrt{d} > 2(\sigma_o\sqrt{d} + \|\mu\|_2)$. For any $\alpha \in (0,1)$ and some absolute constant $c$, if the number of in-distribution data $n \ge \frac{C d}{\gamma^4} \log \frac{1}{\alpha} $ and the number of auxiliary data $n' \ge \frac{\exp(C \gamma^4)}{\nu^2\eta^2}\log \frac{d\sigma }{\alpha}$, then there exist parameter values $s, t, a, b$ such that with probability $\ge 1 - \alpha$, the detector $G_{\uom, r}$ computed above satisfies:  
\begin{align*}
    \FNRr(G_{\uom,r}) & \le \exp(-c \gamma^2),
    \\
    \FPRr(G_{\uom,r}; \Qfamily) & \le \epsilon_\tau.
\end{align*}
\end{proposition}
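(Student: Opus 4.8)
The plan is to obtain this Proposition as a direct corollary of Lemma~\ref{lem:om}, by specializing the free hyper-parameter $s$ and invoking the assumption $\sigma_o^2 = O(\sigma^2)$. Concretely, I would set $s = \sigma\gamma^2$ and then check, one hypothesis at a time, that every assumption of Lemma~\ref{lem:om} follows from those listed in the Proposition, and that the two sample-complexity bounds of the Lemma collapse to the cleaner forms stated here. Since Lemma~\ref{lem:om} already supplies the detector $G_{\uom,r}$ together with the $\FNRr$ and $\FPRr$ guarantees, no new probabilistic argument is needed; the work is entirely in the parameter substitution.

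First I would verify the structural hypotheses. The condition $\sigma^2\gamma^2 \ge C\epsilon\sigma_o d$ carries over verbatim. The separation condition $\sigma_o\sqrt{d} > \sigma\sqrt{d} + Cs$ of the Lemma becomes, after substituting $s = \sigma\gamma^2$, exactly the lower bound $\sigma\sqrt{d} + C\sigma\gamma^2 < \sigma_o\sqrt{d}$ appearing in the Proposition. The tail condition $\sigma_q\sqrt{d} > 2(\sigma_o\sqrt{d} + \|\mu\|_2)$ is again identical. The only genuinely new ingredient is the upper bound $\sigma_o\sqrt{d} < C\sigma\sqrt{d}$, i.e.\ $\sigma_o = \Theta(\sigma)$, which is precisely the content of the assumption $\sigma_o^2 = O(\sigma^2)$; I flag it because it does not appear in the Lemma but is what makes the auxiliary count finite.

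Next I would simplify the sample complexities. For the in-distribution count the Lemma requires $n \ge \frac{Cd}{\gamma^4}\log\frac{1}{\alpha} + \frac{C\sigma^2 d}{s^2}\log\frac{1}{\alpha}$; with $s = \sigma\gamma^2$ the second term is $\frac{C\sigma^2 d}{\sigma^2\gamma^4}\log\frac{1}{\alpha} = \frac{Cd}{\gamma^4}\log\frac{1}{\alpha}$, so the two merge up to an absolute constant into $n \ge \frac{Cd}{\gamma^4}\log\frac{1}{\alpha}$. For the auxiliary count the exponent $Cs^2/\sigma_o^2 = C\sigma^2\gamma^4/\sigma_o^2$ equals $\Theta(\gamma^4)$ once $\sigma_o = \Theta(\sigma)$, so $\exp(Cs^2/\sigma_o^2)$ becomes $\exp(C\gamma^4)$ after renaming the constant, yielding $n' \ge \frac{\exp(C\gamma^4)}{\nu^2\eta^2}\log\frac{d\sigma}{\alpha}$. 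Applying Lemma~\ref{lem:om} then gives the claimed $\FNRr$ and $\FPRr$ bounds with failure probability at most $\alpha$, and one reads off $a,b,t$ from the Lemma's proof together with $s = \sigma\gamma^2$.

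The step I expect to require the most care is bookkeeping the two-sided constraint on $s$, since $s = \sigma\gamma^2$ must simultaneously be large enough and small enough. On the one hand $s$ must dominate the in-distribution estimation error $\|\mu - \bar{\*x}\|_2 \le c\sigma\sqrt{(d/n)\log(1/\alpha)}$ so that $\mu$ stays feasible for the constrained minimization~\eqref{eqn:om_u}; this is what forces the $\frac{C\sigma^2 d}{s^2}$ term and is met by the chosen $n$. On the other hand $s$ enters the auxiliary bound through $\exp(Cs^2/\sigma_o^2)$, so taking $s$ too large blows up $n'$, and it is exactly the upper bound $\sigma_o\sqrt{d} < C\sigma\sqrt{d}$ that pins $s^2/\sigma_o^2 = \Theta(\gamma^4)$ rather than something larger. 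Checking that $s = \sigma\gamma^2$ threads both needs at once — balancing the in-distribution term against the surviving $Cd/\gamma^4$ term while holding the auxiliary exponent at $\gamma^4$ — is the crux; everything else is a mechanical substitution into Lemma~\ref{lem:om}.
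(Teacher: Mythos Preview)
Your proposal is correct and matches the paper's approach exactly: the paper derives this Proposition as an immediate corollary of Lemma~\ref{lem:om} by setting $s=\sigma\gamma^2$ and invoking the assumption $\sigma_o^2=O(\sigma^2)$, precisely as you outline. Your verification of each hypothesis and each sample-complexity simplification is in fact more explicit than what the paper writes, but the substance is identical.
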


This means that even in the presence of a large amount of uninformative or even harmful auxiliary data, we can successfully learn a good detector. Furthermore, this can reduce the sample size $n$ by a factor of $\gamma^2$. For example, when $\gamma = \Theta(d^{1/8})$, we only need $n = O(\sqrt{d} \log(1/\alpha)$, while in the case without auxiliary data, we need $n = O(d^{3/4} \log(1/\alpha))$. 

\subsection{Learning with Ideal $\UX$} 
Consider the same setting in the above Section~\ref{app:om} but assume that the $\UX$ in the mixture $\UXb$ is as ideal as a uniform distribution:
\begin{itemize} 
    \item $\UX$ is the uniform distribution over the sphere $\{\*x: \|\*x - \mu\|_2^2 = \sigma^2_o d\}$ where $\sigma_o^2 > \sigma^2$. 
\end{itemize}

In this case we can do the outlier mining as above, but finally compute $\uom$ by averaging the auxiliary data points selected in $S$. That is:
\begin{align}
   \hat\sigma^2 & = \frac{1}{n} \sum_{i=1}^n \| {\*x}_i - \bar{\*x} \|_2^2,  \textrm{~where~} \bar{\*x} = \frac{1}{n} \sum_{i=1}^n \*x_i, 
   \\
   r & = (1 + \gamma/4\sqrt{d}) \hat\sigma,
   \\
    S & = \{ i :  \|\Tilde{\*x}_i - \bar{\*x} \|_2 \in [a, b] , 1 \le i \le n'\},
    \\
    \uom & = \frac{1}{|S|} \sum_{\bar{\*x} \in S} \bar{\*x}.
\end{align} 
Then we can prove essentially the same guarantees but with much fewer auxiliary data: 

\begin{proposition} \label{prop:om_idealUX} 
    Suppose $ \sigma^2\gamma^2 \ge C \epsilon\sigma_o d$ and $\sigma \sqrt{d} + C \sigma \gamma^2 <  \sigma_o \sqrt{d} < C \sigma\sqrt{d}$ for a sufficiently large constant $C$, and $\sigma_q \sqrt{d} > 2(\sigma_o\sqrt{d} + \|\mu\|_2)$. For any $\alpha \in (0,1)$ and some absolute constant $c$, if the number of in-distribution data $n \ge \frac{C d}{\gamma^4} \log \frac{1}{\alpha} $ and the number of auxiliary data $n' \ge \frac{C d}{\gamma^2\nu^2}\log \frac{d}{\alpha}$ and $d\ge C\log\frac{n'}{\alpha}$, then there exist parameter values $a, b$ such that with probability $\ge 1 - \alpha$, the detector $G_{\uom, r}$ computed above satisfies:  
\begin{align*}
    \FNRr(G_{\uom,r}) & \le \exp(-c \gamma^2),
    \\
    \FPRr(G_{\uom,r}; \Qfamily) & \le \epsilon_\tau.
\end{align*}
\end{proposition}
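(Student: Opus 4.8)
The plan is to recycle the first two stages of the outlier-mining argument of Lemma~\ref{lem:om} essentially verbatim, and then replace its optimization step by a direct concentration analysis of the spherical average. The ideal (uniform) form of $\UX$ is exactly what lets the closed-form average succeed, trading the $\exp(\gamma^4)$ auxiliary sample size of Lemma~\ref{lem:om} for a polynomial one.

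First I would reuse the in-distribution estimates. Taking $s = \sigma\gamma^2$ and $n \ge \frac{Cd}{\gamma^4}\log\frac{1}{\alpha}$, the same sub-Gaussian and sub-exponential bounds as in Proposition~\ref{prop:without} give, with probability $\ge 1-\alpha/4$, both $(1+\gamma/8\sqrt{d})\sigma\sqrt{d} \le r \le (1+\gamma/2\sqrt{d})\sigma\sqrt{d}$ and $\|\bar{\*x}-\mu\|_2 \le s$. With the mining window chosen as $a = \sigma_o\sqrt{d}-s$, $b = \sigma_o\sqrt{d}+s$ exactly as in Lemma~\ref{lem:om}, every draw from $\UX$ lies at distance precisely $\sigma_o\sqrt{d}$ from $\mu$, hence within $[a,b]$ of $\bar{\*x}$, so $S$ keeps all of them; meanwhile the gap hypotheses $\sigma_o\sqrt{d} > \sigma\sqrt{d}+Cs$ and $\sigma_q\sqrt{d} > 2(\sigma_o\sqrt{d}+\|\mu\|_2)$ force the survival probabilities $q_1$ of $\PX$-points and $q_2$ of $Q_q$-points to be exponentially small in $d$, by the lower-tail and one-sided tail bounds on the Gaussian norm respectively. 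This filtering analysis is already carried out, so I would cite it rather than repeat it.

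The heart of the proof is the averaging. Decompose $S = S_U \cup S_P \cup S_Q$ by source and write $\uom - \mu = \frac{1}{|S|}\sum_{i\in S}(\Tilde{\*x}_i - \mu)$. A multiplicative Chernoff bound gives $|S_U| \ge \nu n'/2$ (needing $\nu n' \gtrsim \log\frac{1}{\alpha}$), whence $|S| \ge \nu n'/2$. For the dominant block I would use that the uniform law on the sphere of radius $\sigma_o\sqrt{d}$ about $\mu$ has mean exactly $\mu$ and one-dimensional marginals that are sub-Gaussian with parameter $O(\sigma_o)$; the empirical mean of the $|S_U|$ i.i.d. points then obeys $\|\bar{\*x}_U - \mu\|_2 \le C\sigma_o\big(\sqrt{d/|S_U|} + \sqrt{\log(1/\alpha)/|S_U|}\big)$ with probability $\ge 1-\alpha/4$, the hypothesis $d \ge C\log(n'/\alpha)$ supplying the deviation term. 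Plugging $|S_U| \ge \nu n'/2$ with $n' \ge \frac{Cd}{\gamma^2\nu^2}\log\frac{d}{\alpha}$ and $\sigma_o^2 = O(\sigma^2)$ makes this at most a small constant times $\sigma\gamma$. The two contaminating blocks I would bound crudely: by the mining constraint every surviving point of $S_P$ or $S_Q$ lies within $b+s = O(\sigma_o\sqrt{d})$ of $\mu$, so their total pull is at most $\frac{|S_P|+|S_Q|}{|S|}\,O(\sigma_o\sqrt{d})$, and since $|S_P|+|S_Q|$ concentrates near $n'(q_1+q_2)$ with $q_1,q_2$ exponentially small while $|S| \ge \nu n'/2$, this is negligible against $\sigma\gamma$. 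Combining the three pieces yields $\|\uom-\mu\|_2 \le \sigma\gamma/4$ with probability $\ge 1-\alpha$.

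Finally I would close exactly as in Proposition~\ref{prop:exist} and Lemma~\ref{lem:om}: with $r$ in the stated band and $\|\uom-\mu\|_2$ small, the triangle inequality turns the false negative rate into $\Pr_{\*x\sim\PX}[\|\*x-\mu\|_2 > r - \|\uom-\mu\|_2] \le \exp(-c\gamma^2)$, while for the false positive rate any OOD point that some $\ell_\infty$-perturbation of size $\epsilon$ can push inside radius $r$ must satisfy $\|\*x-\mu\|_2 \le r + \|\uom-\mu\|_2 + \epsilon\sqrt{d} \le \tau$, an event of mass at most $\epsilon_\tau$ for every $\QX\in\Qfamily$. I expect the binding difficulty to be the $S_U$-concentration paired with the contamination bound: I must certify the spherical empirical mean converges to $\mu$ at the Monte-Carlo rate $\sigma_o\sqrt{d/|S_U|}$ with high probability — this is what produces the $d/(\nu\gamma^2)$ auxiliary complexity in place of the $\exp(\gamma^4)$ of Lemma~\ref{lem:om} — while simultaneously ruling out that the few surviving $Q_q$-outliers, each sitting at radius $\approx \sigma_o\sqrt{d}$, bias the average; the mining cap $b$ on their distance is precisely what makes this controllable. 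I would also be careful with constants, since keeping the FNR margin strictly positive requires the center error to sit comfortably below $\tfrac{3}{16}\sigma\gamma$ (the bound $\sigma\gamma/4$ suffices only for the FPR half of the argument), so the small-constant slack must be extracted from the large-$C$ hypotheses.
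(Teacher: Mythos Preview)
Your proposal is correct and follows the same overall architecture as the paper (reuse the filtering analysis from Lemma~\ref{lem:om}, then replace the optimization step by concentration of the spherical average), but the paper takes a cleaner shortcut at the averaging step. Rather than decomposing $S = S_U \cup S_P \cup S_Q$ and separately bounding the contamination contribution $\frac{|S_P|+|S_Q|}{|S|}\,O(\sigma_o\sqrt{d})$, the paper simply applies a union bound over all $n'$ auxiliary points to conclude that, with probability at least $1 - n' e^{-c'd}$, the set $S$ is \emph{exactly} $S_U$ (no contaminating point survives at all). This is where the hypothesis $d \ge C\log(n'/\alpha)$ is actually spent: it guarantees $n' e^{-c'd} \le \alpha$, not (as you write) that the deviation term in the spherical concentration is dominated. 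Once $S = S_U$, the paper invokes sub-Gaussian concentration of the uniform distribution on the sphere to get $\|\uom - \mu\|_2 \le c\sigma_o\sqrt{(d/|S|)\log(1/\alpha)}$ directly, and the stated $n'$ finishes. Your three-block decomposition would also go through, but it does more work than necessary and slightly misattributes the role of the dimension condition.
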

\begin{proof}
Following the same setting of $a,b$ and the same proof as in Lemma~\ref{lem:om}, we can show that all points from $\UX$ will be included in $S$, and most points from $P_X$ and $Q_q$ are filtered outside $S$:
\begin{align}
    \Pr_{\*x \sim \PX}\left[ \|\*x - \bar{\*x} \|_2 \in [a, b] \right]   
    & \le e^{-c (\sigma_o\sqrt{d} - 2 s)^2/\sigma^2} < e^{-c'd},
    \\
    \Pr_{\*x \sim Q_q}\left[ \|\*x - \bar{\*x} \|_2 \in [a, b] \right]   
    & \le e^{-c (\sigma_q \sqrt{d} - \sigma_o \sqrt{d} -2s - \|\mu\|_2)^2} < e^{-c'd},
\end{align}
for some absolute constant $c'$. Then with probability $\ge 1 - n' e^{-2c'd}$, $S$ is exactly all the auxiliary points from $\UX$. By the sub-gaussian concentration of the uniform distribution over sphere, we have with probability $1 - \alpha/2$,
\begin{align}
    \|\uom - \mu\|_2 \le c\sigma_o \sqrt{\frac{d}{ |S|} \log \frac{1}{\alpha} }.
\end{align}
For the given $n'$, we have the theorem.
\end{proof}

\section{Details of Experiments}
\label{sec:details-of-experiments}

\subsection{Experimental Settings}
\label{sec:more-experiment-settings}
\para{Software and Hardware.} We run all experiments with PyTorch and NVIDIA GeForce RTX 2080Ti GPUs. 

\para{Number of Evaluation Runs.} We run all experiments once with fixed random seeds.  

\para{In-distribution Datasets. } We use SVHN~\cite{netzer2011reading}, CIFAR-10 and CIFAR-100~\cite{krizhevsky2009learning} as in-distribution datasets. SVHN has 10 classes and contains 73,257 training images. CIFAR-10 and CIFAR-100 have 10 and 100 classes, respectively. Both datasets consist of 50,000 training images and 10,000 test images. 

\para{Auxiliary OOD Datasets.} We provide the details of auxiliary OOD datasets below. For each auxiliary OOD dataset, we use random cropping with padding of 4 pixels to generate $32\times 32$ images, and further augment the data by random horizontal flipping. We don't use any image corruptions to augment the data. 
\begin{enumerate}
    \item \textbf{TinyImages.} 80 Million Tiny Images (TinyImages) \cite{torralba200880} is a dataset that contains 79,302,017 images collected from the Web. The images in the dataset are stored as $32 \times 32$ color images. Since CIFAR-10 and CIFAR-100 are labeled subsets of the TinyImages dataset, we need to remove those images in the dataset that belong to CIFAR-10 or CIFAR-100. We follow the same  deduplication procedure as in \cite{hendrycks2018deep} and remove all examples in this dataset that appear in CIFAR-10 or CIFAR-100. Even after deduplication, the auxiliary OOD dataset may still contain some in-distribution data if we use CIFAR-10 or CIFAR-100 as in-distribution datasets, but the fraction of them is low. 
    \item \textbf{ImageNet-RC.} We use the downsampled ImageNet dataset (ImageNet$64\times 64$)~\cite{chrabaszcz2017downsampled}, which is a downsampled variant of the original ImageNet dataset. It contains 1,281,167 images with image size of $64 \times 64$ and 1,000 classes. Some of the classes overlap with CIFAR-10 or CIFAR-100 classes. Since we don't use any label information from the dataset, we can say that the auxiliary OOD dataset is unlabeled. Since we randomly crop the $64\times 64$ images into $32\times 32$ images with padding of 4 pixels, with high probability, the resulting images won't contain objects belonging to the in-distribution classes even if the original images contain objects belonging to those classes. Therefore, we still can have a lot of OOD data for training and the fraction of in-distribution data in the auxiliary OOD dataset is low. We call this auxiliary OOD dataset ImageNet-RC. 
\end{enumerate}

\para{Out-of-distribution Datasets.} For OOD test dataset, we follow the procedure in \cite{hendrycks2018deep,liang2018enhancing} and use six different natural image datasets. For CIFAR-10 and CIFAR-100, we use \texttt{SVHN}, \texttt{Textures}~\cite{cimpoi14describing}, \texttt{Places365}~\cite{zhou2017places}, \texttt{LSUN (crop)}, \texttt{LSUN (resize)}~\cite{yu2015lsun}, and \texttt{iSUN}~\cite{xu2015turkergaze}. For SVHN, we use \texttt{CIFAR-10}, \texttt{Textures}, \texttt{Places365}, \texttt{LSUN (crop)}, \texttt{LSUN (resize)}, and \texttt{iSUN}. 
We provide the details of OOD test datasets below. All images are of size  $32\times 32$.
\begin{enumerate}
	\item \textbf{SVHN.} The SVHN dataset \cite{netzer2011reading} contains color images of house numbers. There are ten classes of digits \texttt{0-9}. The original test set has 26,032 images. We randomly select 1,000 test images for each class and form a new test dataset of 10,000 images for  evaluation. 
	\item \textbf{Textures.} The Describable Textures Dataset (DTD) \cite{cimpoi14describing} contains textural images in the wild. We include the entire collection of 5640 images for evaluation. 
	\item \textbf{Places365.} The Places365 dataset \cite{zhou2017places} contains large-scale photographs of scenes with 365 scene categories. There are 900 images per category in the test set. We randomly sample 10,000 images from the test set for evaluation.   
	\item \textbf{LSUN (crop) and LSUN (resize).} The Large-scale Scene UNderstanding dataset (LSUN) has a testing set of 10,000 images of 10 different scenes \cite{yu2015lsun}. We construct two datasets, \texttt{LSUN-C} and \texttt{LSUN-R}, by randomly cropping image patches of size $32 \times 32$ and downsampling each image to size $32 \times 32$, respectively. 
	\item \textbf{iSUN.} The iSUN \cite{xu2015turkergaze} consists of a subset of SUN images. We include the entire collection of 8925 images in iSUN. 
	\item \textbf{CIFAR-10.} We use the test set of CIFAR-10, which contains 10,000 images. 
\end{enumerate}

\para{Hyperparameters.}  The hyperparameter $q$ is chosen on a separate validation set from TinyImages, which is different from test-time OOD data. Based on the validation, we set $q=0$ for SVHN, $q=0.125$ for CIFAR-10, and $q=0.5$ for CIFAR-100. To ensure fair comparison, in each epoch, ATOM uses the same amount of outlier  data as OE, where $n$ is twice larger than the in-distribution data size (i.e., 50,000). For all experiments, we set $\lambda=1$. For CIFAR-10 and CIFAR-100, we set $N=400,000$, and $n=100,000$; For SVHN, we set $N=586,056$, and $n=146,514$. 

\para{Architectures and Training Configurations.}  We use the state-of-the-art neural network architecture DenseNet~\cite{huang2017densely} and WideResNet~\cite{zagoruyko2016wide}. For DenseNet, we follow the same setup as in \cite{huang2017densely}, with depth $L=100$, growth rate $k=12$ (Dense-BC) and dropout rate $0$. For WideResNet, we also follow the same setup as in \cite{zagoruyko2016wide}, with depth of 40 and widening parameter $k=4$ (WRN-40-4). All neural networks are trained with stochastic gradient descent with Nesterov momentum \cite{duchi2011adaptive,kingma2014adam}. We set momentum $0.9$ and $\ell_2$ weight decay with a coefficient of $10^{-4}$ for all model training. Specifically, for SVHN, we train the networks for 20 epochs and the initial learning rate of $0.1$ decays by $0.1$ at 10, 15, 18 epoch; for CIFAR-10 and CIFAR-100, we train the networks for 100 epochs and the initial learning rate of $0.1$ decays by $0.1$ at 50, 75, 90 epoch. In ATOM and NTOM, we use batch size 64 for in-distribution data and 128 for out-of-distribution data. To solve the inner max of the robust training objective in ATOM, we use PGD with $\epsilon=8/255$, the number of iterations of 5, the step size of $2/255$, and random start.

\subsection{Average Runtime}
\label{sec:average-runtime}
We run our experiments using a single GPU on a machine with 4 GPUs and 32 cores. The estimated average runtime for each method is summarized in Table~\ref{tab:average-runtime}.

\begin{table}[t]
        \center
        \caption[]{\small  The estimated average runtime for each result. We use DenseNet as network architecture. $h$ means hour. For MSP, ODIN, and Mahalanobis, we use standard training. The evaluation includes four OOD detection tasks listed in Section~\ref{sec:problem-statement}. }
		\begin{tabular}{l|cc}
			\toprule
			\bf{Method}   & {\bf Training} & {\bf Evaluation} \\ \hline
			MSP  & 2.5 h &  4 h \\
			ODIN & 2.5 h & 4 h  \\
			Mahalanobis  & 2.5 h & 20 h \\
			SOFL  & 14 h & 4 h \\
			OE  & 5 h & 4 h \\ 
			ACET  & 17 h & 4 h \\ 
			CCU  & 6.7 h & 4 h \\ 
			ROWL  & 24 h & 4 h \\ 	
			ATOM (ours) & 21 h & 4 h \\  
			\bottomrule
		\end{tabular}
	\label{tab:average-runtime}
\end{table}

\subsection{OOD Detection Methods}
\label{subsec:ood-techs}
We consider eight common OOD detection methods and describe each method in detail below. % listed in Table~\ref{tab:taxonomy} 

\textbf{Maximum Softmax Probability (MSP).} ~\cite{hendrycks2016baseline} propose to use $\max_i F_i(x)$ as confidence scores to detect OOD examples, where $F(x)$ is the softmax output of the neural network. 

\textbf{ODIN.}  ~\cite{liang2018enhancing} computes calibrated confidence scores using temperature scaling and input perturbation techniques. In all of our experiments, we set temperature scaling parameter $T=1000$. We choose perturbation magnitude $\eta$ by validating on 1000 images randomly sampled from in-distribution test set $\mathcal{D}_{\text{in}}^{\text{test}}$ and 1000 images randomly sampled from auxiliary OOD dataset $\mathcal{D}_{\text{out}}^{\text{auxiliary}}$, which does not depend on prior knowledge of test OOD datasets. For DenseNet, we set $\eta=0.0006$ for SVHN, $\eta=0.0016$ for CIFAR-10, and $\eta=0.0012$ for CIFAR-100. For WideResNet, we set $\eta=0.0002$ for SVHN, $\eta=0.0006$ for CIFAR-10, and $\eta=0.0012$ for CIFAR-100. 

\textbf{Mahalanobis.}  ~\cite{lee2018simple} propose to use Mahalanobis distance-based confidence scores to detect OOD samples. Following \cite{lee2018simple}, we use 1000 examples randomly selected from in-distribution test set $\mathcal{D}_{\text{in}}^{\text{test}}$ and adversarial examples generated by FGSM \cite{goodfellow2014explaining} on them with perturbation size of $0.05$ to train the Logistic Regression model and tune the noise perturbation magnitude $\eta$. $\eta$ is chosen from $\{0.0, 0.01, 0.005, 0.002, 0.0014, 0.001, 0.0005\}$, and the optimal parameters are chosen to minimize the FPR at FNR 5\%.

\textbf{Outlier Exposure (OE).}  Outlier Exposure \cite{hendrycks2018deep} makes use of a large, auxiliary OOD dataset $\mathcal{D}_{\text{out}}^{\text{auxiliary}}$ to enhance the performance of existing OOD detection. We train from scratch with $\lambda=0.5$, and use in-distribution batch size of 64 and out-distribution batch size of 128 in our experiments. Other training parameters are specified in Section~\ref{sec:more-experiment-settings}.

\textbf{Self-Supervised OOD Feature Learning (SOFL).} ~\cite{mohseni2020self} add an auxiliary head to the network and train in for the OOD detection task. They first use a full-supervised training to learn in-distribution training data for the main classification head and then a self-supervised training with OOD training set for the auxiliary head. Following the original setting, we set $\lambda=5$ and use an in-distribution batch size of 64 and an out-distribution batch size of 320 in all of our experiments. In SVHN and CIFAR-10, we use 5 reject classes, while in CIFAR-100, we use 10 reject classes. We first train the model with the full-supervised learning using the training parameters specified in Section~\ref{sec:more-experiment-settings} and then continue to train with the self-supervised OOD feature learning using the same training parameters. We use the large, auxiliary OOD dataset $\mathcal{D}_{\text{out}}^{\text{auxiliary}}$ as out-of-distribution training dataset.

\textbf{Adversarial Confidence Enhancing Training (ACET).} ~\cite{hein2019relu} propose Adversarial Confidence Enhancing Training to enforce low model confidence for the OOD data point, as well as worst-case adversarial example in the neighborhood of an OOD example. We use the large, auxiliary OOD dataset $\mathcal{D}_{\text{out}}^{\text{auxiliary}}$ as an OOD training dataset instead of using random noise data for a fair comparison. In all of our experiments, we set $\lambda=1.0$. For both in-distribution and out-distribution, we use a batch size of 128. To solve the inner max of the training objective, we also apply PGD with $\epsilon=8/255$, the number of iterations of 5, the step size of $2/255$, and random start to a half of a minibatch while keeping the other half clean to ensure proper performance on both perturbed and clean OOD examples for a  fair comparison. Other training parameters are specified in Section~\ref{sec:more-experiment-settings}.

\textbf{Certified Certain Uncertainty (CCU). } Certified Certain Uncertainty \cite{meinke2019towards} gives guarantees on the confidence of the classifier decision far away from the training data. We use the same training set up as in the paper and code, except we use our training configurations specified in Section~\ref{sec:more-experiment-settings}. 

\textbf{Robust Open-World Deep Learning (ROWL). } ~\cite{sehwag2019analyzing} propose to introduce additional background classes for OOD datasets and perform adversarial training on both the in- and out-of- distribution datasets to achieve robust open-world classification. When an input is classified as the background classes, it is considered as an OOD example. Thus, ROWL gives binary OOD scores (either 0 or 1) to the inputs. In our experiments, we only have one background class and randomly sample data points from the large, auxiliary OOD dataset $\mathcal{D}_{\text{out}}^{\text{auxiliary}}$ to form the OOD dataset. To ensure data balance across classes, we include 7,325 OOD data points for SVHN, 5,000 OOD data points for CIFAR-10 and 500 OOD data points for CIFAR-100. During training, we mix the in-distribution data and OOD data, and use a batch size of 128. To solve the inner max of the training objective, we use PGD with $\epsilon=8/255$, the number of iterations of 5, the step size of $2/255$, and random start. Other training parameters are specified in Section~\ref{sec:more-experiment-settings}.

\subsection{Adversarial Attacks for OOD Detection Methods}
\label{sec:attack-algorithms}

We propose adversarial attack objectives for different OOD detection methods. We consider a family of adversarial perturbations for the OOD inputs: (1) $L_\infty$-norm bounded attack (white-box attack); (2) common image corruptions attack (black-box attack); (3) compositional attack which combines common image corruptions attack and $L_\infty$ norm bounded attack (white-box attack). 

\para{$L_\infty$ norm bounded attack.} For data point $\*x \in \mathbb{R}^{d}$, the $L_\infty$ norm bounded perturbation is defined as 
\begin{align}
\Omega_{\infty, \epsilon}(\*x) = \{\delta \in \mathbb{R}^{d}  \bigm| \| \delta \|_\infty \leq \epsilon \land \*x+\delta \text{ is valid} \},
\end{align} 
where $\epsilon$ is the adversarial budget. $\*x+\delta$ is considered valid if the values of $\*x+\delta$ are in the image pixel value range. 

For MSP, ODIN, OE, ACET, and CCU methods, we propose the following attack objective to generate adversarial OOD example on a clean OOD input $\*x$: 
\begin{align}
    \*x' = \argmax_{\*x' \in \Omega_{\infty, \epsilon}(\*x)} -\frac{1}{K} \sum_{i=1}^{K} \log F(\*x')_i
\end{align}
where $F(\*x)$ is the softmax output of the classifier network. 

For Mahalanobis method, we propose the following attack objective to generate adverasrial OOD example on OOD input $\*x$:
\begin{align}
     \*x' = \argmax_{\*x' \in \Omega_{\infty, \epsilon}(\*x)} -\log \frac{1}{1+e^{-(\sum_\ell \alpha_\ell M_\ell (\*x')+b)}},
\end{align}
where $M_\ell (\*x')$ is the Mahalanobis distance-based confidence score of $\*x'$ from the $\ell$-th feature layer, $\{\alpha_\ell\}$ and $b$ are the parameters of the logistic regression model.

For SOFL method, we propose the following attack objective to generate adversarial OOD example for an input $\*x$:
\begin{align}
    \*x' = \argmax_{\*x' \in \Omega_{\infty, \epsilon}(\*x)} - \log \sum_{i=K+1}^{K+R} \bar{F}(\*x')_i
\end{align}
where $\bar{F}(\*x)$ is the softmax output of the whole neural network (including auxiliary head) and $R$ is the number of reject classes. 

For ROWL and ATOM method, we propose the following attack objective to generate adverasrial OOD example on OOD input $\*x$:
\begin{align}
    \*x' = \argmax_{\*x' \in \Omega_{\infty, \epsilon}(\*x)} - \log \hat{F}(\*x')_{K+1}
\end{align}
where $\hat{F}(\*x)$ is the softmax output of the (K+1)-way neural network.

Due to computational concerns, by default, we will use PGD with $\epsilon=8/255$, the number of iterations of 40, the step size of 1/255 and random start to solve these attack objectives. We also perform ablation study experiments on the attack strength for ACET and ATOM, see Appendix \ref{sec:ablation-attack-strength}.

\para{Common Image Corruptions attack.} We use common image corruptions introduced in \cite{hendrycks2019benchmarking}. We apply 15 types of algorithmically generated corruptions from noise, blur, weather, and digital categories to each OOD image. Each type of corruption has five levels of severity, resulting in 75 distinct corruptions. Thus, for each OOD image, we generate 75 corrupted images and then select the one with the lowest OOD score (or highest confidence score to be in-distribution). Note that we only need the outputs of the OOD detectors to construct such adversarial OOD examples; thus it is a black-box attack. 

\para{Compositional Attack.} For each OOD image, we first apply common image corruptions attack, and then apply the $L_\infty$-norm bounded attack to generate adversarial OOD examples. 

\subsection{Visualizations of Four Types of OOD Samples}
\label{sec:example-of-four-types-of-OOD}
We show visualizations of four types of OOD samples in Figure~\ref{fig:adversarial-images}.

\begin{figure*}[t!]
    \centering
    \begin{subfigure}{0.49\linewidth}
	    \centering
		\includegraphics[width=\linewidth]{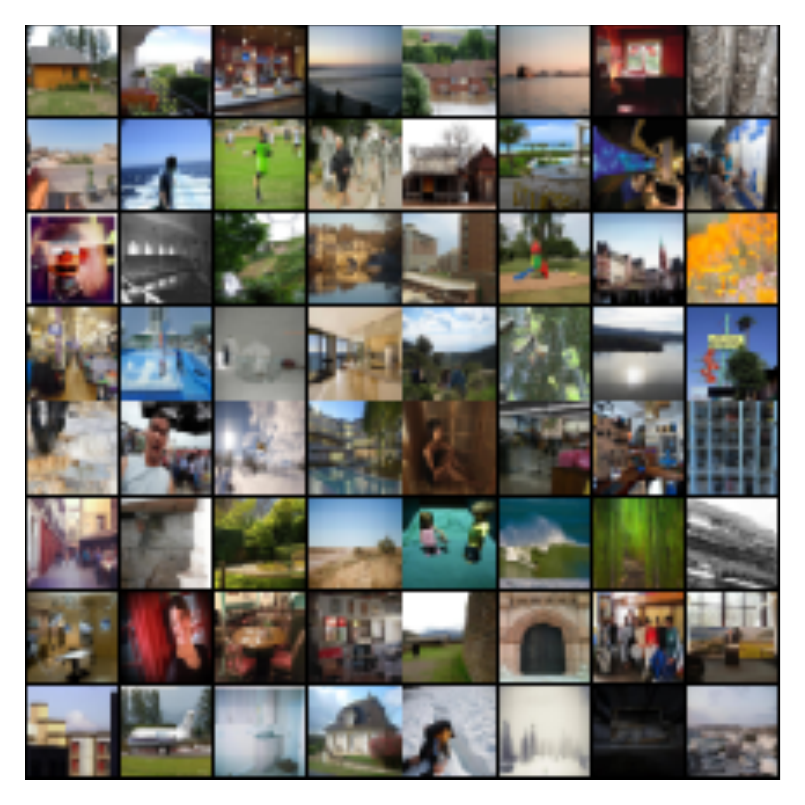}
	\caption{Natural OOD}
	\end{subfigure}
	\begin{subfigure}{0.49\linewidth}
	    \centering
		\includegraphics[width=\linewidth]{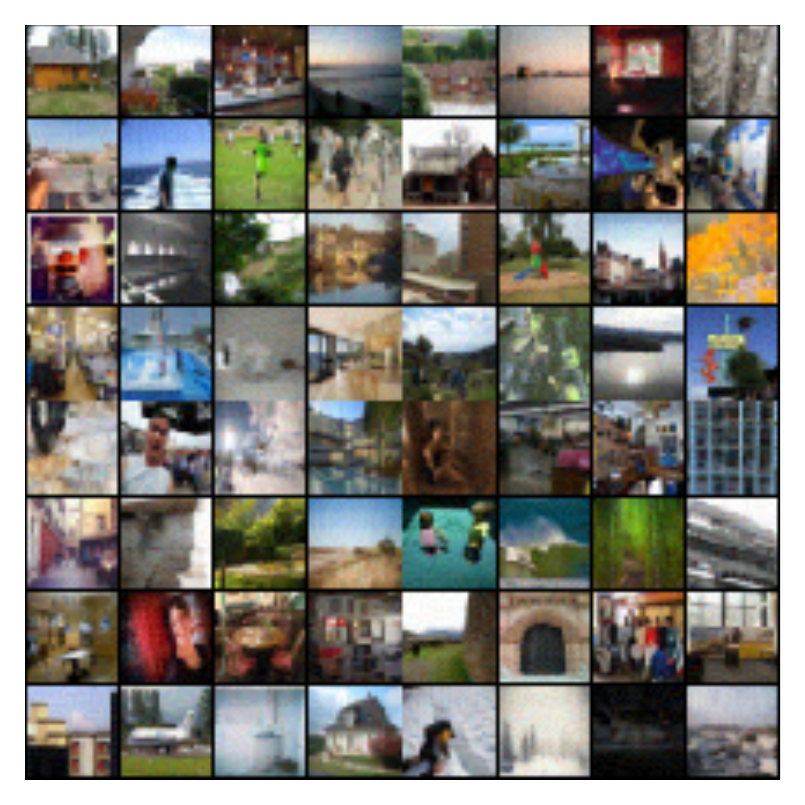}
	\caption{$L_\infty$ OOD}
	\end{subfigure}
    \begin{subfigure}{0.49\linewidth}
	    \centering
		\includegraphics[width=\linewidth]{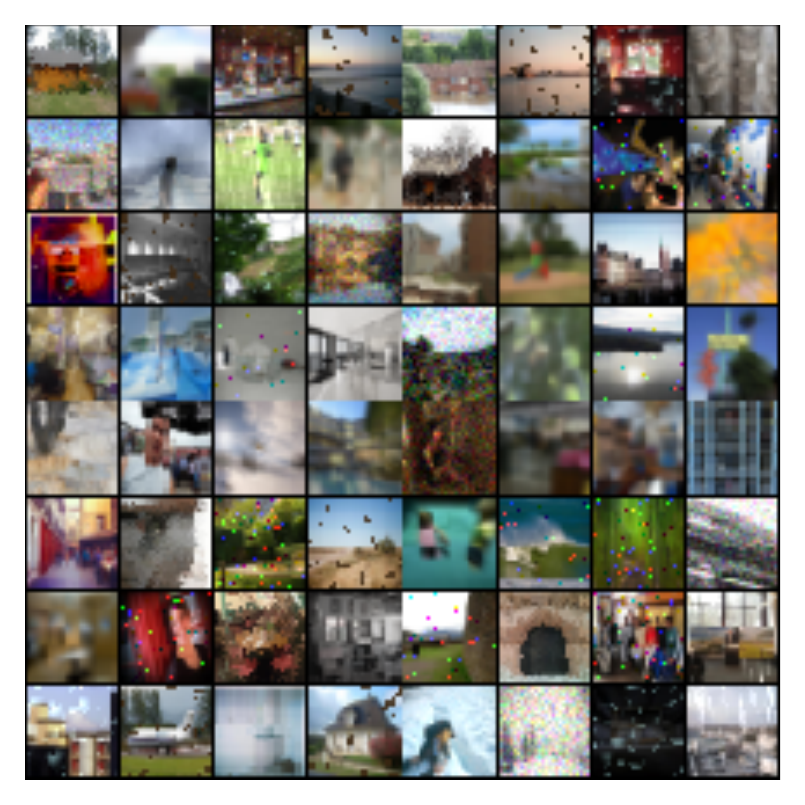}
	\caption{Corruption OOD}
	\end{subfigure}
	\begin{subfigure}{0.49\linewidth}
	    \centering
		\includegraphics[width=\linewidth]{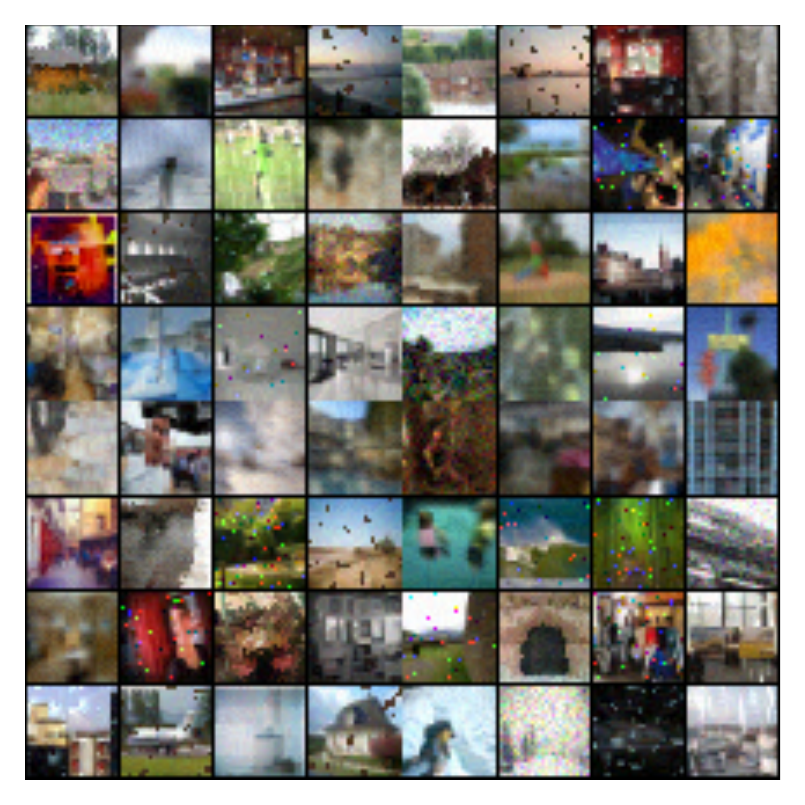}
	\caption{Comp. OOD}
	\end{subfigure}
	\hspace{1cm}
	\caption{\small Examples of four types of OOD samples. 
	}
	\label{fig:adversarial-images}
\end{figure*}

\subsection{Visualization of Easiest and Hardest OOD Examples}
\label{sec:easiest-and-hardest-ood-example}
We show visualizations of OOD examples with highest OOD scores (easiest) and lowest OOD scores (hardest) in Figure~\ref{fig:ood-score-distribution-during-training}. 

\begin{figure*}[t]
    \centering
    \begin{subfigure}{0.3\linewidth}
	    \centering
		\includegraphics[width=\linewidth]{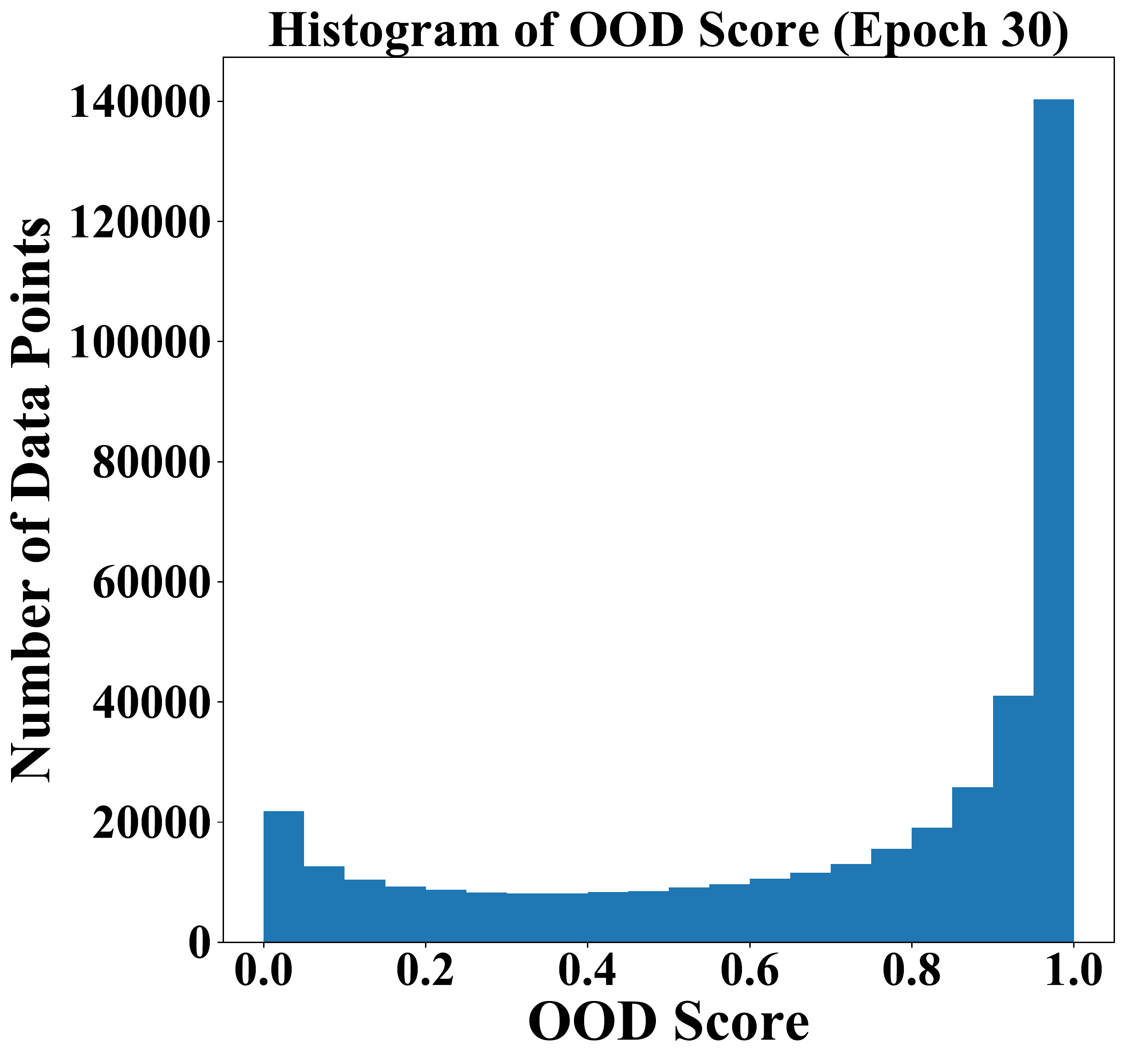}
	\caption{OOD score distribution}
	\end{subfigure}
	\begin{subfigure}{0.3\linewidth}
		\centering
		\includegraphics[width=\linewidth]{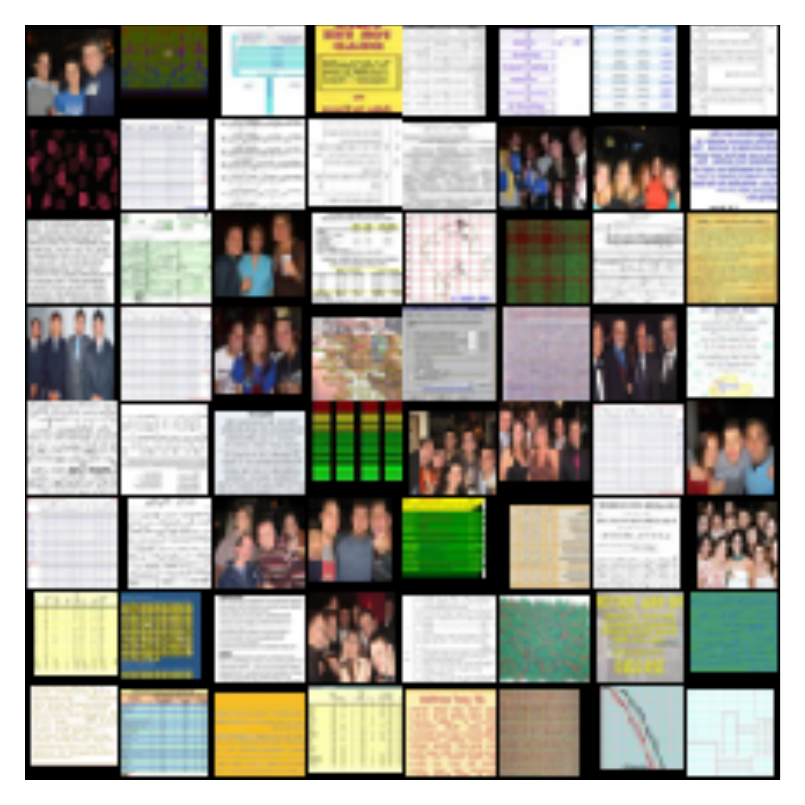}
		\caption{highest OOD scores}
	\end{subfigure} 
	\begin{subfigure}{0.3\linewidth}
		\centering
		\includegraphics[width=\linewidth]{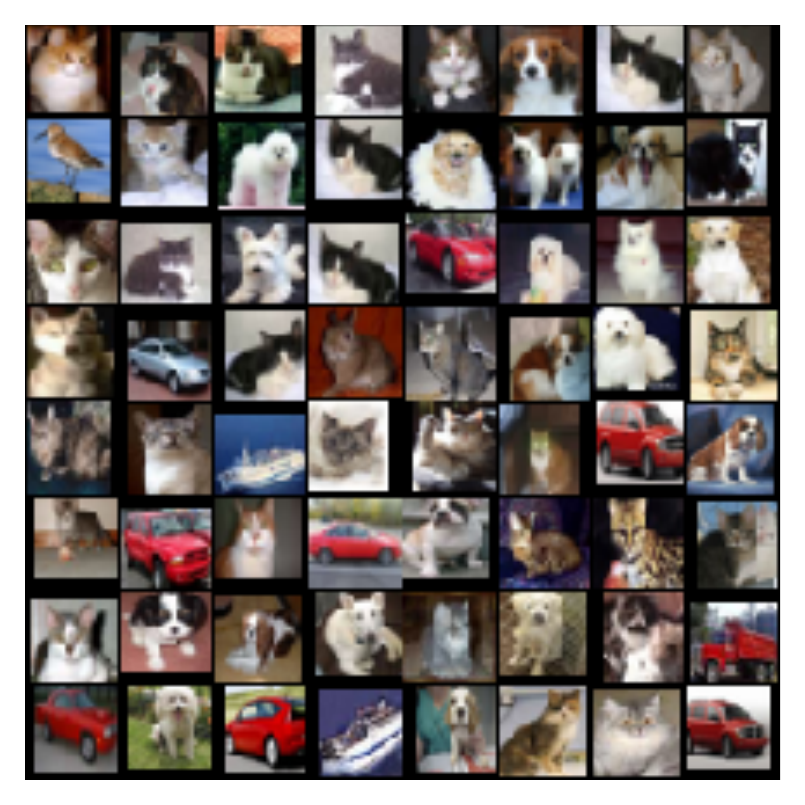}
		\caption{lowest OOD scores}
	\end{subfigure} 
	\vspace{-0.2cm}
	\caption{\small On CIFAR-10, we train a DenseNet with objective (\ref{obj:adv}) for 100 epochs {\bf without} informative outlier mining. At epoch 30, we randomly sample 400,000 data points from $\mathcal{D}_{\text{out}}^{\text{auxiliary}}$, and plot the OOD score frequency distribution (a). We observe that the model quickly converges to solution where OOD score distribution becomes dominated by {\em easy} examples with score closer to 1, as shown in (b). Therefore, training on these easy OOD data points can no longer help improve the decision boundary of OOD detector.  (c) shows the hardest examples mined from TinyImages w.r.t CIFAR-10. 
	}
	\label{fig:ood-score-distribution-during-training}
	\vspace{-0.3cm}
\end{figure*}

\subsection{Histogram of OOD Scores}
In Figure~\ref{fig:ood-score-distribution-during-training-full-plots}, we show histogram of OOD scores for model snapshots trained on CIFAR-10 (in-distribution) using objective (\ref{obj:adv}) \textbf{without} informative outlier mining. We plot every ten epochs for a model trained for a total of 100 epochs. We observe that the model quickly converges to a solution where OOD score distribution becomes dominated by {\em easy} examples with scores closer to 1. This is exacerbated as the model is trained for longer. 

\begin{figure*}[t!]
    \centering

	\includegraphics[width=0.45\linewidth]{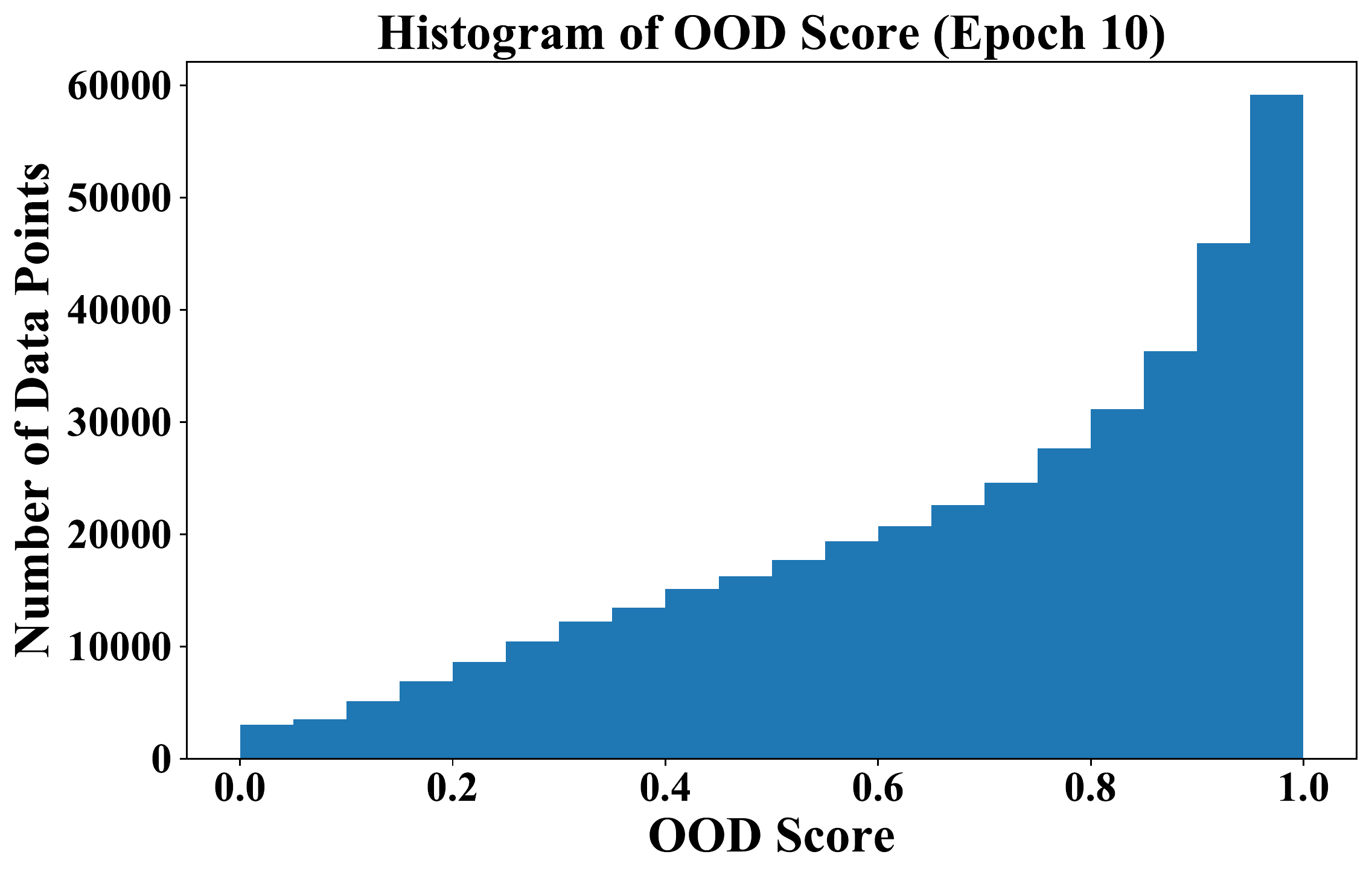}
    \includegraphics[width=0.45\linewidth]{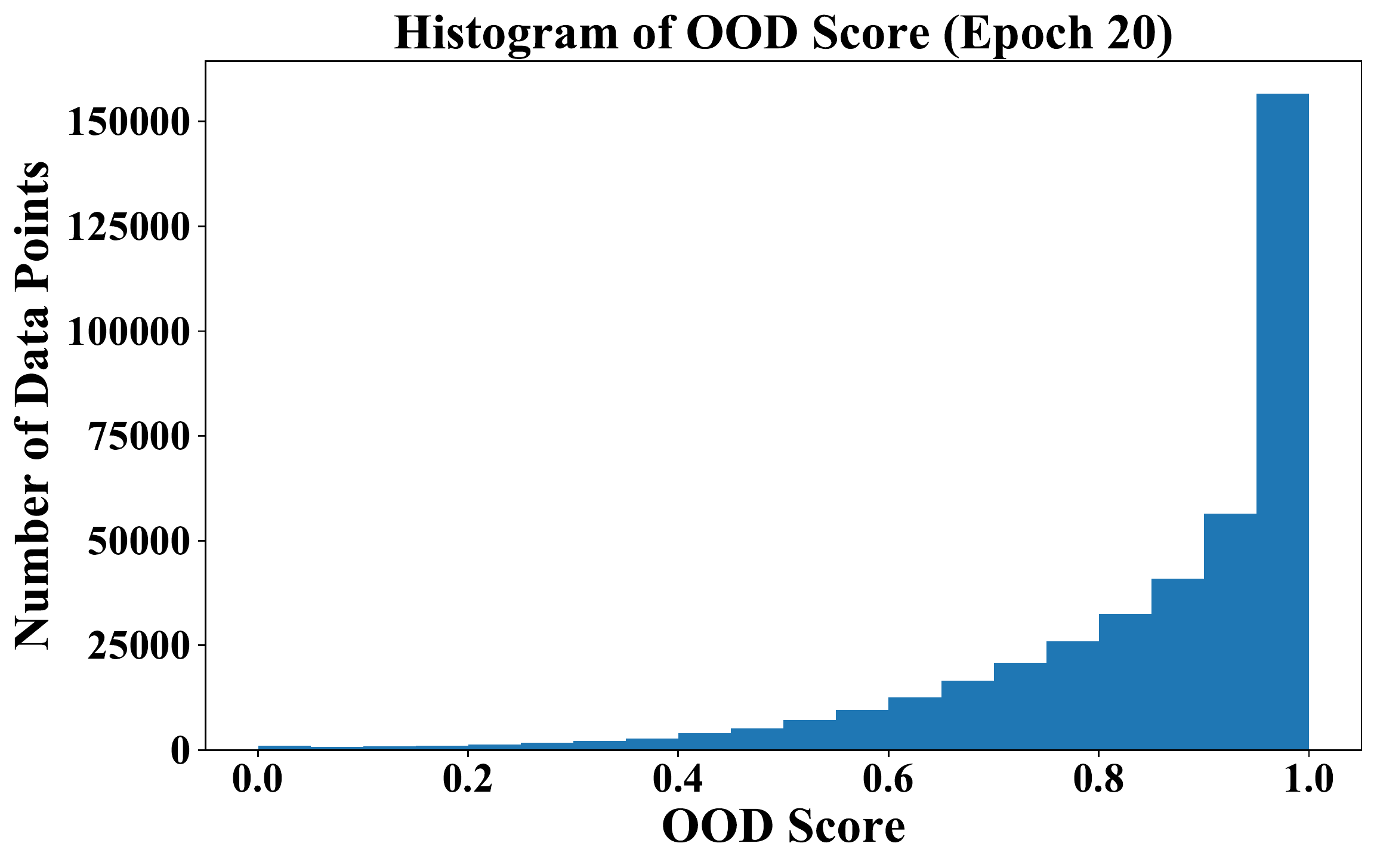}
    \includegraphics[width=0.45\linewidth]{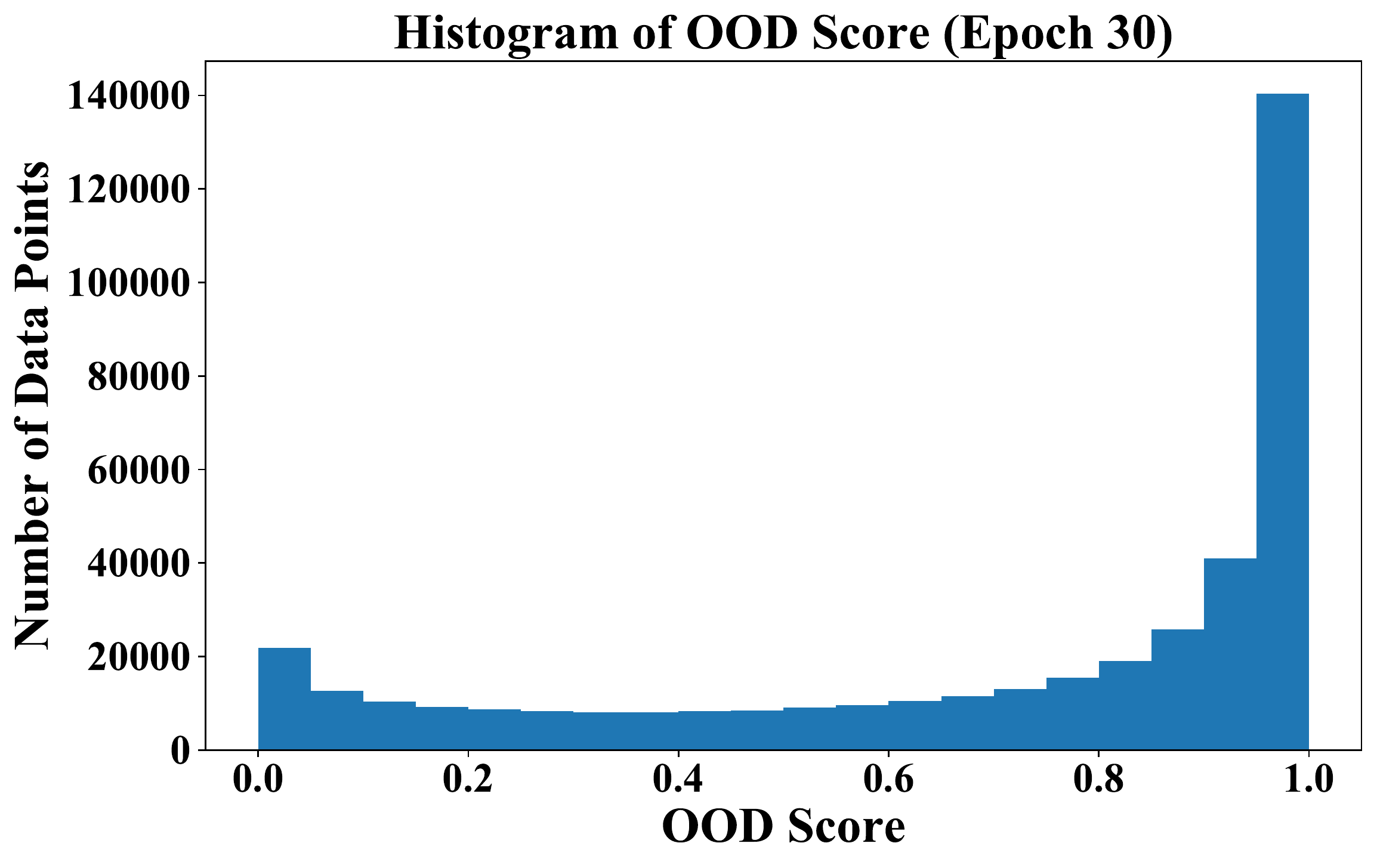}
    \includegraphics[width=0.45\linewidth]{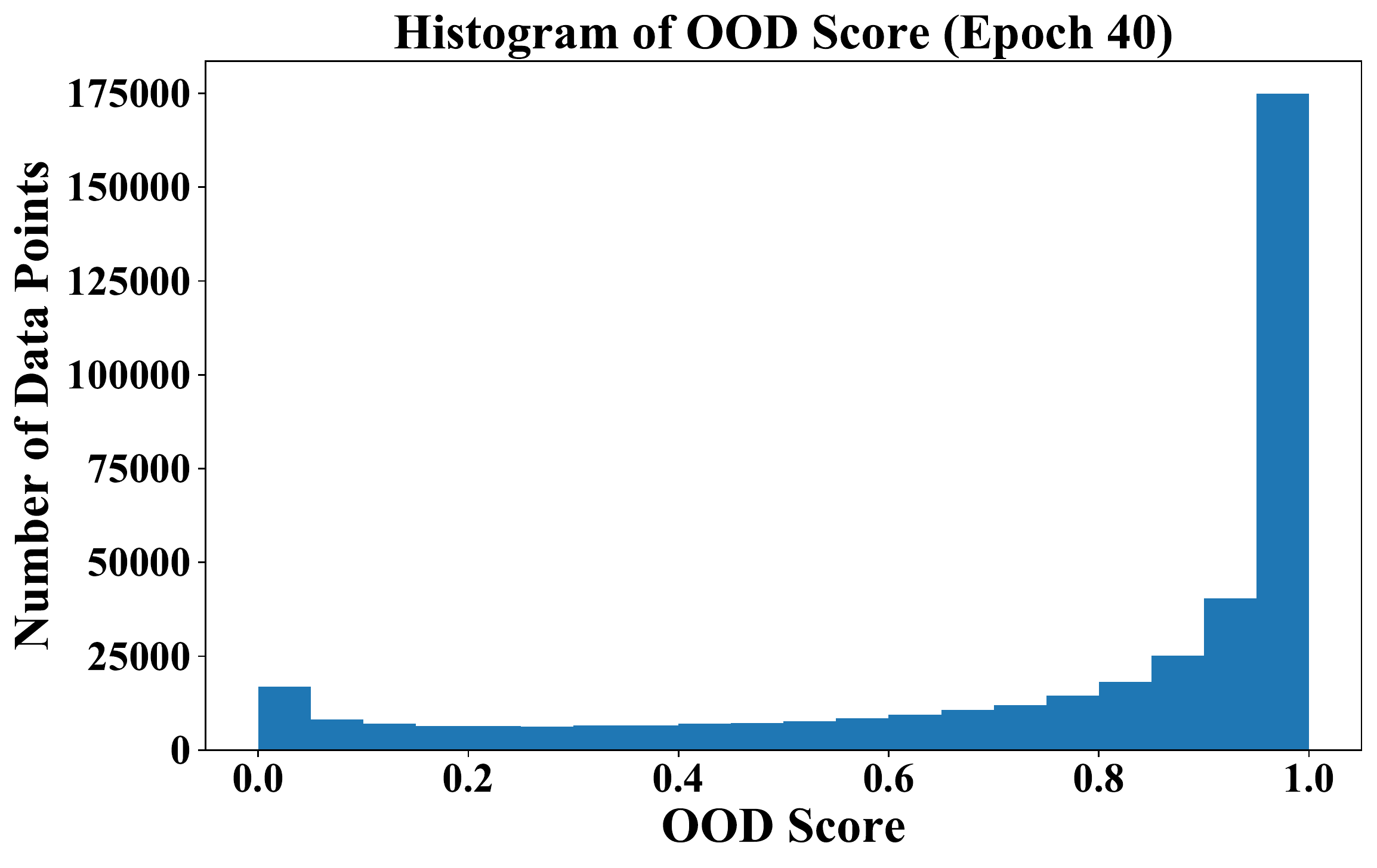}
    \includegraphics[width=0.45\linewidth]{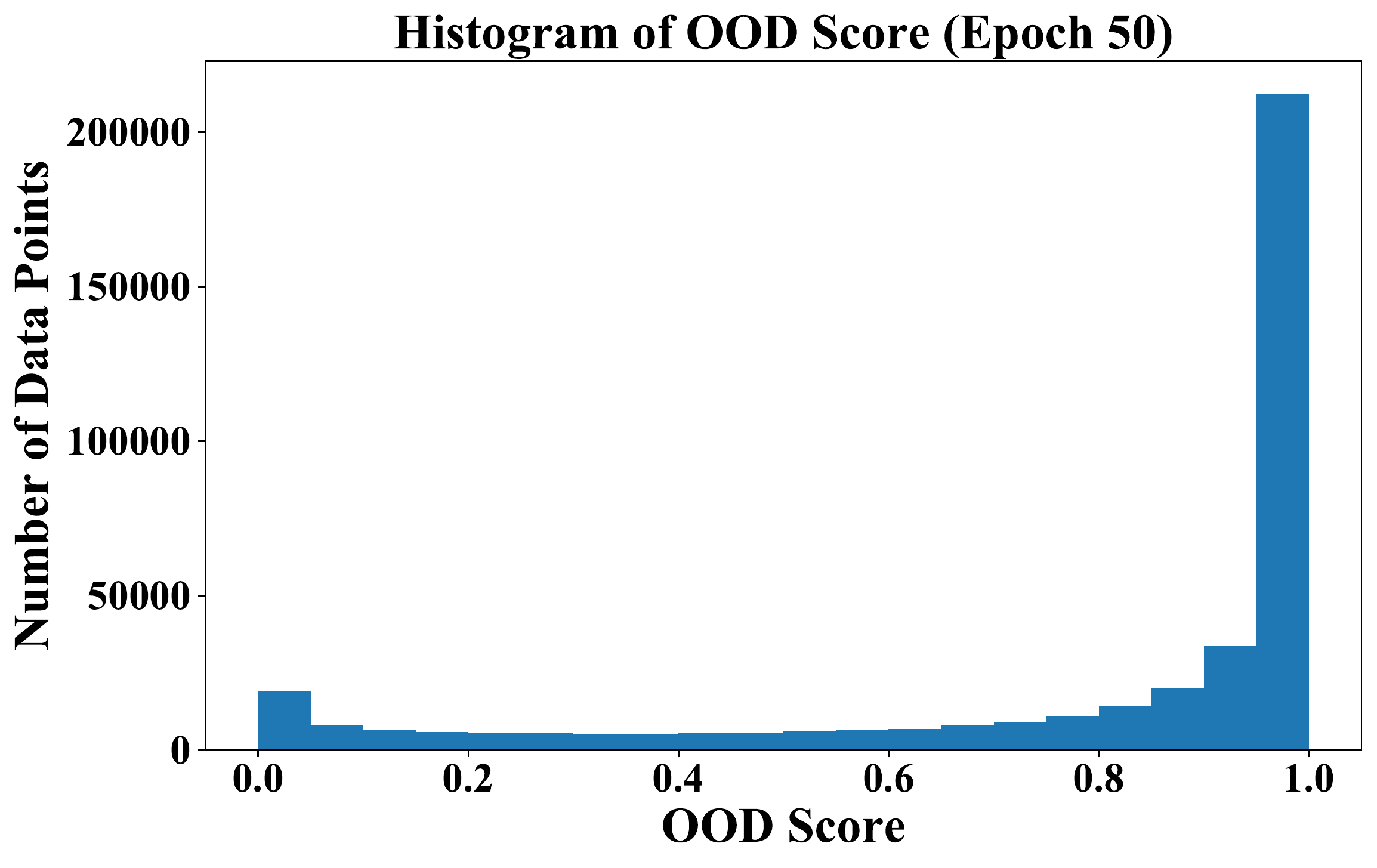}
    \includegraphics[width=0.45\linewidth]{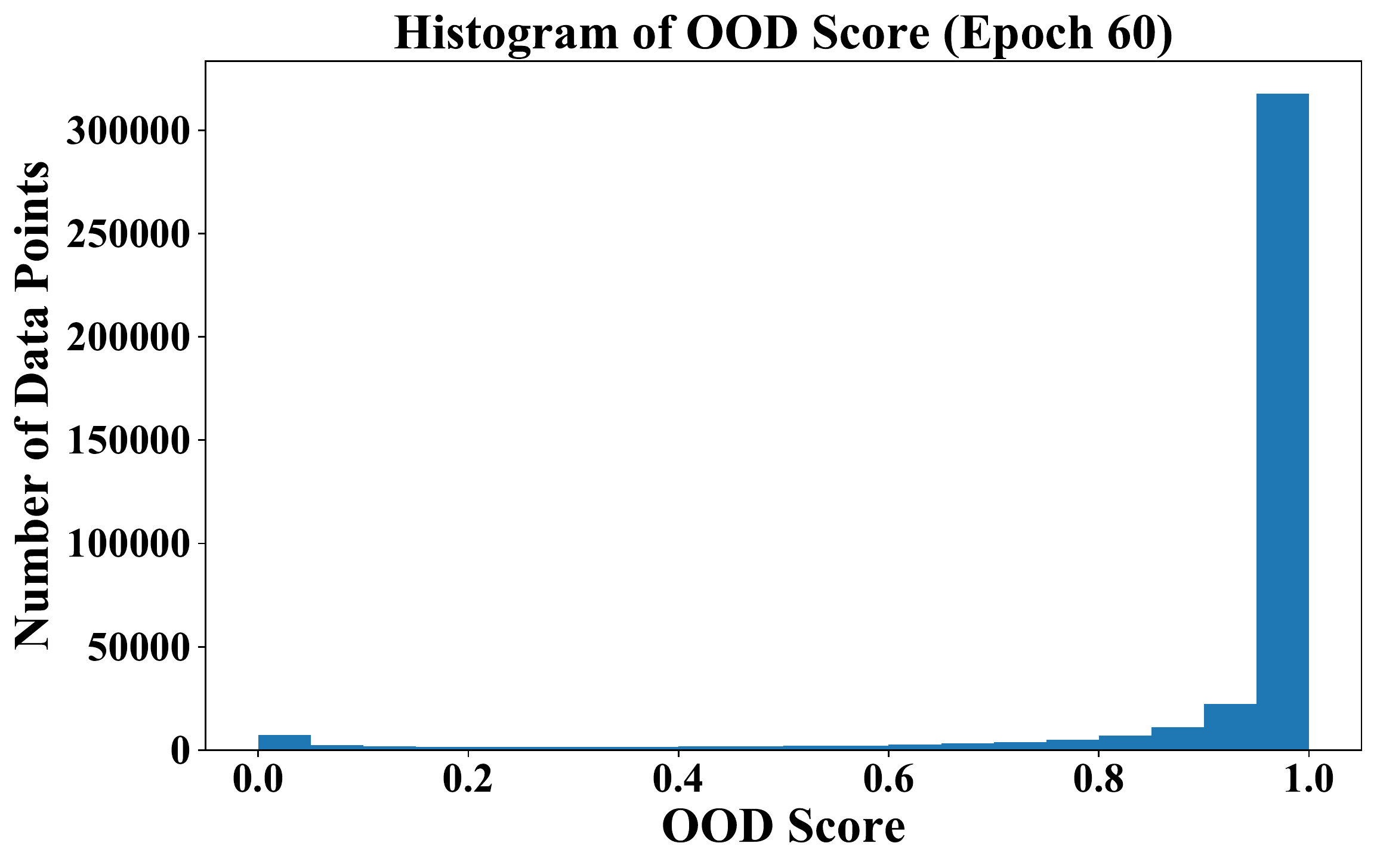}
    \includegraphics[width=0.45\linewidth]{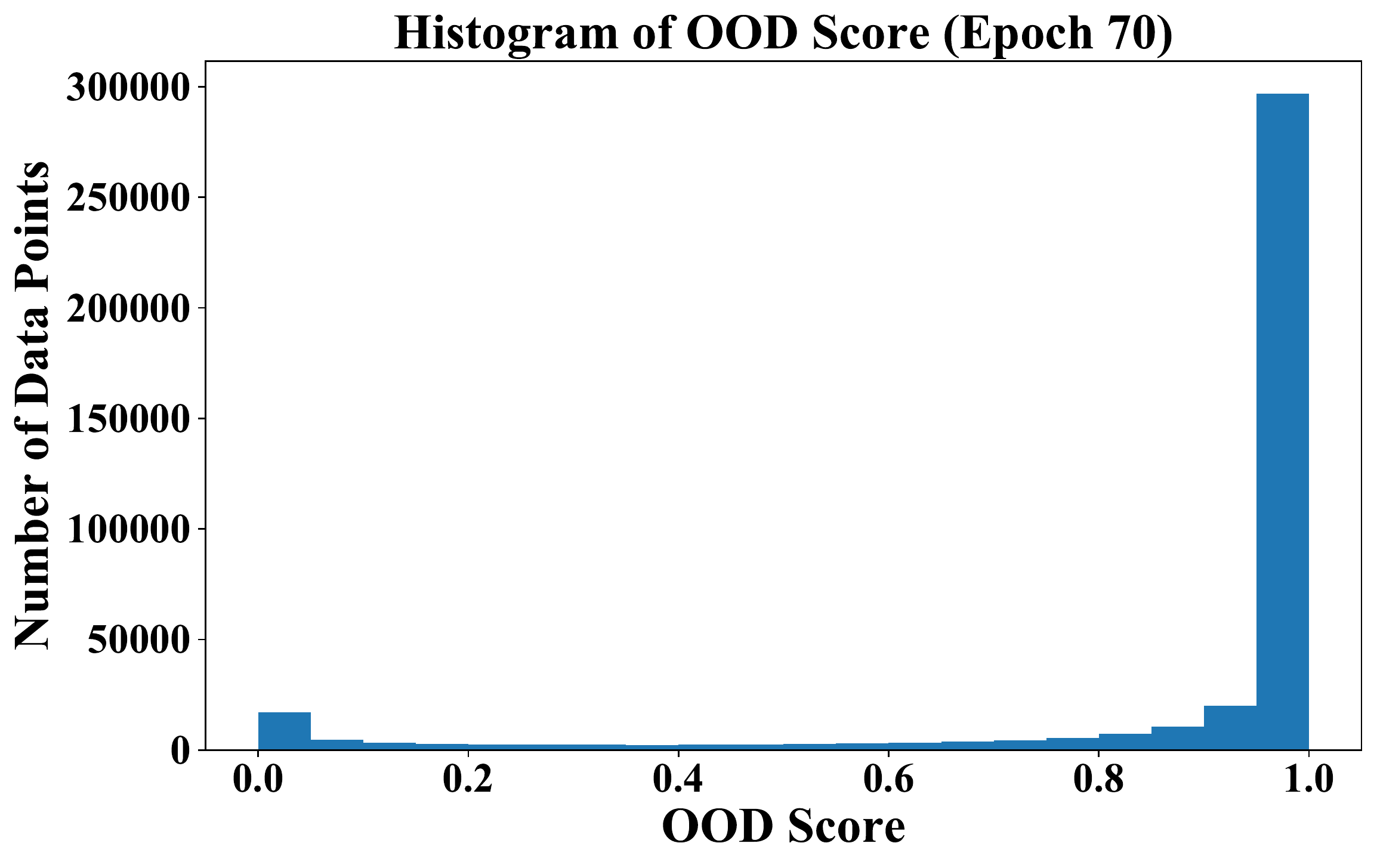}
    \includegraphics[width=0.45\linewidth]{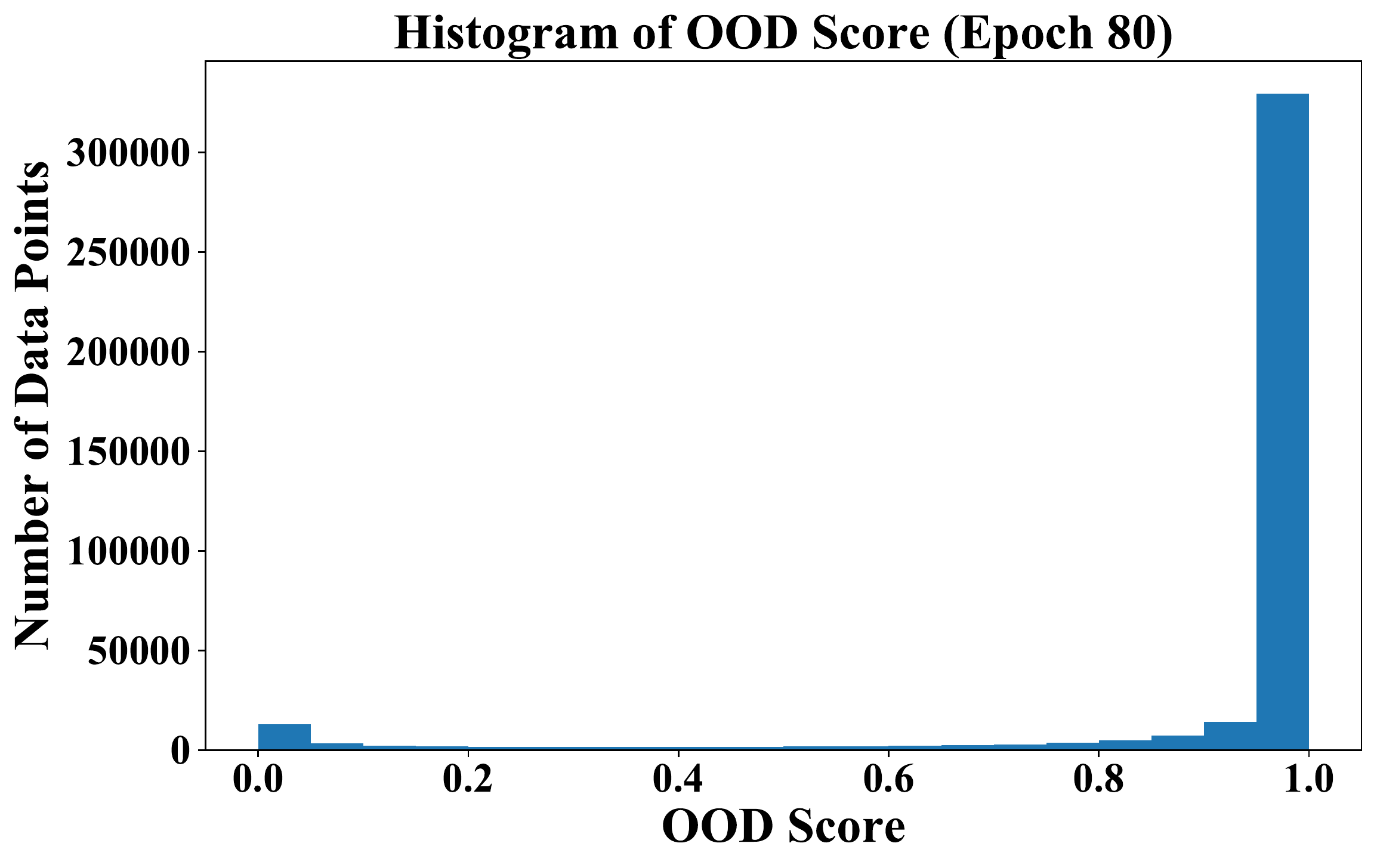}
    \includegraphics[width=0.45\linewidth]{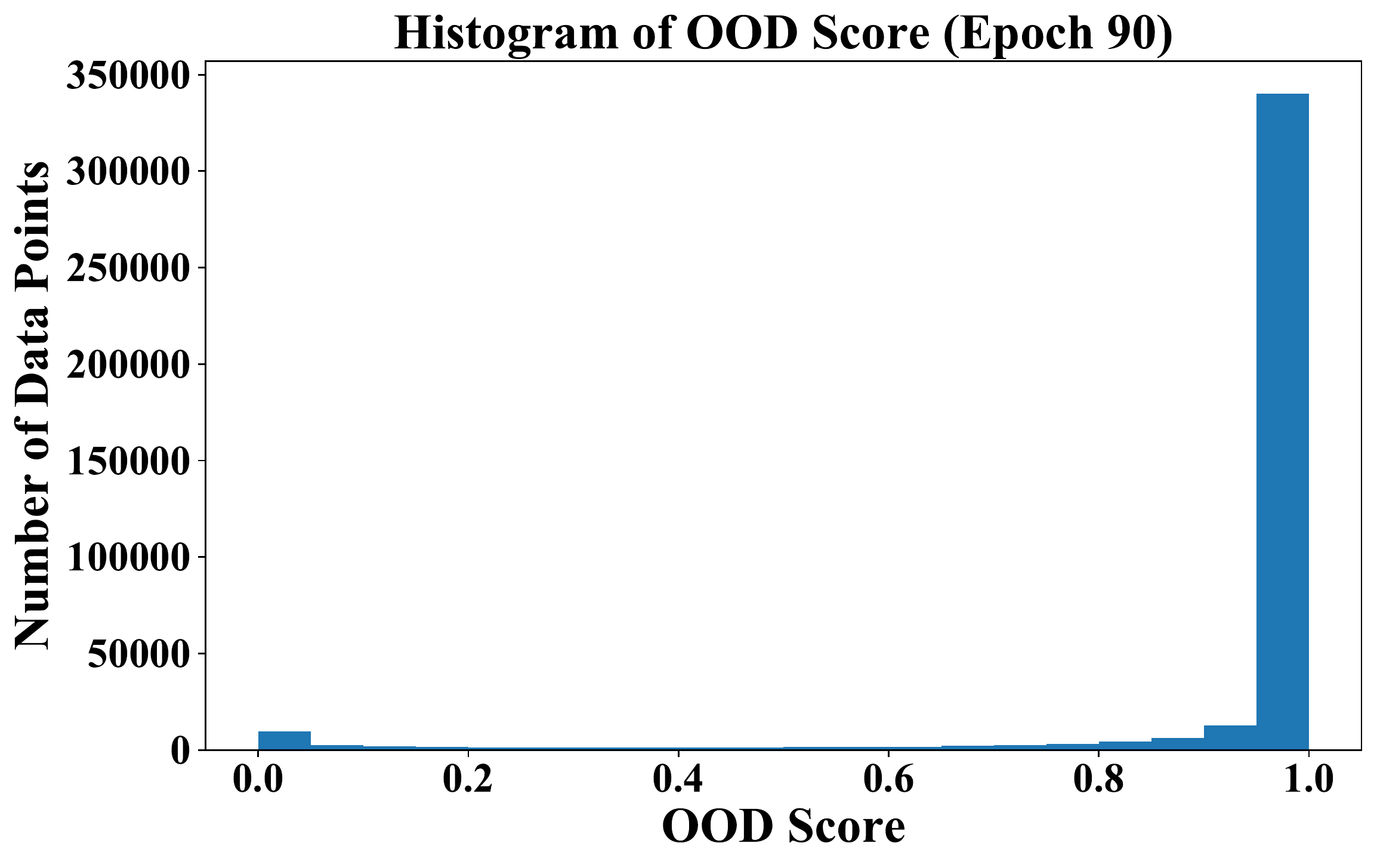}
    \includegraphics[width=0.45\linewidth]{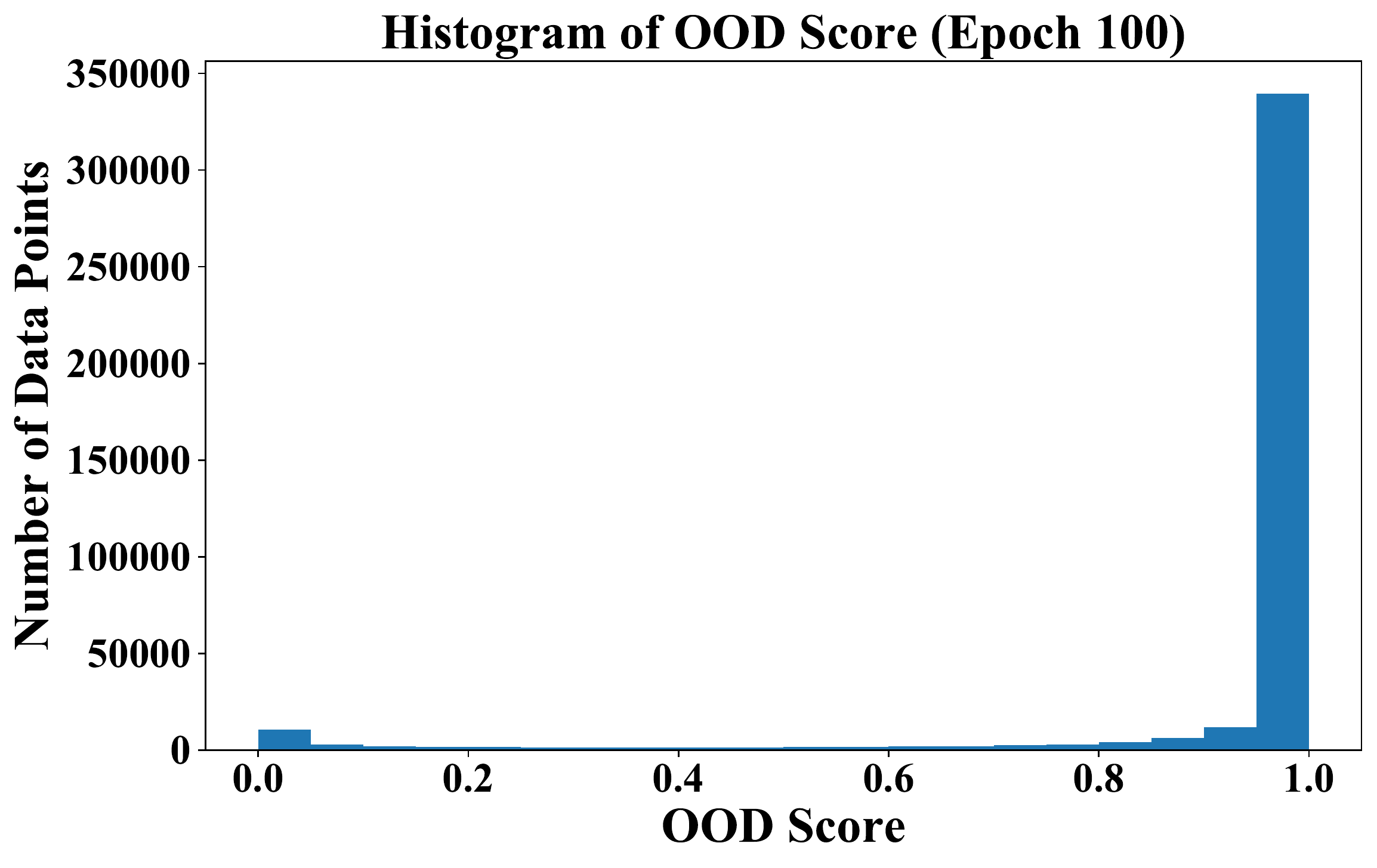}
    
	\hspace{1cm}
	\caption{\small On CIFAR-10, we train the model with objective (\ref{obj:adv}) for 100 epochs \textbf{without} informative outlier mining. For every 10 epochs, we randomly sample 400,000 data points from the large auxiliary OOD dataset and use the current model snapshot to calculate the OOD scores. }
	\label{fig:ood-score-distribution-during-training-full-plots}
\end{figure*}

\subsection{Experimental Results on SVHN as In-distribution Dataset}
\label{sec:svhn-results}
The experimental results on SVHN as in-distribution dataset are shown in Table~\ref{tab:main-results-svhn}.

\begin{table*}[t!]
    \caption[]{\small Comparison with competitive OOD detection methods. We use DenseNet as network architecture for all methods. We evaluate on four types of OOD inputs: (1) natural OOD, (2) corruption attacked OOD, (3) $L_\infty$ attacked OOD, and (4) compositionally attacked OOD inputs. $\uparrow$ indicates larger value is better, and $\downarrow$ indicates lower value is better. All values are percentages and are averaged over six natural OOD test datasets described in Appendix~\ref{sec:more-experiment-settings}. \textbf{Bold} numbers are superior results. }
	\begin{adjustbox}{width=\columnwidth,center}
		\begin{tabular}{l|l|cc|cc|cc|cc}
			\toprule
			\multirow{4}{0.1\linewidth}{$\mathcal{D}_{\text{in}}^{\text{test}}$} &  \multirow{4}{0.06\linewidth}{\textbf{Method}}  &\bf{FPR}  & {\bf AUROC}  & {\bf FPR} & {\bf AUROC}  & {\bf FPR} & {\bf AUROC} & {\bf FPR} & {\bf AUROC}  \\
			& & $\textbf{(5\% FNR)}$ & $\textbf{}$ & {\bf (5\% FNR)} & $\textbf{}$ & $\textbf{(5\% FNR)}$ & {\bf } & $\textbf{(5\% FNR)}$ & {\bf } \\
			& & $\downarrow$ & $\uparrow$ & $\downarrow$ & $\uparrow$ & $\downarrow$ & $\uparrow$ & $\downarrow$ & $\uparrow$ \\ \cline{3-10}
			& & \multicolumn{2}{c|}{\textbf{Natural OOD}} & \multicolumn{2}{c|}{\textbf{Corruption OOD}} & \multicolumn{2}{c|}{\textbf{$L_\infty$ OOD }} & \multicolumn{2}{c}{\textbf{Comp. OOD}} \\  \hline 
			\multirow{10}{0.1\linewidth}{{{\bf SVHN}}}  
			& MSP~\cite{hendrycks2016baseline}  & 38.84 & 93.57 & 99.68 & 68.48 & 99.89 & 1.39 & 100.00 & 0.19 \\
			& ODIN~\cite{liang2018enhancing}  & 31.45 & 93.52 & 97.11 & 63.21 & 99.86 & 0.61 & 100.00 & 0.05 \\
			& Mahalanobis~\cite{lee2018simple} & 22.80 & 95.57 & 93.14 & 60.78 & 97.33 & 8.89 & 99.89 & 0.23 \\
			& SOFL~\cite{mohseni2020self}  & {\bf 0.06} & {\bf 99.98} & {\bf 3.78} & {\bf 99.07} & 75.31 & 46.78 & 99.81 & 2.75 \\
			& OE~\cite{hendrycks2018deep} & 0.60 & 99.88 & 23.44 & 96.23 & 69.36 & 52.19 & 99.65 & 1.27 \\ 
			& ACET~\cite{hein2019relu} & 0.49 & 99.91 & 17.03 & 97.23 & 29.33 & 86.75 & 99.85 & 5.13 \\ 
			& CCU~\cite{meinke2019towards} & 0.50 & 99.90 & 24.17 & 96.11 & 52.17 & 62.24 & 99.42 & 1.60 \\ 
			& ROWL~\cite{sehwag2019analyzing} & 2.04 & 98.87 & 55.03 & 72.37 & 77.24 & 61.27 & 99.79 & {\bf 50.00} \\ 
 			% & NTOM (ours) & {\bf 0.04} & {\bf 99.98} & {\bf 2.87} & {\bf 99.16} & 60.28 & 64.06 & 99.78 & 1.50 \\ 
			& ATOM (ours) & 0.07 & 99.97 & 5.47 & 98.52 & {\bf 7.02} & {\bf 98.00} & {\bf 96.33} & 49.52 \\ 
			\bottomrule
		\end{tabular}
	\end{adjustbox}
	\label{tab:main-results-svhn}
\end{table*}

\subsection{Choose Best q Using Validation Dataset}
\label{appendix:choose-best-q}

We create a validation OOD dataset by sampling 10,000 images from the 80 Million Tiny Images \cite{torralba200880}, which is disjoint from our training data. We choose $q$ from $\{0, 0.125, 0.25, 0.5, 0.75\}$.  The  results on the validation dataset are shown in Table~\ref{tab:validation-results}. We select the best model based on the average FPR at 5\% FNR across four types of OOD inputs. Based on the results, the optimal $q$ is $0$ for SVHN, $0.125$ for CIFAR-10 and $0.5$ for CIFAR-100. 

\begin{table*}[t]
    \caption[]{\small Evaluate models on validation dataset. We use DenseNet as network architecture. $\uparrow$ indicates larger value is better, and $\downarrow$ indicates lower value is better. All values are percentages and are averaged over six different OOD test datasets mentioned in Appendix~\ref{sec:more-experiment-settings}. \textbf{Bold} numbers are superior results.}
	\begin{adjustbox}{width=\columnwidth,center}
		\begin{tabular}{l|l|cc|cc|cc|cc}
			\toprule
			 \multirow{4}{0.1\linewidth}{$\mathcal{D}_{\text{in}}^{\text{test}}$} & \multirow{4}{0.06\linewidth}{\textbf{Method}}  &\bf{FPR}  & {\bf AUROC}  & {\bf FPR} & {\bf AUROC}  & {\bf FPR} & {\bf AUROC} & {\bf FPR} & {\bf AUROC}  \\
			 & & $\textbf{(5\% FNR)}$ & $\textbf{}$ & {\bf (5\% FNR)} & $\textbf{}$ & $\textbf{(5\% FNR)}$ & {\bf } & $\textbf{(5\% FNR)}$ & {\bf } \\
			 & & $\downarrow$ & $\uparrow$ & $\downarrow$ & $\uparrow$ & $\downarrow$ & $\uparrow$ & $\downarrow$ & $\uparrow$ \\ \cline{3-10}
			 & & \multicolumn{2}{c|}{\textbf{Natural OOD}} & \multicolumn{2}{c|}{\textbf{Corruption OOD}} & \multicolumn{2}{c|}{\textbf{$L_\infty$ OOD }} & \multicolumn{2}{c}{\textbf{Comp. OOD}} \\  \hline 
			\multirow{5}{0.1\linewidth}{{{\bf SVHN}}}  
            & ATOM (q=0.0) & {\bf 0.01} & 99.97 & {\bf 4.02} & 98.50 & {\bf 8.08} & 98.14 & {\bf 96.79} & 43.77 \\ 
			& ATOM (q=0.125) & 2.16 & 99.39 & 36.62 & 94.79 & 62.38 & 73.20 & 99.98 & 4.62 \\ 
			& ATOM (q=0.25) & 1.87 & 99.38 & 41.86 & 94.20 & 75.85 & 57.75 & 100.00 & 1.26 \\ 
			& ATOM (q=0.5) & 2.73 & 99.18 & 45.02 & 93.85 & 83.01 & 49.38 & 99.99 & 1.96 \\ 
			& ATOM (q=0.75) & 4.97 & 98.83 & 56.51 & 92.13 & 85.98 & 40.89 & 100.00 & 1.16 \\ \hline 
			\multirow{5}{0.1\linewidth}{{{\bf CIFAR-10}}}  
            & ATOM (q=0.0) & 5.34 & 98.35 & 43.61 & 91.85 & 22.08 & 92.81 & 59.17 & 83.56 \\ 
			& ATOM (q=0.125) & {\bf 4.77} & 98.31 & {\bf 27.49} & 94.24 & {\bf 5.42} & 98.19 & {\bf 29.02} & 93.76 \\ 
			& ATOM (q=0.25) & 5.70 & 98.11 & 28.13 & 93.68 & 19.28 & 95.71 & 40.68 & 91.17 \\ 
			& ATOM (q=0.5) & 8.83 & 97.66 & 39.74 & 91.53 & 9.80 & 97.42 & 44.10 & 90.45 \\ 
			& ATOM (q=0.75) & 12.42 & 97.02 & 45.85 & 90.53 & 13.40 & 96.83 & 46.30 & 90.42 \\ \hline 
			\multirow{5}{0.1\linewidth}{{{\bf CIFAR-100}}}  
            & ATOM (q=0.0) & 44.85 & 91.58 & 98.76 & 64.78 & 53.17 & 85.26 & 98.95 & 58.38 \\ 
			& ATOM (q=0.125) & 36.75 & 92.90 & 96.22 & 73.33 & 38.79 & 91.42 & 96.33 & 71.74 \\ 
			& ATOM (q=0.25) & {\bf 34.66} & 92.62 & 94.13 & 73.86 & {\bf 35.84} & 91.24 & 94.33 & 70.35 \\ 
			& ATOM (q=0.5) & 35.04 & 91.36 & {\bf 89.28} & 71.78 & 36.76 & 90.21 & {\bf 89.57} & 70.62 \\ 
			& ATOM (q=0.75) & 43.49 & 87.95 & 91.80 & 62.47 & 59.34 & 78.00 & 92.94 & 58.53 \\ 
			\bottomrule
		\end{tabular}
	\end{adjustbox}
	\label{tab:validation-results}
\end{table*}

\subsection{Effect of coefficient $\lambda$}
We study the effect of coefficient $\lambda$  in Table~\ref{tab:ablation-study-lambda}. We can see that when $\lambda$ is too small (e.g. $\lambda=0.01$), our algorithm can underperform. ATOM can achieve good performance under moderate value such as $\lambda=1.0$. For simplicity, we set $\lambda=1.0$ for all experiments. 

\begin{table*}[t!]
    \caption[]{\small Ablation study on $\lambda$. We use DenseNet as network architecture and set $q=0.125$. $\uparrow$ indicates larger value is better, and $\downarrow$ indicates lower value is better. All values are percentages and are averaged over six natural OOD test datasets mentioned in Appendix~\ref{sec:more-experiment-settings}. }
	\begin{adjustbox}{width=\columnwidth,center}
		\begin{tabular}{l|l|cc|cc|cc|cc}
			\toprule
			 \multirow{4}{0.08\linewidth}{$\mathcal{D}_{\text{in}}^{\text{test}}$} & \multirow{4}{0.06\linewidth}{\textbf{Model}}  &\bf{FPR}  & {\bf AUROC}  & {\bf FPR} & {\bf AUROC}  & {\bf FPR} & {\bf AUROC} & {\bf FPR} & {\bf AUROC}  \\
			 & & $\textbf{(5\% FNR)}$ & $\textbf{}$ & {\bf (5\% FNR)} & $\textbf{}$ & $\textbf{(5\% FNR)}$ & {\bf } & $\textbf{(5\% FNR)}$ & {\bf } \\
			 & & $\downarrow$ & $\uparrow$ & $\downarrow$ & $\uparrow$ & $\downarrow$ & $\uparrow$ & $\downarrow$ & $\uparrow$ \\ \cline{3-10}
			 & & \multicolumn{2}{c|}{\textbf{Natural OOD}} & \multicolumn{2}{c|}{\textbf{Corruption OOD}} & \multicolumn{2}{c|}{\textbf{$L_\infty$ OOD }} & \multicolumn{2}{c}{\textbf{Comp. OOD}} \\  \hline 
			\multirow{6}{0.1\linewidth}{{{\bf CIFAR-10}}}  
            & ATOM ($\lambda=0.01$) & 4.25 & 98.85 & 50.36 & 90.45 & 51.09 & 64.10 & 73.70 & 68.43 \\ 
			& ATOM ($\lambda=0.1$) & 2.05 & 99.25 & 28.95 & 94.73 & 37.41 & 77.81 & 50.51 & 81.43 \\ 
			& ATOM ($\lambda=1.0$) & 1.69 & 99.20 & 25.26 & 95.29 & 20.55 & 88.94 & 38.89 & 86.71  \\ 
			& ATOM ($\lambda=2.0$) & 1.89 & 99.29 & 26.19 & 94.41 & 14.01 & 94.17 & 32.06 & 92.79  \\ 
			& ATOM ($\lambda=5.0$) & 2.17 & 99.18 & 23.96 & 95.62 & 19.57 & 93.77 & 30.42 & 94.27 \\ 
			\bottomrule
		\end{tabular}
	\end{adjustbox}
	\label{tab:ablation-study-lambda}
\end{table*}

\subsection{Effect of Auxiliary OOD Datasets}
\label{sec:ablation-auxiliary-datasets}

We present results in Table~\ref{tab:main-results-imagenet}, where we use an alternative auxiliary OOD dataset ImageNet-RC. The details of ImageNet-RC is provided in Section~\ref{sec:more-experiment-settings}. We use the same hyperparameters as used in training with TinyImages auxiliary data. For all three in-distribution datasets, we find that using $q=0$ results in the optimal performance. 

\begin{table*}[t!]
    \caption[]{\small Comparison with competitive OOD detection methods. We use ImageNet-RC as the auxiliary OOD dataset (see section~\ref{sec:more-experiment-settings} for the details) for SOFL, OE, ACET, CCU, NTOM and ATOM. We use DenseNet as network architecture for all methods. We evaluate on four types of OOD inputs: (1) natural OOD, (2) corruption attacked OOD, (3) $L_\infty$ attacked OOD, and (4) compositionally attacked OOD inputs. $\uparrow$ indicates larger value is better, and $\downarrow$ indicates lower value is better. All values are percentages and are averaged over six natural OOD test datasets described in Appendix~\ref{sec:more-experiment-settings}. \textbf{Bold} numbers are superior results. }
	\begin{adjustbox}{width=\columnwidth,center}
		\begin{tabular}{l|l|cc|cc|cc|cc}
			\toprule
			\multirow{4}{0.12\linewidth}{$\mathcal{D}_{\text{in}}^{\text{test}}$} &  \multirow{4}{0.06\linewidth}{\textbf{Method}}  &\bf{FPR}  & {\bf AUROC}  & {\bf FPR} & {\bf AUROC}  & {\bf FPR} & {\bf AUROC} & {\bf FPR} & {\bf AUROC}  \\
			& & $\textbf{(5\% FNR)}$ & $\textbf{}$ & {\bf (5\% FNR)} & $\textbf{}$ & $\textbf{(5\% FNR)}$ & {\bf } & $\textbf{(5\% FNR)}$ & {\bf } \\
			& & $\downarrow$ & $\uparrow$ & $\downarrow$ & $\uparrow$ & $\downarrow$ & $\uparrow$ & $\downarrow$ & $\uparrow$ \\ \cline{3-10}
			& & \multicolumn{2}{c|}{\textbf{Natural OOD}} & \multicolumn{2}{c|}{\textbf{Corruption OOD}} & \multicolumn{2}{c|}{\textbf{$L_\infty$ OOD }} & \multicolumn{2}{c}{\textbf{Comp. OOD}} \\  \hline 
			\multirow{10}{0.12\linewidth}{{{\bf SVHN}}}  
			& MSP  & 38.84 & 93.57 & 99.68 & 68.48 & 99.89 & 1.39 & 100.00 & 0.19 \\
			& ODIN  & 31.45 & 93.52 & 97.11 & 63.21 & 99.86 & 0.61 & 100.00 & 0.05 \\
			& Mahalanobis & 22.80 & 95.57 & 93.14 & 60.78 & 97.33 & 8.89 & 99.89 & 0.23 \\
			& SOFL  & {\bf 0.02} & {\bf 99.99} & {\bf 5.93} & {\bf 98.57} & 58.53 & 68.85 & 67.34 & 61.42 \\
			& OE  & 0.13 & 99.96 & 15.76 & 97.51 & 68.76 & 49.57 & 98.80 & 6.21 \\ 
			& ACET  & 0.31 & 99.94 & 29.02 & 95.65 & 2.37 & 99.51 & 30.58 & 95.20 \\ 
			& CCU  & 0.17 & 99.96 & 18.64 & 96.94 & 45.38 & 69.14 & 92.30 & 20.88 \\ 
			& ROWL & 2.04 & 98.87 & 55.03 & 72.37 & 77.24 & 61.27 & 99.79 & 50.00 \\ 
			& ATOM (ours)  & {\bf 0.02} & {\bf 99.99} & 7.03 & 98.38 & {\bf 0.14} & {\bf 99.95} & {\bf 7.30} & {\bf 98.32} \\ \hline 
			\multirow{10}{0.12\linewidth}{{{\bf CIFAR-10}}}  
			& MSP  & 50.54 & 91.79 & 100.00 & 58.35 & 100.00 & 13.82 & 100.00 & 13.67 \\
			& ODIN  & 21.65 & 94.66 & 99.37 & 51.44 & 99.99 & 0.18 & 100.00 & 0.01 \\
			& Mahalanobis & 26.95 & 90.30 & 91.92 & 43.94 & 95.07 & 12.47 & 99.88 & 1.58 \\
			& SOFL   & 6.96 & 98.71 & 22.30 & 95.89 & 97.61 & 12.39 & 99.74 & 7.49 \\
			& OE   & 9.70 & 98.35 & 49.84 & 91.76 & 91.30 & 43.88 & 98.82 & 31.12 \\ 
			& ACET  & 10.72 & 98.01 & 53.85 & 90.19 & 17.10 & 96.01 & 55.21 & 89.78 \\ 
			& CCU  & 10.30 & 98.25 & 44.42 & 92.34 & 93.02 & 20.88 & 99.17 & 9.95 \\ 
			& ROWL  & 25.03 & 86.96 & 94.34 & 52.31 & 99.98 & 49.49 & 100.00 & 49.48 \\ 
			& ATOM (ours)  & {\bf 4.08} & {\bf 99.14} & {\bf 16.17} & {\bf 96.94} & {\bf 7.46} & {\bf 98.50} & {\bf 18.35} & {\bf 96.60} \\ \hline 
			\multirow{10}{0.12\linewidth}{{\bf CIFAR-100}} 
			& MSP & 78.05 & 76.11 & 100.00 & 30.04 & 100.00 & 2.25 & 100.00 & 2.06 \\
			& ODIN  & 56.77 & 83.62 & 100.00 & 36.95 & 100.00 & 0.14 & 100.00 & 0.00 \\
			& Mahalanobis  & 42.63 & 87.86 & 95.92 & 42.96 & 95.44 & 15.87 & 99.86 & 2.08 \\
			& SOFL  & 20.95 & 96.06 & 73.33 & 83.31 & 93.41 & 12.90 & 99.98 & 3.36 \\
			& OE   & 18.52 & 95.27 & 86.83 & 66.95 & 96.27 & 18.79 & 99.97 & 4.88 \\ 
			& ACET   & 19.79 & 94.76 & 81.63 & 70.04 & 26.23 & 91.46 & 81.95 & 69.67 \\ 
			& CCU   & 19.44 & 95.05 & 84.11 & 69.09 & 84.89 & 35.85 & 99.61 & 15.67 \\ 
			& ROWL  & 93.35 & 53.02 & 100.00 & 49.69 & 100.00 & 49.69 & 100.00 & 49.69 \\ 
			& ATOM (ours)   & {\bf 15.49} & {\bf 97.18} & {\bf 57.79} & {\bf 89.49} & {\bf 18.32} & {\bf 96.57} & {\bf 58.49} & {\bf 89.36} \\ 
			\bottomrule
		\end{tabular}
	\end{adjustbox}
	\label{tab:main-results-imagenet}
\end{table*}

\subsection{Effect of Network Architecture}
\label{sec:ablation-network-arch}
We perform experiments to evaluate different OOD detection methods using WideResNet, see Table~\ref{tab:main-results-wideresnet}. For ATOM, we use the same hyperparameters as those selected for DenseNet and find that it also leads to good results. 

\begin{table*}[t!]
    \caption[]{\small Comparison with competitive OOD detection methods. We use WideResNet as network architecture for all methods. We evaluate on four types of OOD inputs: (1) natural OOD, (2) corruption attacked OOD, (3) $L_\infty$ attacked OOD, and (4) compositionally attacked OOD inputs. $\uparrow$ indicates larger value is better, and $\downarrow$ indicates lower value is better. All values are percentages and are averaged over six different OOD test datasets described in Appendix~\ref{sec:more-experiment-settings}. \textbf{Bold} numbers are superior results. }
	\begin{adjustbox}{width=\columnwidth,center}
		\begin{tabular}{l|l|cc|cc|cc|cc}
			\toprule
			\multirow{4}{0.12\linewidth}{$\mathcal{D}_{\text{in}}^{\text{test}}$} &  \multirow{4}{0.06\linewidth}{\textbf{Method}}  &\bf{FPR}  & {\bf AUROC}  & {\bf FPR} & {\bf AUROC}  & {\bf FPR} & {\bf AUROC} & {\bf FPR} & {\bf AUROC}  \\
			& & $\textbf{(5\% FNR)}$ & $\textbf{}$ & {\bf (5\% FNR)} & $\textbf{}$ & $\textbf{(5\% FNR)}$ & {\bf } & $\textbf{(5\% FNR)}$ & {\bf } \\
			& & $\downarrow$ & $\uparrow$ & $\downarrow$ & $\uparrow$ & $\downarrow$ & $\uparrow$ & $\downarrow$ & $\uparrow$ \\ \cline{3-10}
			& & \multicolumn{2}{c|}{\textbf{Natural OOD}} & \multicolumn{2}{c|}{\textbf{Corruption OOD}} & \multicolumn{2}{c|}{\textbf{$L_\infty$ OOD }} & \multicolumn{2}{c}{\textbf{Comp. OOD}} \\  \hline 
			\multirow{10}{0.12\linewidth}{{{\bf SVHN}}}  
			& MSP  & 42.78 & 91.15 & 99.99 & 51.84 & 100.00 & 0.15 & 100.00 & 0.05 \\
			& ODIN  & 43.65 & 89.37 & 99.85 & 44.43 & 100.00 & 0.06 & 100.00 & 0.00 \\
			& Mahalanobis  & 6.94 & 98.47 & 80.91 & 78.92 & 93.44 & 39.42 & 99.70 & 2.69 \\
			& SOFL   & {\bf 0.02} & {\bf 99.99} & {\bf 2.41} & {\bf 99.36} & 85.25 & 39.32 & 99.28 & 2.85 \\
			& OE   & 0.58 & 99.87 & 21.97 & 96.00 & 70.66 & 47.36 & 96.63 & 5.87 \\ 
			& ACET  & 0.43 & 99.92 & 18.12 & 97.11 & 20.75 & 89.55 & 99.66 & 1.80 \\ 
			& CCU  & 0.71 & 99.86 & 30.96 & 94.40 & 73.37 & 41.82 & 99.83 & 0.44 \\ 
			& ROWL  & 1.46 & 99.13 & 45.18 & 77.28 & 86.95 & 56.39 & 99.94 & 49.90 \\ 
			& ATOM (ours) & 0.08 & 99.96 & 4.18 & 98.65 & {\bf 5.93} & {\bf 98.47} & {\bf 95.50} & {\bf 62.00} \\ \hline
			\multirow{10}{0.12\linewidth}{{{\bf CIFAR-10}}}  
			& MSP  & 53.81 & 91.10 & 100.00 & 50.30 & 100.00 & 5.48 & 100.00 & 5.43 \\
			& ODIN  & 36.25 & 91.18 & 99.95 & 31.78 & 100.00 & 0.04 & 100.00 & 0.00 \\
			& Mahalanobis  & 23.93 & 92.48 & 86.78 & 56.90 & 85.75 & 42.63 & 99.22 & 17.51 \\
			& SOFL   & 2.15 & {\bf 99.14} & 38.03 & 93.35 & 99.99 & 0.43 & 100.00 & 0.24 \\
			& OE   & 3.03 & 98.83 & 56.00 & 91.19 & 99.97 & 0.26 & 100.00 & 0.04 \\ 
			& ACET  & 3.19 & 98.74 & 35.78 & 94.25 & 40.22 & 89.52 & 56.08 & 87.76 \\ 
			& CCU  & 2.61 & 98.75 & 36.31 & 93.75 & 99.93 & 0.86 & 99.99 & 0.41 \\ 
			& ROWL  & 17.21 & 90.79 & 86.09 & 56.35 & 99.94 & 49.43 & 99.99 & 49.40 \\ 
			& ATOM (ours) & {\bf 1.50} & 98.87 & {\bf 13.17} & {\bf 96.36} & {\bf 33.66} & {\bf 90.89} & {\bf 40.07} & {\bf 88.99} \\ \hline
			\multirow{10}{0.12\linewidth}{{{\bf CIFAR-100}}}  
			& MSP  & 81.92 & 74.32 & 100.00 & 28.10 & 100.00 & 3.20 & 100.00 & 3.14 \\
			& ODIN  & 68.76 & 79.13 & 100.00 & 26.44 & 100.00 & 0.15 & 100.00 & 0.01 \\
			& Mahalanobis  & 43.48 & 86.02 & 95.78 & 42.25 & 95.83 & 17.41 & 99.92 & 3.55 \\
			& SOFL   & 66.57 & 89.11 & 99.99 & 59.64 & 99.99 & 0.30 & 100.00 & 0.10 \\
			& OE   & 46.14 & 91.16 & 99.96 & 58.55 & 100.00 & 1.80 & 100.00 & 1.69 \\ 
			& ACET  & 46.79 & 91.67 & 99.76 & 62.48 & 95.66 & 41.38 & 99.99 & 36.65 \\ 
			& CCU  & 48.35 & 91.41 & 99.95 & 57.94 & 100.00 & 0.44 & 100.00 & 0.38 \\ 
			& ROWL  & 96.93 & 51.34 & 100.00 & 49.81 & 100.00 & 49.81 & 100.00 & 49.81 \\ 
			& ATOM (ours) & {\bf 30.52} & {\bf 93.90} & {\bf 94.46} & {\bf 74.96} & {\bf 44.30} & {\bf 89.68} & {\bf 95.06} & {\bf 74.25} \\
			\bottomrule
		\end{tabular}
	\end{adjustbox}
	\label{tab:main-results-wideresnet}
\end{table*}

\subsection{Effect of PGD Attack Strength}
\label{sec:ablation-attack-strength}
To see the effect of using stronger PGD attacks, we evaluate ACET (best baseline) and ATOM on $L_\infty$ attacked OOD and compositionally attacked OOD inputs with 100 iterations and 5 random restarts. Results are provided in Table~\ref{tab:ablation-attack-strength}. Under  stronger PGD attack, ATOM outperforms ACET. 

\begin{table*}[t!]
    \centering
    \footnotesize
    \caption[]{\small Evaluation on $L_\infty$ attacked OOD and compositionally attacked OOD inputs with strong PGD attack (100 iterations and 5 random restarts). We use DenseNet as network architecture for all methods. $\uparrow$ indicates larger value is better, and $\downarrow$ indicates lower value is better. All values are percentages and are averaged over six different OOD test datasets described in Appendix~\ref{sec:more-experiment-settings}. \textbf{Bold} numbers are superior results.}
	\begin{tabular}{l|l|cc|cc}
		\toprule
		\multirow{4}{0.12\linewidth}{$\mathcal{D}_{\text{in}}^{\text{test}}$} &  \multirow{4}{0.06\linewidth}{\textbf{Method}} & {\bf FPR} & {\bf AUROC} & {\bf FPR} & {\bf AUROC}  \\
	     & & $\textbf{(5\% FNR)}$ & {\bf } & $\textbf{(5\% FNR)}$ & {\bf } \\
		 & & $\downarrow$ & $\uparrow$ & $\downarrow$ & $\uparrow$ \\ \cline{3-6}
		 & &  \multicolumn{2}{c|}{\textbf{$L_\infty$ OOD }} & \multicolumn{2}{c}{\textbf{Comp. OOD}} \\  \hline 
		\multirow{2}{0.12\linewidth}{{{\bf SVHN}}}  
		& ACET  & 33.64 & 83.89 & 99.90 & 2.84 \\ 
		& ATOM (ours) & {\bf 8.33} & {\bf 97.37} & {\bf 98.56} & {\bf 35.32} \\ \hline
		\multirow{2}{0.12\linewidth}{{{\bf CIFAR-10}}}  
		& ACET  & 76.66 & 76.29 & 97.08 & 51.35 \\ 
		& ATOM (ours)  & {\bf 26.88} & {\bf 83.78} & {\bf 42.86} & {\bf 83.56} \\ \hline
		\multirow{2}{0.12\linewidth}{{{\bf CIFAR-100}}}  
		& ACET  & 87.13 & 48.77 & 99.82 & 33.67 \\ 
		& ATOM (ours) & {\bf 45.11} & {\bf 82.32} & {\bf 93.74} & {\bf 65.88} \\ 
		\bottomrule
	\end{tabular}
	\label{tab:ablation-attack-strength}
\end{table*}

\subsection{Performance of OOD Detector and Classifier on In-distribution Data}
\label{app:in-distribution-acc}

We summarize the performance of OOD detector $G(x)$ and image classifier $f(x)$ on in-distribution test data. The results in Table \ref{tab:main-results-in-distribution} show that ATOM improves the OOD detection performance while achieving in-distribution classification accuracy that is on par with a pre-trained network. 

\begin{table*}[t]
        \center
        \small
        \caption[]{\small The performance of OOD detector and classifier on in-distribution test data. We use DenseNet for all methods. We use three metrics: FNR, Prediction Accuracy and End-to-end Prediction Accuracy. We pick the threshold for the OOD detectors such that  95\% of in-distribution test data points are classified as in-distribution. Prediction Accuracy measures the accuracy of the classifier on in-distribution test data. End-to-end Prediction Accuracy measures the accuracy of the open world classification system (detector+classifier), where an example is classified correctly if and only if the detector treats it as in-distribution and the classifier predicts its label correctly. }
		\begin{tabular}{l|l|ccc}
			\toprule
	    \multirow{2}{0.14\linewidth}{$\mathcal{D}_{\text{in}}^{\text{test}}$} & \multirow{2}{0.08\linewidth}{\textbf{Method}} &\bf{FNR}   & {\bf Pred.} & {\bf End-to-end.}  \\ 
			&  &  & \bf{Acc.} & \bf{Pred. Acc.} \\ \hline
			\multirow{9}{0.14\linewidth}{{{\bf SVHN}}}  
			& MSP & 5.01 & 95.83 & 93.00 \\
			& ODIN & 5.01 & 95.83 & 92.44 \\
			& Mahalanobis & 5.01 & 95.83 & 91.50 \\
			& SOFL  & 5.01 & 96.45 & 92.81 \\
			& OE & 5.01 & 95.93 & 93.11  \\ 
			& ACET & 5.01 & 95.58 & 92.79 \\ 
			& CCU & 5.01 & 95.85 & 92.95 \\ 
			& ROWL & 0.22 & 95.23 & 95.23 \\ 	
			& ATOM (ours) & 5.01 & 96.09 & 91.95 \\ \hline
			\multirow{9}{0.14\linewidth}{{{\bf CIFAR-10}}}  
			& MSP & 5.01 & 94.39 & 91.76 \\
			& ODIN & 5.01 & 94.39 & 91.00 \\
			& Mahalanobis & 5.01 & 94.39 & 89.72 \\
			& SOFL  & 5.01 & 95.11 & 91.60 \\
			& OE & 5.01 & 94.79 & 91.86  \\ 
			& ACET & 5.01 & 91.48 & 88.61 \\ 
			& CCU & 5.01 & 94.89 & 91.88 \\ 
			& ROWL & 1.04 & 93.18 & 93.18 \\ 	
			& ATOM (ours) & 5.01 & 95.20 & 91.33 \\ \hline
			\multirow{9}{0.14\linewidth}{{\bf CIFAR-100}} 
			& MSP & 5.01 & 75.05 & 73.87 \\
			& ODIN & 5.01 & 75.05 & 73.72 \\
			& Mahalanobis & 5.01 & 75.05 & 71.12 \\
			& SOFL  & 5.01 & 74.37 & 72.62 \\
			& OE & 5.01 & 75.28 & 73.74  \\ 
			& ACET & 5.01 & 74.43 & 72.72 \\ 
			& CCU & 5.01 & 76.04 & 74.60 \\ 
			& ROWL & 0.62 & 72.51 & 72.51 \\ 	
			& ATOM (ours) & 5.01 & 75.06 & 72.72 \\ 
			\bottomrule
		\end{tabular}
	\label{tab:main-results-in-distribution}
\end{table*}

\subsection{Complete Experimental Results}
\label{sec:complete-experimental-results}

We report the performance of OOD detectors on each of the six natural OOD test datasets in Table \ref{tab:svhn-detail-results} (SVHN), Table \ref{tab:cifar10-detail-results} (CIFAR-10) and Table \ref{tab:cifar100-detail-results} (CIFAR-100).  

\begin{table*}[t]
    \caption[]{\small Comparison with competitive OOD detection methods. We use SVHN as in-distribution dataset and use DenseNet as network architecture for all methods. We evaluate the performance on all four types of OOD inputs: (1) natural OOD, (2) corruption attacked OOD, (3) $L_\infty$ attacked OOD, and (4) compositionally attacked OOD inputs. $\uparrow$ indicates larger value is better, and $\downarrow$ indicates lower value is better. All values are percentages. \textbf{Bold} numbers are superior results.}
	\begin{adjustbox}{width=\columnwidth,center}
		\begin{tabular}{l|l|cc|cc|cc|cc}
			\toprule
			\multirow{4}{0.14\linewidth}{$\mathcal{D}_{\text{out}}^{\text{test}}$} &  \multirow{4}{0.06\linewidth}{\textbf{Method}}  &\bf{FPR}  & {\bf AUROC}  & {\bf FPR} & {\bf AUROC}  & {\bf FPR} & {\bf AUROC} & {\bf FPR} & {\bf AUROC}  \\
			& & $\textbf{(5\% FNR)}$ & $\textbf{}$ & {\bf (5\% FNR)} & $\textbf{}$ & $\textbf{(5\% FNR)}$ & {\bf } & $\textbf{(5\% FNR)}$ & {\bf } \\
			& & $\downarrow$ & $\uparrow$ & $\downarrow$ & $\uparrow$ & $\downarrow$ & $\uparrow$ & $\downarrow$ & $\uparrow$ \\ \cline{3-10}
			& & \multicolumn{2}{c|}{\textbf{Natural OOD}} & \multicolumn{2}{c|}{\textbf{Corruption OOD}} & \multicolumn{2}{c|}{\textbf{$L_\infty$ OOD }} & \multicolumn{2}{c}{\textbf{Comp. OOD}} \\  \hline 
			\multirow{10}{0.14\linewidth}{{{\bf LSUN-C}}}  
			& MSP  &  33.90 &  94.01 &  99.59 &  68.76 &  99.94 &   0.82 & 100.00 &   0.11  \\
			& ODIN  & 28.40 &  93.77 &  97.37 &  59.61 &  99.90 &   0.38 & 100.00 &   0.01  \\
			& Mahalanobis & 0.20 &  99.64 &  79.20 &  80.24 &  98.01 &  10.95 &  99.99 &   0.07 \\
			& SOFL  & {\bf 0.00} & {\bf 100.00} &   {\bf 2.10} &  {\bf 99.38} &  68.89 &  54.15 &  99.98 &   0.31 \\
			& OE  & {\bf 0.00} & {\bf 100.00} &  23.06 &  96.38 &  54.03 &  77.79 &  99.91 &   0.52 \\ 
			& ACET & {\bf 0.00} & {\bf 100.00} &  20.48 &  96.75 &   0.09 &  99.97 &  99.85 &   4.76 \\ 
			& CCU & {\bf 0.00} & {\bf 100.00} &  22.26 &  96.46 &  35.10 &  80.37 &  99.92 &   0.55 \\ 
			& ROWL & 2.50 &  98.64 &  75.48 &  62.15 &  96.22 &  51.78 &  99.99 &  {\bf 49.90} \\ 
			& ATOM (ours) & {\bf 0.00} & {\bf 100.00} &  16.71 &  96.91 &   {\bf 0.00} & {\bf 100.00} &  {\bf 97.85} &  39.64 \\ \hline
			\multirow{10}{0.14\linewidth}{{{\bf LSUN-R}}}  
			& MSP  &  44.84 &  92.75 &  99.93 &  66.82 & 100.00 &   1.22 & 100.00 &   0.11  \\
			& ODIN  & 36.84 &  92.91 &  98.53 &  62.91 & 100.00 &   0.25 & 100.00 &   0.01 \\
			& Mahalanobis & 18.35 &  96.64 &  98.92 &  54.27 &  99.04 &  10.01 & 100.00 &   0.00  \\
			& SOFL  & {\bf 0.00} & {\bf 100.00} &   0.55 &  {\bf 99.75} &  49.23 &  77.50 &  99.88 &   4.77 \\
			& OE  & 0.02 & {\bf 100.00} &   9.44 &  98.34 &  41.86 &  79.82 &  99.91 &   1.54 \\ 
			& ACET & {\bf 0.00} & {\bf 100.00} &   5.36 &  98.87 &   2.60 &  99.35 &  99.96 &   7.00 \\ 
			& CCU & {\bf 0.00} & {\bf 100.00} &   9.44 &  98.46 &  12.25 &  94.36 &  98.90 &   2.78 \\ 
			& ROWL & 0.01 &  99.89 &  22.57 &  88.61 &  39.54 &  80.12 &  99.68 &  50.05 \\ 
			& ATOM (ours) & {\bf 0.00} & {\bf 100.00} &   {\bf 0.13} &  99.67 &   {\bf 0.09} &  {\bf 99.97} &  {\bf 96.08} &  {\bf 63.20} \\ \hline
			\multirow{10}{0.14\linewidth}{{{\bf iSUN}}}  
			& MSP  &  40.78 &  93.57 &  99.85 &  68.88 &  99.99 &   1.37 & 100.00 &   0.14  \\
			& ODIN  & 31.98 &  93.97 &  97.70 &  64.77 &  99.99 &   0.29 & 100.00 &   0.01  \\
			& Mahalanobis & 19.73 &  96.25 &  98.35 &  56.95 &  98.73 &  10.21 & 100.00 &   0.01 \\
			& SOFL  & {\bf 0.00} & {\bf 100.00} &   0.75 &  {\bf 99.70} &  56.78 &  69.82 &  99.87 &   4.39 \\
			& OE  & 0.01 & {\bf 100.00} &   9.50 &  98.36 &  45.45 &  76.08 &  99.76 &   1.72 \\ 
			& ACET & 0.01 & {\bf 100.00} &   6.17 &  98.84 &   4.90 &  98.52 &  99.92 &   6.96 \\ 
			& CCU & 0.02 & {\bf 100.00} &   9.94 &  98.32 &  17.36 &  90.54 &  99.13 &   2.56 \\ 
			& ROWL & 0.04 &  99.87 &  24.43 &  87.68 &  44.62 &  77.58 &  99.29 &  50.24 \\ 
			& ATOM (ours) & {\bf 0.00} & {\bf 100.00} &   {\bf 0.24} &  99.59 &   {\bf 0.25} &  {\bf 99.94} &  {\bf 95.20} &  {\bf 61.70} \\ \hline
			\multirow{10}{0.14\linewidth}{{{\bf Textures}}}  
			& MSP  &  41.91 &  92.08 &  99.33 &  61.77 &  99.45 &   2.84 & 100.00 &   0.44  \\
			& ODIN  &  40.48 &  89.91 &  97.36 &  52.69 &  99.29 &   2.14 & 100.00 &   0.27 \\
			& Mahalanobis & 24.34 &  94.15 &  85.94 &  62.00 &  88.83 &  17.73 &  99.38 &   1.27 \\
			& SOFL  & {\bf 0.34} &  {\bf 99.91} &  11.83 &  97.47 &  89.36 &  23.54 &  99.20 &   2.42 \\
			& OE  & 2.73 &  99.50 &  43.05 &  92.51 &  84.36 &  26.88 &  98.32 &   2.72 \\ 
			& ACET & 2.55 &  99.54 &  32.55 &  94.67 &  64.57 &  58.46 &  99.40 &   5.73 \\ 
			& CCU & 2.22 &  99.57 &  41.86 &  92.46 &  78.09 &  31.94 &  98.58 &   2.31 \\ 
			& ROWL & 7.85 &  95.96 &  71.17 &  64.30 &  89.73 &  55.02 &  99.77 &  {\bf 50.01} \\ 
			& ATOM (ours) & 0.41 &  99.83 &  {\bf 11.31} &  {\bf 97.66} &  {\bf 28.81} &  {\bf 90.97} &  {\bf 91.61} &  49.35 \\ \hline
			\multirow{10}{0.14\linewidth}{{{\bf Places365}}}  
			& MSP  &  36.14 &  94.37 &  99.60 &  71.71 &  99.99 &   1.03 & 100.00 &   0.14  \\
			& ODIN  & 26.84 &  95.01 &  96.25 &  68.40 &  99.99 &   0.32 & 100.00 &   0.01  \\
			& Mahalanobis & 34.83 &  93.95 &  98.14 &  55.70 &  99.64 &   1.97 & 100.00 &   0.00 \\
			& SOFL  & {\bf 0.00} &  {\bf 99.99} &   3.46 &  {\bf 99.09} &  94.58 &  26.96 &  99.96 &   1.98 \\
			& OE  & 0.37 &  99.91 &  27.78 &  95.86 &  94.97 &  26.54 &  99.99 &   0.47 \\ 
			& ACET & 0.15 &  99.96 &  19.28 &  97.06 &  49.09 &  83.63 &  99.99 &   2.65 \\ 
			& CCU & 0.27 &  99.95 &  29.12 &  95.65 &  83.26 &  39.88 & 100.00 &   0.59 \\ 
			& ROWL & 0.97 &  99.41 &  67.26 &  66.26 &  96.67 &  51.56 & 100.00 &  {\bf 49.89} \\ 
			& ATOM (ours) & {\bf 0.00} &  {\bf 99.99} &   {\bf 2.31} &  98.59 &   {\bf 5.60} &  {\bf 98.71} &  {\bf 98.74} &  38.95 \\ \hline
			\multirow{10}{0.14\linewidth}{{{\bf CIFAR-10}}}  
			& MSP  &  35.45 &  94.66 &  99.79 &  72.97 & 100.00 &   1.07 & 100.00 &   0.19  \\
			& ODIN  &  24.15 &  95.54 &  95.45 &  70.89 & 100.00 &   0.28 & 100.00 &   0.01 \\
			& Mahalanobis & 39.35 &  92.80 &  98.28 &  55.52 &  99.70 &   2.47 & 100.00 &   0.01 \\
			& SOFL  & 0.01 &  {\bf 99.99} &   3.99 &  {\bf 99.03} &  93.00 &  28.70 &  99.95 &   2.60 \\
			& OE  & 0.48 &  99.89 &  27.79 &  95.92 &  95.49 &  26.02 &  99.98 &   0.63 \\ 
			& ACET & 0.21 &  99.94 &  18.32 &  97.18 &  54.71 &  80.56 &  99.98 &   3.69 \\ 
			& CCU & 0.47 &  99.91 &  32.40 &  95.34 &  86.98 &  36.35 &  99.98 &   0.79 \\ 
			& ROWL & 0.84 &  99.47 &  69.28 &  65.25 &  96.68 &  51.55 & 100.00 &  {\bf 49.89} \\ 
			& ATOM (ours) & {\bf 0.00} &  99.98 &   {\bf 2.14} &  98.72 &   {\bf 7.39} &  {\bf 98.42} &  {\bf 98.51} &  44.28 \\ 
			\bottomrule
		\end{tabular}
	\end{adjustbox}
	\label{tab:svhn-detail-results}
\end{table*}

\begin{table*}[t]
    \caption[]{\small Comparison with competitive OOD detection methods. We use CIFAR-10 as in-distribution dataset and use DenseNet as network architecture for all methods. We evaluate the performance on all four types of OOD inputs: (1) natural OOD, (2) corruption attacked OOD, (3) $L_\infty$ attacked OOD, and (4) compositionally attacked OOD inputs. $\uparrow$ indicates larger value is better, and $\downarrow$ indicates lower value is better. All values are percentages. \textbf{Bold} numbers are superior results.}
	\begin{adjustbox}{width=\columnwidth,center}
		\begin{tabular}{l|l|cc|cc|cc|cc}
			\toprule
			\multirow{4}{0.14\linewidth}{$\mathcal{D}_{\text{out}}^{\text{test}}$} &  \multirow{4}{0.06\linewidth}{\textbf{Method}}  &\bf{FPR}  & {\bf AUROC}  & {\bf FPR} & {\bf AUROC}  & {\bf FPR} & {\bf AUROC} & {\bf FPR} & {\bf AUROC}  \\
			& & $\textbf{(5\% FNR)}$ & $\textbf{}$ & {\bf (5\% FNR)} & $\textbf{}$ & $\textbf{(5\% FNR)}$ & {\bf } & $\textbf{(5\% FNR)}$ & {\bf } \\
			& & $\downarrow$ & $\uparrow$ & $\downarrow$ & $\uparrow$ & $\downarrow$ & $\uparrow$ & $\downarrow$ & $\uparrow$ \\ \cline{3-10}
			& & \multicolumn{2}{c|}{\textbf{Natural OOD}} & \multicolumn{2}{c|}{\textbf{Corruption OOD}} & \multicolumn{2}{c|}{\textbf{$L_\infty$ OOD }} & \multicolumn{2}{c}{\textbf{Comp. OOD}} \\  \hline 
			\multirow{10}{0.14\linewidth}{{{\bf LSUN-C}}}  
			& MSP  &  27.34 &  96.30 & 100.00 &  71.57 & 100.00 &  13.75 & 100.00 &  13.68  \\
			& ODIN  &  1.89 &  99.50 &  98.94 &  71.86 & 100.00 &   0.06 & 100.00 &   0.00 \\
			& Mahalanobis & 14.82 &  94.63 &  93.31 &  46.23 &  98.07 &   8.04 &  99.99 &   1.36 \\
			& SOFL  & 0.39 &  99.40 &  56.07 &  93.02 & 100.00 &   2.36 & 100.00 &   1.96 \\
			& OE  & 0.97 &  99.52 &  40.85 &  93.99 &  99.99 &   0.36 & 100.00 &   0.19 \\ 
			& ACET & 1.76 &  99.42 &  31.46 &  95.40 &  45.54 &  90.57 &  88.09 &  70.84 \\ 
			& CCU & 0.62 &  {\bf 99.65} &  33.44 &  94.56 &  99.97 &   0.31 & 100.00 &   0.05 \\ 
			& ROWL & 10.92 &  94.02 &  91.67 &  53.64 & 100.00 &  49.48 & 100.00 &  49.48\\ 
			& ATOM (ours) & {\bf 0.25} &  99.53 &  {\bf 16.38} &  {\bf 96.75} &   {\bf 0.38} &  {\bf 99.49} &  {\bf 17.27} &  {\bf 96.49} \\ \hline
			\multirow{10}{0.14\linewidth}{{{\bf LSUN-R}}}  
			& MSP  &  43.89 &  93.93 & 100.00 &  64.26 & 100.00 &  13.74 & 100.00 &  13.66  \\
			& ODIN  &  3.29 &  99.20 &  98.96 &  63.83 & 100.00 &   0.14 & 100.00 &   0.00 \\
			& Mahalanobis & 7.43 &  97.88 &  98.71 &  39.24 &  94.59 &  16.62 & 100.00 &   0.45 \\
			& SOFL  & 1.67 &  99.29 &  56.41 &  90.62 & 100.00 &   0.55 & 100.00 &   0.43 \\
			& OE  & 0.99 &  {\bf 99.43} &  51.55 &  92.25 &  99.99 &   0.15 & 100.00 &   0.02 \\ 
			& ACET & 3.87 &  99.10 &  77.90 &  87.37 &  70.50 &  {\bf 84.01} &  99.87 &  48.53 \\ 
			& CCU & 1.53 &  99.28 &  57.03 &  90.53 & 100.00 &   0.04 & 100.00 &   0.10 \\ 
			& ROWL & 36.03 &  81.47 &  98.72 &  50.12 & 100.00 &  49.48 & 100.00 &  49.48 \\ 
			& ATOM (ours) & {\bf 0.41} &  99.33 &  {\bf 16.77} &  {\bf 96.63} &  {\bf 52.76} &  71.84 &  {\bf 50.33} &  {\bf 75.18} \\ \hline
			\multirow{10}{0.14\linewidth}{{{\bf iSUN}}}  
			& MSP  &  46.18 &  93.58 & 100.00 &  62.76 &  99.99 &  13.95 & 100.00 &  13.67 \\
			& ODIN  & 4.45 &  99.00 &  98.90 &  62.14 & 100.00 &   0.33 & 100.00 &   0.01 \\
			& Mahalanobis & 8.58 &  98.00 &  98.10 &  42.96 &  89.64 &  22.78 & 100.00 &   0.74 \\
			& SOFL  & 2.24 &  99.22 &  53.99 &  90.97 & 100.00 &   0.51 & 100.00 &   0.50 \\
			& OE  & 1.14 &  {\bf 99.40} &  48.25 &  92.46 &  99.97 &   0.13 & 100.00 &   0.02 \\ 
			& ACET & 6.16 &  98.59 &  75.36 &  87.00 &  78.59 &  {\bf 79.84} &  99.63 &  47.11 \\ 
			& CCU & 1.74 &  99.27 &  52.44 &  91.10 & 100.00 &   0.05 & 100.00 &   0.08 \\ 
			& ROWL & 35.44 &  81.76 &  97.02 &  50.97 & 100.00 &  49.48 & 100.00 &  49.48 \\ 
			& ATOM (ours) & {\bf 0.66} &  99.34 &  {\bf 15.10} &  {\bf 96.79} &  {\bf 52.48} &  68.44 &  {\bf 51.94} &  {\bf 72.08} \\ \hline
			\multirow{10}{0.14\linewidth}{{{\bf Textures}}}  
			& MSP  &  64.66 &  87.64 & 100.00 &  51.85 & 100.00 &  14.14 & 100.00 &  13.72 \\
			& ODIN  & 52.45 &  84.81 &  99.56 &  38.20 &  99.95 &   0.56 & 100.00 &   0.07 \\
			& Mahalanobis & 25.39 &  92.20 &  71.42 &  61.60 &  88.85 &  17.24 &  99.27 &   3.60 \\
			& SOFL  & 3.78 &  99.04 &  56.81 &  89.38 &  99.88 &   2.06 &  99.98 &   1.36 \\
			& OE  & 6.24 &  98.43 &  53.32 &  88.83 &  99.68 &   1.40 &  99.95 &   0.68 \\ 
			& ACET & 11.74 &  97.96 &  54.41 &  90.52 &  64.49 &  77.66 &  94.26 &  55.24 \\ 
			& CCU & 5.83 &  98.45 &  54.61 &  86.33 &  99.49 &   1.68 &  99.84 &   0.99 \\ 
			& ROWL & 19.33 &  89.82 &  82.87 &  58.04 &  99.89 &  49.53 &  99.98 &  49.49 \\ 
			& ATOM (ours) & {\bf 1.81} &  {\bf 99.47} &  {\bf 20.05} &  {\bf 95.92} &  {\bf 10.05} &  {\bf 96.47} &  {\bf 29.01} &  {\bf 91.40} \\ \hline
			\multirow{10}{0.14\linewidth}{{{\bf Places365}}}  
			& MSP  &  62.03 &  88.29 & 100.00 &  57.68 & 100.00 &  13.66 & 100.00 &  13.66 \\
			& ODIN  & 43.84 &  90.45 &  99.89 &  52.81 & 100.00 &   0.01 & 100.00 &   0.00 \\
			& Mahalanobis & 85.77 &  65.76 &  99.47 &  22.75 &  99.79 &   1.93 & 100.00 &   0.66 \\
			& SOFL  & 7.73 &  97.81 &  61.66 &  88.07 & 100.00 &   0.47 & 100.00 &   0.27 \\
			& OE  & 11.08 &  97.00 &  67.91 &  87.39 & 100.00 &   0.03 & 100.00 &   0.01 \\ 
			& ACET & 18.63 &  95.97 &  79.42 &  85.00 &  93.09 &  66.83 &  99.61 &  46.19 \\ 
			& CCU & 8.49 &  97.63 &  67.68 &  85.75 & 100.00 &   0.03 & 100.00 &   0.03 \\ 
			& ROWL & 43.76 &  77.60 &  97.67 &  50.64 & 100.00 &  49.48 & 100.00 &  49.48 \\ 
			& ATOM (ours) & {\bf 6.30} &  {\bf 97.92} &  {\bf 31.44} &  {\bf 93.41} &  {\bf 6.96} & {\bf 97.78} & {\bf 32.99} & {\bf 92.93} \\ \hline
			\multirow{10}{0.14\linewidth}{{{\bf SVHN}}}  
			& MSP  & 59.15 &  90.99 & 100.00 &  41.97 & 100.00 &  13.67 & 100.00 &  13.66 \\
			& ODIN  & 23.96 &  95.00 & 100.00 &  19.78 & 100.00 &   0.00 & 100.00 &   0.00 \\
			& Mahalanobis & 19.73 &  93.33 &  90.52 &  50.89 &  99.48 &   8.18 & 100.00 &   2.67 \\
			& SOFL  & 0.85 &  99.47 &  87.50 &  79.85 & 100.00 &   0.13 & 100.00 &   0.07 \\
			& OE  & 1.55 &  99.16 &  75.62 &  89.02 & 100.00 &   0.01 & 100.00 &   0.02 \\ 
			& ACET &  31.50 &  94.99 &  83.04 &  85.25 &  94.49 &  69.42 &  99.81 &  54.36 \\ 
			& CCU & 2.14 &  99.25 &  75.34 &  88.01 & 100.00 &   0.00 & 100.00 &   0.00 \\ 
			& ROWL & 4.73 &  97.12 &  98.09 &  50.43 & 100.00 &  49.48 & 100.00 &  49.48 \\ 
			& ATOM (ours) & {\bf 0.69} & {\bf 99.63} &  {\bf 51.84} &  {\bf 92.22} & {\bf 0.69} & {\bf 99.63} & {\bf 51.81} & {\bf 92.18} \\  
			\bottomrule
		\end{tabular}
	\end{adjustbox}
	\label{tab:cifar10-detail-results}
\end{table*}

\begin{table*}[t]
    \caption[]{\small Comparison with competitive OOD detection methods. We use CIFAR-100 as in-distribution dataset and use DenseNet as network architecture for all methods. We evaluate the performance on all four types of OOD inputs: (1) natural OOD, (2) corruption attacked OOD, (3) $L_\infty$ attacked OOD, and (4) compositionally attacked OOD inputs. $\uparrow$ indicates larger value is better, and $\downarrow$ indicates lower value is better. All values are percentages. \textbf{Bold} numbers are superior results.}
	\begin{adjustbox}{width=\columnwidth,center}
		\begin{tabular}{l|l|cc|cc|cc|cc}
			\toprule
			\multirow{4}{0.14\linewidth}{$\mathcal{D}_{\text{out}}^{\text{test}}$} &  \multirow{4}{0.12\linewidth}{\textbf{Method}}  &\bf{FPR}  & {\bf AUROC}  & {\bf FPR} & {\bf AUROC}  & {\bf FPR} & {\bf AUROC} & {\bf FPR} & {\bf AUROC}  \\
			& & $\textbf{(5\% FNR)}$ & $\textbf{}$ & {\bf (5\% FNR)} & $\textbf{}$ & $\textbf{(5\% FNR)}$ & {\bf } & $\textbf{(5\% FNR)}$ & {\bf } \\
			& & $\downarrow$ & $\uparrow$ & $\downarrow$ & $\uparrow$ & $\downarrow$ & $\uparrow$ & $\downarrow$ & $\uparrow$ \\ \cline{3-10}
			& & \multicolumn{2}{c|}{\textbf{Natural OOD}} & \multicolumn{2}{c|}{\textbf{Corruption OOD}} & \multicolumn{2}{c|}{\textbf{$L_\infty$ OOD }} & \multicolumn{2}{c}{\textbf{Comp. OOD}} \\  \hline 
			\multirow{10}{0.14\linewidth}{{{\bf LSUN-C}}}  
			& MSP  &  62.03 &  84.78 & 100.00 &  32.47 & 100.00 &   2.52 & 100.00 &   2.31  \\
			& ODIN  & 15.47 &  97.34 & 100.00 &  42.25 & 100.00 &   0.20 & 100.00 &   0.01 \\
			& Mahalanobis & 47.44 &  93.47 &  98.80 &  58.23 &  98.94 &  15.07 &  99.97 &   3.01 \\
			& SOFL  & 17.38 &  96.66 & 100.00 &  51.59 & 100.00 &   1.10 & 100.00 &   0.58 \\
			& OE  & 14.75 &  97.33 &  99.91 &  54.29 & 100.00 &   1.61 & 100.00 &   0.64 \\ 
			& ACET & 14.60 &  97.41 &  98.65 &  67.81 &  23.07 &  95.35 &  98.88 &  56.02 \\ 
			& CCU & {\bf 12.03} &  {\bf 97.84} &  99.60 &  61.24 & 100.00 &   0.86 & 100.00 &   0.50 \\ 
			& ROWL & 88.67 &  55.35 & 100.00 &  49.69 & 100.00 &  49.69 & 100.00 &  49.69 \\ 
			& ATOM (ours) &  21.40 &  96.31 &  {\bf 79.72} &  {\bf 87.67} &  {\bf 21.50} &  {\bf 96.26} &  {\bf 79.98} &  {\bf 87.11} \\ \hline
			\multirow{10}{0.14\linewidth}{{{\bf LSUN-R}}}  
			& MSP  &  77.48 &  76.40 & 100.00 &  32.21 & 100.00 &   1.94 & 100.00 &   1.78  \\
			& ODIN  & 34.81 &  93.37 & 100.00 &  45.41 & 100.00 &   0.22 & 100.00 &   0.00  \\
			& Mahalanobis & {\bf 14.87} &  97.06 &  99.89 &  31.14 &  94.81 &  24.18 & 100.00 &   0.37 \\
			& SOFL  &  50.27 &  90.28 &  99.88 &  50.12 & 100.00 &   0.11 & 100.00 &   0.20 \\
			& OE  & 56.25 &  84.35 &  99.96 &  41.17 & 100.00 &   0.70 & 100.00 &   0.52 \\ 
			& ACET & 56.35 &  88.17 &  99.50 &  51.85 &  98.67 &  18.64 &  99.74 &  22.45 \\ 
			& CCU & 38.44 &  91.83 &  99.94 &  50.62 & 100.00 &   0.59 & 100.00 &   0.47 \\ 
			& ROWL & 88.25 &  55.57 & 100.00 &  49.69 & 100.00 &  49.69 & 100.00 &  49.69 \\ 
			& ATOM (ours) & 17.93 &  96.94 &  {\bf 95.72} & {\bf 71.51} & {\bf 31.39} & {\bf 87.38} &  {\bf 95.87} &  {\bf 66.15} \\ \hline
			\multirow{10}{0.14\linewidth}{{{\bf iSUN}}}  
			& MSP  &  78.87 &  75.69 & 100.00 &  31.83 & 100.00 &   2.08 & 100.00 &   1.82 \\
			& ODIN  & 38.92 &  92.15 & 100.00 &  43.43 & 100.00 &   0.31 & 100.00 &   0.00 \\
			& Mahalanobis & {\bf 16.46} &  {\bf 96.75} &  99.76 &  33.38 &  89.71 &  28.16 & 100.00 &   0.38 \\
			& SOFL  & 53.51 &  89.27 &  99.96 &  48.75 & 100.00 &   0.19 & 100.00 &   0.21 \\
			& OE  & 61.59 &  81.51 &  99.99 &  39.94 & 100.00 &   0.81 & 100.00 &   0.54 \\ 
			& ACET & 60.49 &  86.80 &  99.70 &  49.61 &  98.54 &  19.92 &  99.94 &  23.51 \\ 
			& CCU & 40.97 &  90.89 &  99.98 &  49.21 & 100.00 &   0.79 & 100.00 &   0.44 \\ 
			& ROWL & 90.42 &  54.48 & 100.00 &  49.69 & 100.00 &  49.69 & 100.00 &  49.69 \\ 
			& ATOM (ours) & 20.09 &  96.62 &  {\bf 97.05} &  {\bf 68.31} &  {\bf 35.70} &  {\bf 85.87} &  {\bf 97.13} & {\bf 63.16} \\ \hline
			\multirow{10}{0.14\linewidth}{{{\bf Textures}}}  
			& MSP  &  85.57 &  70.08 & 100.00 &  25.93 & 100.00 &   2.63 & 100.00 &   2.28 \\
			& ODIN  & 83.58 &  70.71 & 100.00 &  27.30 & 100.00 &   0.11 & 100.00 &   0.01 \\
			& Mahalanobis & {\bf 34.59} & 89.82 &  {\bf 78.35} &  62.36 &  89.27 &  20.33 &  99.18 &   5.43 \\
			& SOFL  & 57.00 &  87.35 &  99.73 &  44.11 &  99.98 &   0.59 & 100.00 &   0.34 \\
			& OE  & 59.86 &  86.17 &  99.88 &  42.95 &  99.98 &   1.45 & 100.00 &   0.73 \\ 
			& ACET & 62.02 &  86.26 &  99.70 &  49.02 &  83.17 &  63.61 &  99.86 &  40.58 \\ 
			& CCU & 60.80 &  86.34 &  99.88 &  44.90 & 100.00 &   1.39 & 100.00 &   0.62 \\ 
			& ROWL & 97.00 &  51.19 & 100.00 &  49.69 & 100.00 &  49.69 & 100.00 &  49.69 \\ 
			& ATOM (ours) & 40.11 &  {\bf 90.28} &  90.21 &  {\bf 69.99} &  {\bf 47.27} &  {\bf 82.52} &  {\bf 91.45} &  {\bf 65.91} \\ \hline
			\multirow{10}{0.14\linewidth}{{{\bf Places365}}}  
			& MSP  &  83.65 &  73.71 & 100.00 &  32.13 & 100.00 &   1.83 & 100.00 &   1.91 \\
			& ODIN  & 79.19 &  76.48 & 100.00 &  39.16 & 100.00 &   0.00 & 100.00 &   0.00 \\
			& Mahalanobis & 94.64 &  59.52 &  99.80 &  19.43 &  99.96 &   1.13 & 100.00 &   0.50 \\
			& SOFL  & 60.49 &  {\bf 87.57} & 100.00 &  40.25 & 100.00 &   0.06 & 100.00 &   0.15 \\
			& OE  & 58.37 &  86.39 & 100.00 &  50.83 & 100.00 &   0.57 & 100.00 &   0.54 \\ 
			& ACET & 56.26 &  86.75 &  99.66 &  57.36 &  83.60 &  72.35 &  99.83 &  44.27 \\ 
			& CCU & {\bf 55.23} &  87.21 & 100.00 &  44.03 & 100.00 &   0.46 & 100.00 &   0.43 \\ 
			& ROWL & 96.86 &  51.26 & 100.00 &  49.69 & 100.00 &  49.69 & 100.00 &  49.69 \\ 
			& ATOM (ours) & 56.52 &  84.53 &  {\bf 96.65} &  {\bf 63.43} &  {\bf 58.68} &  {\bf 82.50} &  {\bf 96.68} &  {\bf 62.36} \\ \hline
			\multirow{10}{0.14\linewidth}{{{\bf SVHN}}}  
			& MSP  & 80.71 &  76.00 & 100.00 &  25.69 & 100.00 &   2.47 & 100.00 &   2.29 \\
			& ODIN  & 88.66 &  71.65 & 100.00 &  24.17 & 100.00 &   0.00 & 100.00 &   0.00 \\
			& Mahalanobis & 47.78 &  90.54 &  {\bf 98.93} &  53.25 &  99.96 &   6.34 & 100.00 &   2.78 \\
			& SOFL  & {\bf 21.50} &  {\bf 96.15} & 100.00 &  36.59 & 100.00 &   0.07 & 100.00 &   0.16 \\
			& OE  & 44.47 &  92.58 & 100.00 &  40.86 & 100.00 &   0.53 & 100.00 &   0.57 \\ 
			& ACET & 55.86 &  90.36 & 100.00 &  49.51 &  70.55 &  86.86 &  99.99 &  44.93 \\ 
			& CCU & 50.79 &  91.59 & 100.00 &  40.01 & 100.00 &   0.41 & 100.00 &   0.40 \\ 
			& ROWL & 98.88 &  50.25 & 100.00 &  49.69 & 100.00 &  49.69 & 100.00 &  49.69 \\ 
			& ATOM (ours) & 37.78 &  93.68 &  99.54 &  {\bf 70.86} &  {\bf 37.78} &  {\bf 93.68} &  {\bf 99.55} & {\bf 70.23} \\ 
			\bottomrule
		\end{tabular}
	\end{adjustbox}
	\label{tab:cifar100-detail-results}
\end{table*}

\end{document}